\documentclass[acmsmall]{acmart} 

\usepackage[T1]{fontenc}
\usepackage{amsmath}
\usepackage{bussproofs}
\usepackage{xcolor}
\usepackage{xspace}
\usepackage{xparse}
\usepackage{enumitem}
\usepackage{subcaption}
\usepackage{hyperref}
\usepackage{amsthm}
\usepackage{thmtools}
\usepackage{url}
\usepackage{multirow}
\usepackage{mathtools}    
\usepackage{algorithm}
\usepackage{algpseudocode}

\usepackage{wrapfig}
\usepackage[font=small]{caption}
\usepackage{microtype}
\usepackage{tikz}
\definecolor{softgreen}{RGB}{0,130,0} 

\usepackage{listings}

\definecolor{dafnypurple}{RGB}{136,0,136}
\definecolor{dafnykeyword}{RGB}{153,102,0}
\definecolor{dafnycomment}{RGB}{0,128,0}
\definecolor{dafnynumber}{RGB}{0,102,204}
\definecolor{dafnygreen}{RGB}{0,150,0}
\definecolor{posHighlight}{RGB}{0,110,0}
\definecolor{highlightred}{RGB}{255,0,0}
\definecolor{bordergray}{RGB}{180,180,180}

\lstdefinelanguage{Dafny}{
  morekeywords={
    method,var,while,if,else,ensures,requires,invariant,returns,assert,
    return,
 true,false,forall,exists,match,case,ghost,function,class,new,type,trait
  },
  sensitive=true,
  morecomment=[l]{//},
  morecomment=[s]{/*}{*/},
  morestring=[b]",
}

\EnableBpAbbreviations
\usepackage{graphicx}

\newcommand{\te}[1]{\text{#1}}
\newcommand{\isType}[2]{\te{isType}(#1)(#2)}
\newcommand{\Tool}{SEVerA\xspace}
\definecolor{funcpurple}{RGB}{75,0,130}
\newcommand{\funcIt}[1]{\textcolor{funcpurple}{#1}}

\newcommand{\com}[1]{\textcolor{red}{#1}}

\newcommand{\secRef}{\S}
\newcommand{\alp}{\Sigma}
\newcommand{\alpN}[1]{\Sigma^{#1}}
\newcommand{\llm}[1]{\mathcal{L}_{#1}}
\newcommand{\llmP}[1]{\mathcal{L}^{p}_{#1}}
\newcommand{\prob}[1]{\mathcal{P}({#1})}
\newcommand{\decode}[1]{\mathcal{D}_{#1}}

\newcommand{\vect}[1]{\pmb{#1}}

\newcommand{\dg}{G_D}
\newcommand{\axiom}{\mathcal{A}}
\newcommand{\set}[1]{\{#1\}}
\newcommand{\msf}[1]{\mathsf{#1}}
\newcommand{\mrm}[1]{\mathrm{#1}}

\newcommand{\mcal}[1]{\mathcal{#1}}
\newcommand{\typs}[1]{\tau_{#1}}
\newcommand{\bin}[1]{\mcal{B}_{#1}}
\newcommand{\pre}[1]{{#1}^{\Phi}}
\newcommand{\post}[1]{{#1}^{\Psi}}
\newcommand{\fset}[1]{\mcal{F}_{#1}}
\newcommand{\param}[1]{\theta_{#1}}
\newcommand{\paramset}{\Theta}

\newcommand{\eos}{\langle eos \rangle}
\newcommand{\concat}{\cdot}
\newcommand{\gllm}{FGGM\xspace}
\newcommand{\lang}[1]{L(#1)}
\newcommand{\data}{\mrm{D}}
\newcommand{\loss}{\msf{L}}
\newcommand{\fspace}[2]{\msf{S}(#1, #2)}
\newcommand{\opt}[1]{{#1}^{*}}
\newcommand{\inspec}[1]{\Phi_{#1}}
\newcommand{\outspec}[1]{\Psi_{#1}}
\newcommand{\linspec}[1]{\Phi_{l}}
\newcommand{\planner}{\llm{p}}
\newcommand{\loutspec}[1]{\Psi_{l}}
\newcommand{\prompt}{f_{p}}

\newcommand{\fallback}{f_{d}}
\newcommand{\context}{\msf{C}}
\newcommand{\axiomEncode}[1]{\varphi_{#1}}

\NewDocumentCommand{\ver}{m m O{\context{}}}{
\mathcal{V}_{#3}[#1, #2]
}

\NewDocumentCommand{\ter}{O{\context{}}}{
\mathcal{T}_{#1}
}

\NewDocumentCommand{\parser}{O{\context{}}}{
\mathcal{P}_{#1}
}
\newcommand{\eval}{\funcIt{check_{\axiom, \linspec{}, \loutspec{}}}}
\newcommand{\evalb}{\funcIt{check_{\axiom, \inspec{}, \outspec{}}}}
\newcommand{\evalR}[1]{\funcIt{check_{\axiom, \inspec{},#1}}}
\newcommand{\fggm}{\msf{G}}
\newcommand{\fullParamSet}{\set{\paramset{}}}
\newcommand{\program}{\msf{P}_{\fullParamSet{}}}
\newcommand{\programSub}[1]{\msf{P}_{\{(\theta_1, \dots, \theta_k)/\Theta\}}}
\newcommand{\programOptSub}{\msf{P}_{\{(\opt{\theta_1}, \dots, \opt{\theta_k})/\Theta\}}}
\newcommand{\fggmSet}{\mathcal{G}}
\newcommand{\conformLoss}{\loss_{\linspec{}, \loutspec{}}}
\newcommand{\conformLossParam}[1]{\loss{}^{#1}_{\linspec{}, \loutspec{}}}
\usepackage{tcolorbox}
\usepackage{colortbl}

\definecolor{lightgrayrow}{RGB}{235,235,235}
\definecolor{violred}{RGB}{200,30,30}
\definecolor{gaingreen}{RGB}{0,130,0}
\newcommand{\redn}[1]{\textcolor{violred}{#1}}
\newcommand{\greenn}[1]{\textcolor{gaingreen}{#1}}
\newcommand{\dt}[1]{\makebox[0pt][l]{\scriptsize #1}}
\definecolor{lightgraybg}{RGB}{245,245,245}

\definecolor{bracketblue}{RGB}{0,92,170}
\definecolor{bracketred}{RGB}{180,30,30}
\definecolor{kwcolor}{RGB}{0, 92, 170}

\newcommand{\kw}[1]{\textcolor{kwcolor}{\textbf{#1}}}

\newcommand{\typeSig}{\tau}
\newcommand{\typeSet}{\msf{T}}
\newcommand*\circled[1]{\tikz[baseline=(char.base)]{
            \node[shape=circle,fill,inner sep=0.5pt] (char) {\textcolor{white}{#1}};}}

\newcommand{\typeit}[1]{\textcolor{blue}{#1}}
\newcommand{\reward}{\mathcal{R}}

\DeclareMathOperator*{\argmin}{arg\,min}

\newenvironment{proofSketch}
{\begin{proof}[Proof Sketch]}
{\end{proof}}

\newcommand{\insertfigure}[4][0.97\linewidth]{%
    \begin{figure}[t]
        \centering
        \includegraphics[width=#1]{#2}
        \caption{#3}
        \label{#4}
    \end{figure}
}

\definecolor{mygray}{RGB}{195,195,195}

\newcommand\todoFrame[2]{\vspace{.3cm}\noindent\tikz{
\node (contentnode) [draw, color = #1!25, fill=#1!15, text=black, rectangle, outer sep = 0, rounded corners = 1mm, minimum width=\linewidth-1, text width=\linewidth, align=justify, below right] at (0,0) {\noindent #2};
\draw[fill opacity = 1, color=#1, fill=#1] (0,0) rectangle ([xshift=5]contentnode.south west);}
\par}

\newcommand\calloutbox[2]{%
    \par\todoFrame{#1}{%
        \noindent\hspace*{.3cm}%
        \begin{minipage}{\dimexpr\linewidth-.3cm\relax}%
        #2%
        \end{minipage}%
    }\vspace{5pt}
}
\title{\Tool: Verified Self-Evolving Agents}

\author{Debangshu Banerjee}
\authornote{These authors contributed equally to this work.}
\email{db21@illinois.edu}
\affiliation{%
  \institution{University of Illinois Urbana-Champaign}
  \country{USA}
}

\author{Changming Xu}
\authornotemark[1]
\email{cx23@illinois.edu}
\affiliation{%
  \institution{University of Illinois Urbana-Champaign}
  \country{USA}
}

\author{Eugene Ie}
\email{eugeneie@google.com}
\affiliation{%
  \institution{Google}
  \country{USA}
}

\author{Ming Zhang}
\email{mingzhang@google.com}
\affiliation{%
  \institution{Google}
  \country{USA}
}

\author{Daiyi Peng}
\email{daiyip@google.com}
\affiliation{%
  \institution{Google}
  \country{USA}
}

\author{Chu-Cheng Lin}
\email{kitsing@google.com}
\affiliation{%
  \institution{Google}
  \country{USA}
}

\author{Gagandeep Singh}
\email{ggnds@illinois.edu}
\affiliation{%
  \institution{University of Illinois Urbana-Champaign}
  \country{USA}
}

\ccsdesc[500]{LLM Agents~Automated Verification}

\keywords{LLM Agents, Automated Verification, Deductive Program Synthesis.}

\begin{document}

\begin{abstract}
Recent advances have demonstrated the effectiveness of self-evolving LLM agents on tasks such as program repair and scientific discovery. In this paradigm, a planner LLM synthesizes an agent program that invokes parametric models, including probabilistic generative models such as LLMs, smaller neural networks, and external tools such as SMT solvers. These components are then tuned per task to improve performance. However, unlike traditional constraint-guided program synthesis, existing self-evolving agent frameworks provide no formal guarantees of safety or correctness. Because such synthesized programs are often executed autonomously on unseen inputs, the lack of formal guarantees raises serious reliability and security concerns.
To address this gap, we formulate agentic code generation as a constrained learning problem that combines hard formal specifications with soft objectives capturing task utility. We introduce Formally Guarded Generative Models (\gllm), which allow the planner LLM to specify a formal output contract for each generative-model call using first-order logic. Each \gllm call automatically wraps the underlying parametric generative model in a rejection sampler with a verified fallback, treating model outputs as samples from a proposal distribution. As a result, every returned output satisfies the specified contract for any input and any parameter setting of the underlying model. Building on \gllm, we present \Tool (\emph{S}elf-\emph{E}volving \textit{Ver}ified \emph{A}gents), a three-stage framework for solving constrained learning problems arising from agent synthesis. In \emph{Search}, the planner LLM synthesizes candidate parametric programs that may contain multiple \gllm calls. In \emph{Verification}, we prove correctness with respect to the hard constraints for all parameter values, reducing the problem to unconstrained learning. In \emph{Learning}, we apply scalable gradient-based optimization, including GRPO-style fine-tuning for LLMs, to improve the soft objective while preserving formal correctness. We evaluate \Tool on constrained symbolic regression, invariant generation for Dafny programs, symbolic mathematical expression synthesis, and policy-compliant agentic tool use ($\tau^2$-bench). Across all tasks, \Tool achieves zero constraint violations while simultaneously improving task performance over unconstrained and state-of-the-art baselines. Our results demonstrate that formal behavioral constraints not only guarantee correctness but also prune the space of candidate programs, steering synthesis toward higher-quality agents.
\end{abstract}

\maketitle
\section{Introduction}

Recent works have leveraged the code-generation capabilities of large language models (LLMs) to automatically synthesize LLM-based agents, expressed as programs, from natural language task descriptions \cite{autoAgent1, autoAgent2, autoAgent3, AutoAgent4, autoAgent5}. Given a user-defined task, a planner LLM generates a program that invokes parametric models, including LLMs, smaller neural networks, and external tools such as SMT solvers, and executes this program on user inputs to compute outputs. While this paradigm is powerful, it introduces serious safety and reliability concerns: the synthesized programs are executed autonomously on unseen inputs. Modern self-evolving frameworks \cite{selfEvolve1, selfEvolve2, selfEvolve3, selfEvolve4} further complicate formal safety verification as they fine-tune the parameters of models embedded in the synthesized programs to improve task-specific performance. Consequently, any correctness guarantee provided on an agentic program must continue to hold after parameter updates.

\noindent The absence of formal safety verification leads to failures across diverse domains. In program verification, agents cheat by subtly modifying the input program (e.g., altering variable initializations) so that the modified program with proposed annotations passes verification, inflating task accuracy \cite{dafnyPro}. In code repair, agents delete failing tests rather than fixing the underlying bugs \cite{impossiblebench}. In agentic tool use, unconstrained agents violate domain-specific policies, such as refund eligibility and booking-modification rules, on 65-76\% of interactions \cite{agentc}. In deployment, rogue agents have been observed bypassing security protections in the wild \cite{dangerCode1, dangerCode2, dangerCodeArticle}. These are not isolated incidents; they are a consequence of evaluating synthesized agents solely on soft performance metrics without formal behavioral specifications. Natural-language task descriptions and testing on fixed inputs is not sufficient to completely prevent such failures. Agentic programs need \textit{formal hard constraints} to ensure safety. In this paper, we aim to achieve both the formal guarantees of constraint-guided synthesis \cite{sygus} and the flexibility and performance benefits of self-evolving agentic frameworks.

\noindent \textbf{Key Challenges:} We summarize the challenges below.

\begin{itemize}[leftmargin=7pt]
\item \textbf{Balancing Safety and Performance:} A safe agent synthesis algorithm must ensure formal correctness guarantees while simultaneously improving task-specific performance. Existing work on deductive program synthesis \cite{sygus, semGus, cegis} provides formal guarantees but typically does not optimize task-specific performance objectives. Meanwhile, popular gradient-based optimization methods for LLMs, such as GRPO \cite{grpo}, empirically improve output quality but provide no assurance that post-training outputs satisfy specified constraints.
\item \textbf{Enforcing Constraints per Model Call:}
Generative models (GMs), such as LLMs, may be used in an agentic program at different places with different constraints. For example, one call may require generating Python programs, while another may require producing symbolic mathematical expressions conforming to a specific formal grammar. Thus, we require a mechanism to specify formal output constraints for \textit{each} GM call in an agentic program and a way to use these specifications to verify that an agentic program satisfies the user-provided behavioral specification. Recent work on constrained decoding \cite{syncode, grammarAligned, crane} can enforce context-free grammar–based syntactic constraints and limited semantic constraints such as variable naming or simple type checks. However, these approaches are restricted in the types of constraints they support and require modifying the model’s decoding procedure, which limits their applicability to open-source models. Moreover, constrained decoding strategies are known to distort the output distribution, producing outputs that satisfy constraints but degrade task performance \cite{grammarAligned, crane}.
\end{itemize}

\noindent\textbf{Our Contributions:} We summarize our contributions below.
\begin{itemize}[leftmargin=7pt]
\item We introduce the novel concept of Formally Guarded Generative Models (FGGM), which allow us to define and enforce local contracts on model calls through formal input-output specifications expressed in first-order logic. FGGM operates solely on LLM output strings and can be applied to both open and closed source models. Using FGGM, we retain the formal guarantees of deductive synthesis while utilizing the performance gained through gradient-based parameter optimization.
\item We introduce \Tool: the first self-evolving agent synthesis algorithm with verifiable guarantees. The approach consists of three stages: (a) \emph{Search}, where a planner LLM samples program strings in a verifier-aware language such as Dafny, and use the proposed FGGMs to establish local contracts on the embedded model calls; (b) \emph{Verify}, where the language’s built-in verifier is used to prove that the sampled programs satisfy all specified contracts over all parameters of the embedded parametric model calls; and (c) \emph{Learn}, where verified parametric programs reduce the constrained learning problem to an unconstrained optimization problem over model parameters, enabling scalable gradient-based training without sacrificing correctness. 
\item We prove that \Tool is \emph{sound}: any agent returned satisfies the behavioral specification for all inputs and all parameter values (Theorem~\ref{thm:soundness}). We also establish a \emph{sufficient condition} under which a verified agent exists that satisfies the hard constraints while incurring no greater task loss than any unconstrained generative model with initial parameters, with strict improvement whenever the unconstrained model violates the specification (Theorem~\ref{thm:sufficientCond}).
\item We evaluate \Tool on tasks spanning LLM-assisted program verification, symbolic math synthesis, agentic tool use, and constrained symbolic regression. Across all benchmarks, \Tool achieves provably zero constraint violations while outperforming state-of-the-art agents in task performance: 97.0\% verification rate on HumanEvalDafny (vs.\ 86.9\% for the best baseline), 66.0\% accuracy on GSM-Symbolic (vs.\ 44.7\% for the best constrained-decoding method), and 52.6\% pass rate $\tau^2$-bench's airline domain using Qwen3-8B. Notably, on $\tau^2$-bench airline \Tool even beats Agent-C \cite{agentc} with Claude Sonnet 4.5, a state-of-the-art constrained agent using a frontier model. These results demonstrate that behavioral constraints do not merely enforce safety but actively prune the search space of candidate programs and steer synthesis toward higher-quality agents.
The code is available on \href{https://github.com/uiuc-focal-lab/severa}{github}.
\end{itemize}

\section{Background}
\label{sec:background}
\noindent\textbf{Notations and Terminology: }
We use $\alp$ to denote the alphabet, a finite set of characters. All LLMs output finite strings over $\alp$. We use lowercase letters $(x)$ to denote constants, bold letters $(\pmb{x})$ to denote strings, capital letters $(X)$ to denote sets (including sets of functions), and $|\vect{x}|$ to denote string length. In the rest of the paper, we treat neural networks as a restricted form of generative model that produces a single output for a fixed input.


\noindent \textbf{LLMs: }
LLMs $\llm{} : \alpN{*} \to \alpN{*}$ are probabilistic string generators that, given a prompt $\vect{p} \in \alpN{*}$ over the alphabet $\alp$, sample an output string $\vect{y} \in \alpN{*}$. Typically, the output length $|\vect{y}|$ is bounded by a fixed user-specified $n$, i.e., $|\vect{y}| \leq n$ or equivalently $\vect{y} \in \alpN{\leq n}$.
Fixed-length generation $\llm{n} : \alpN{*} \to \alpN{\leq n}$ is implemented by iteratively composing two high-level steps, repeated at most $n$ times. Each iteration samples a single character (sometimes referred to as a token) from $\alp$ or the special end-of-sequence symbol $\eos \notin \alp{}$ that marks termination of generation.
The first step is the \textbf{distribution prediction step}, defined as $\llmP{} : \alpN{*} \to \prob{\alp \cup \eos}$, which maps a prefix string $\vect{x} \in \alpN{*}$ to a probability distribution over $\alp \cup \eos$. This distribution captures how likely each character is to appear next. The second step is the \textbf{decoding step}, defined as $\decode{} : \alpN{*} \times \prob{\alp \cup \eos} \to \alp \cup \eos$, which draws the next character using the current prefix $\vect{x}$ and the predicted distribution. If the decoding step produces $\eos$, the generation terminates. Otherwise, the sampled character $c$ is appended to the prefix, and generation continues. Henceforth, we drop the subscript $n$ for simplicity.
\vspace{-0.5cm}
\begin{definition}[LLMs]
For a fixed $n \in \mathbb{N}$, an LLM is a bounded string generator $\llm{n} : \alpN{*} \to \alpN{\leq n}$ that samples an output string $\vect{y} \in \alpN{\leq n}$ for input $\vect{x}$. $\vect{y}$ on $\vect{x}$ is computed recursively as $\vect{y}_0 = \vect{x}$, and $\vect{y}_i = \vect{y}_{i-1} \concat c_i$ if ($\forall j \leq i. c_j \neq\eos)$ else $\vect{y}_{i-1}$ where $i \in [n]$, $c_i = \decode{}(\vect{y}_{i-1}, \llmP{}(\vect{y}_{i-1}))$  and $\vect{x}\concat\vect{y} = \vect{y}_n$.
\end{definition}
\noindent\textbf{Rejection Sampling: }
Rejection sampling is a popular Monte Carlo method for generating samples from a \emph{target distribution} $\pi_t$ when direct sampling from $\pi_t$ is difficult. Instead, it relies on a \emph{proposal distribution} $\pi_p$ from which sampling is easy. The support set $S(\pi_p)$ (see Definition~\ref{def:suppSet}) must cover $S(\pi_t)$, i.e., $S(\pi_t) \subseteq S(\pi_p)$.
Intuitively, $\pi_p$ generates candidate samples, and the rejection sampler decides whether to accept each candidate, rejecting those that are unlikely under $\pi_t$ (details in Appendix~\ref{appen:rejectionSampling}). The rejection sampler guarantees that no accepted sample lies outside $S(\pi_t)$.

\begin{definition}
\label{def:suppSet}
Let $\pi$ be a probability distribution over $\Omega$ with density function $D_{\pi} : \Omega \to \mathbb{R}^+$. The support set $S(\pi)$ of $\pi$ captures all points with positive probability $S(\pi) = \set{x\in \Omega;| D_{\pi}(x) > 0}$.
\end{definition}

\noindent\textbf{Self-evolving LLM agents: }In this work, we focus on LLM agents modeled as programs $f: T_{i} \to T_{o}$ that, given any user input $x \in T_{i}$, compute the output $y = f(x)$. Typically, $f$ is written in a popular imperative language such as Python. The program invokes one or more parametric models (formally defined in \secRef~\ref{sec:searchSpace}), such as LLMs, as well as other tools such as an SMT solver, to compute the output $y$. 
All parametric models $\fset{p}$ and tool calls $\fset{c}$ are provided as a collection of pre-implemented library functions. We denote this library by $\fset{} = \fset{p} \cup \fset{c}$ (formal definition in \secRef~\ref{sec:searchSpace}). In the self-evolving paradigm, the program $f$ is not handwritten. Instead, it is generated by a planner LLM based on task-specific instructions provided as a prompt. The goal is to synthesize a program $f$ that, given a training dataset of input and optional ground-truth output pairs $\data \subseteq T_i \times T_o$ and a loss function $\loss : T_i \times T_o \times T_o \to \mathbb{R}^{+}$, minimizes the aggregated loss $\tfrac{1}{|\data|}\sum_{(x_i, y_i) \in \data} \loss(x_i, y_i, f(x_i))$. For certain tasks, the ground truth $y_i$ may not be available; in such cases, the loss is denoted as $\loss(x_i, \_, f(x_i))$.
Assume that a context-free grammar (CFG) $G$ defines the set of syntactically valid programs $f$ as $\lang{G} \subseteq \alpN{*}$. Given the library functions $\fset{}$, let $\fspace{G}{\fset{}}$ denote the search space for $f: T_i \to T_o$ with type signature $(T_i \to T_o)$, including all parametric values of the models in $\fset{p}$. Self-evolving agentic frameworks aim to find the optimal solution $\opt{f}$ to the following optimization problem. 
\begin{small}
\begin{equation}
\opt{f} = \argmin_{f \in \fspace{G}{\fset{}}} \tfrac{1}{|\data|} \times {\textstyle \sum_{(x_i, \_) \in \data} \loss(x_i, \_, f(x_i))} \label{eq:unconstrinedLearning}
\end{equation}    
\end{small}

\section{Problem Formulation}
\noindent\textbf{High-level formulation of agent synthesis with formal constraints: } Eq.~\ref{eq:unconstrinedLearning} searches only for the optimal program $\opt{f}$ that minimizes the loss on the training set $\data$. However, it provides no guarantee about how $\opt{f}$ performs on unseen user inputs outside $\data$. In contrast, deductive program synthesis \cite{sygus, cegis} allows program generation to be controlled by formal behavioral input $\inspec{} : T_i \to \set{T, F}$ and output specifications $\outspec{} : T_i \times T_o \to \set{T, F}$.
The use of formal behavioral specifications \circled{1} enables proving the correctness of the synthesized program for all inputs that satisfy the input specification, and \circled{2} can guide the search for $\opt{f}$ by eliminating unverified program candidates.
With $(\inspec{}, \outspec{})$, the unconstrained learning problem in Eq.~\ref{eq:unconstrinedLearning} can be reformulated as the constrained learning problem shown below.
\begin{small}
\begin{equation}
\opt{f} =
\argmin_{f \in \fspace{G}{\fset{}}}
\overbrace{\tfrac{1}{|\data|}\times {\textstyle \sum_{(x_i, \_) \in \data}}\;\; \loss(x_i, \_, f(x_i))}^{\text{soft learning objective}}
\;\; \text{s.t.} \;\;
\overbrace{\forall x \in T_i.\; \inspec{}(x) \implies \outspec{}(x, f(x))}^{\text{hard formal constraints}}
\label{eq:constrinedLearning}
\end{equation}
\end{small}
\noindent The central question is whether Eq.~\ref{eq:constrinedLearning} can be solved while retaining the performance benefits and scalability of gradient-based methods designed for Eq.~\ref{eq:unconstrinedLearning}. Next, we illustrate how practical constraints from several application domains can be encoded as behavioral specifications $(\inspec{}, \outspec{})$.

\noindent \textbf{Examples of formal constraints $(\Phi, \Psi)$: } The output specification $\outspec{}$ characterizes the agent's expected behavior on any valid input, not just on examples in $\data$. Hard constraints therefore detect and mitigate errors during synthesis that would not be revealed by performance evaluation on a fixed dataset $\data$. We show encodings of $(\inspec{}, \outspec{})$ for four different tasks from distinct domains.
\circled{a} \textbf{Scientific Discovery:} We consider constrained symbolic regression, where the task is to recover an unknown mathematical formula from observed data. The formal constraints $(\inspec{}, \outspec{})$ encode prior knowledge about the target formula, such as known symbolic bounds \cite{symReg1, shapeConstrained,constantSymReg} or asymptotic behavior \cite{asympoticSymReg}. Any generated formula that violates these hard constraints is invalid, regardless of its empirical fit to the data. Moreover, in the presence of noise, the constraints $(\inspec{}, \outspec{})$ help prevent overfitting, which cannot be addressed solely by optimizing a soft objective \cite{symRegImprove}. \circled{b} \textbf{Program Verification:} Given a program in a verification-aware language such as Dafny with a predefined specification, the agent is expected to synthesize annotations (e.g., loop invariants, assertions, ranking functions) \cite{dafnyBench, clover, laurel}. Task success depends on whether the annotated program verifies. Recent work \cite{dafnyPro} shows that agents using frontier models (e.g., Claude) may cheat by subtly modifying the input program, such as altering variable initializations, instead of producing correct annotations. \citep{dafnyPro} reports that such modifications were not detected by prior string-based checkers, leading to inflated task accuracy. In contrast, formal AST-based diff checkers that syntactically enforce equivalence between the input program and the annotated output serve as hard constraints, ensuring the agent cannot cheat even on unseen inputs. \circled{c}
\textbf{Constrained LLM Generation: }Constrained generation captures a class of problems in which LLMs are required to generate strings that follow a formal structure. Typically, these formal structures are defined using formal grammars (regular \cite{dingo} or context-free grammars \cite{syncode, grammarAligned}), or static analyzers such as type checkers \cite{typeConstrained, chopChop}. In our setting, these formal structures define the output specification $\outspec{}$. We consider the task of generating symbolic math expressions from natural language questions, where hard constraints ensure that the synthesized agent always produces at least a structurally valid expression. 
\circled{d} \textbf{Agentic Tool Use: }  Conversational LLM agents deployed in customer-service settings must select and invoke API tools to resolve user requests while respecting domain-specific policies, such as refund eligibility, booking-modification rules, and authentication-before-access requirements \cite{barres2025tau2}. These temporal and logical policy constraints can be expressed as formal specifications that govern the ordering and content of agent actions.
Agent-C \cite{agentc} provides a domain-specific language for specifying such temporal properties and translates them into linear temporal logic (LTL), enabling SMT-based checking of the generated tool calls. Crucially, the hard constraint is enforced at each individual tool call: given the current tool-call trace, the Agent-C checker verifies whether the next proposed tool call complies with the defined formal policy before it is executed. Task performance (utility), in contrast, is measured over the entire multi-turn interaction trace as whether the agent successfully resolves the user request.
In our formulation, $\outspec{}$ uses the Agent-C checker to define a hard constraint applied to each tool call, ensuring that synthesized agents never violate domain policies at any step, regardless of user input. Overall, hard behavioral constraints capture the minimal requirements that synthesized agents must satisfy on all inputs, thereby enabling the pruning of untrusted candidates during synthesis.

\section{Overview}
\label{sec:overview}
\insertfigure[\linewidth]{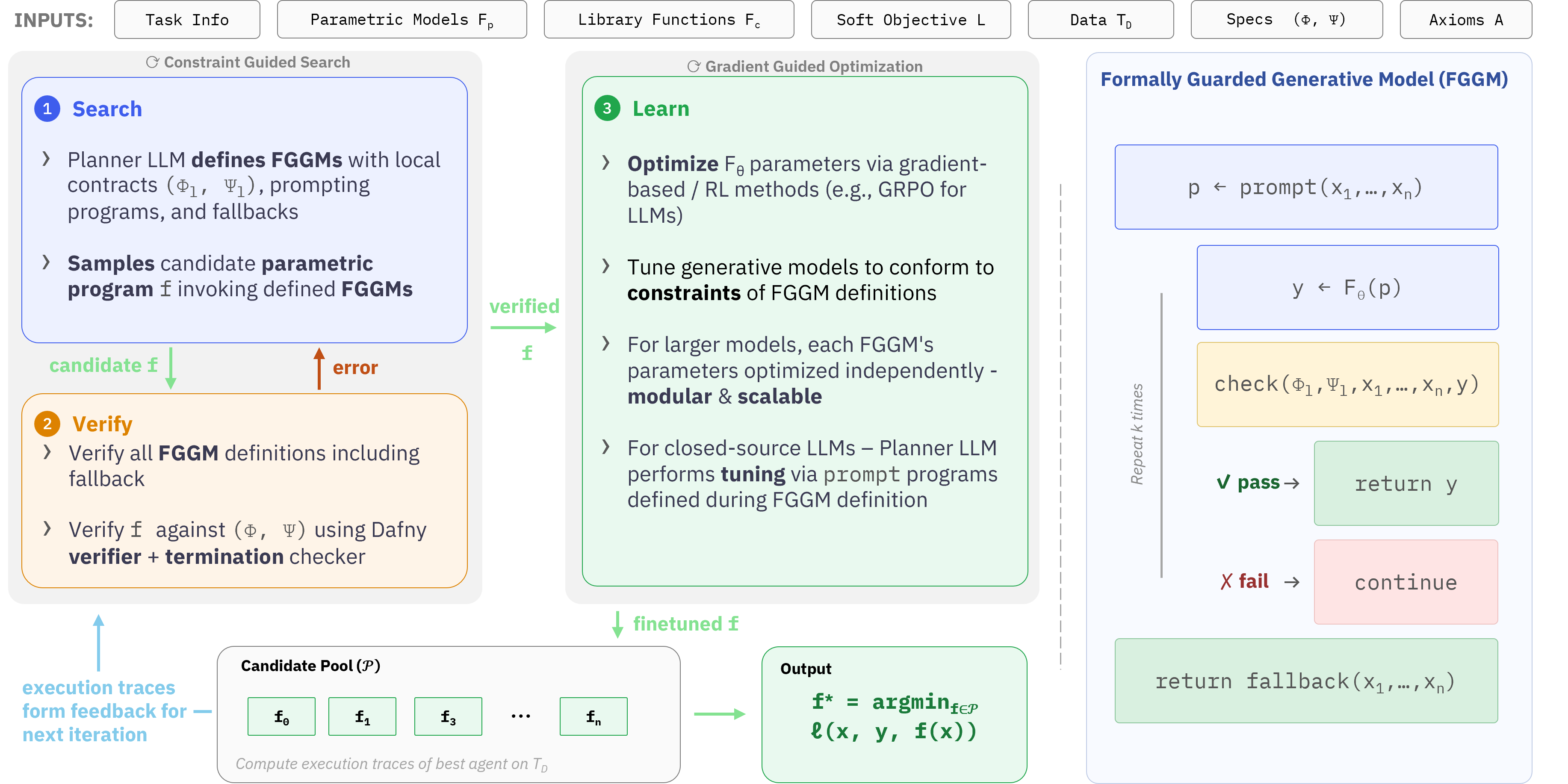}{Overview of \Tool, which operates in three key steps. (1) \textbf{Search:} Given the task information, library functions $\fset{c}$, a list of parametric generative models (GMs) $\fset{p}$, and specifications $(\inspec{}, \outspec{})$, the planner LLM outputs a parametric agentic program in which all formal output specifications of GM calls are defined using the FGGM setup. (2) \textbf{Verify:} Given the parametric agentic code, this step first verifies all FGGM definitions proposed by the planner. Once verified, it uses their local contracts to check the program against $(\Phi, \Psi)$. If all checks pass, the program is accepted and the unconstrained learning step is invoked on the verified program; otherwise, an error message is returned to the planner. (3) \textbf{Learn:} This step optimizes the parameters of the underlying GMs within each FGGM to improve conformance with local contracts defined in step (1), while also reducing the task-loss $\loss{}$ over the dataset $\data$. After tuning, \Tool maintains a pool of verified fine-tuned agents and uses their execution traces on $\data$ to generate new candidates via step (1). All agents in the pool are valid candidate solutions, and the one with the lowest $\loss{}$ on $\data$ is returned. Central to \Tool is the Formally Guarded Generative Model (FGGM), which binds each GM call to local input--output contracts $(\linspec{}, \loutspec{})$ and a verified non-parametric fallback, ensuring that the contracts hold irrespective of the underlying GM parameters. This enables unconstrained gradient-based parameter optimization without compromising formal correctness.}{fig:overView}

\noindent Fig.~\ref{fig:overView} gives a high-level overview of \Tool for solving the constrained learning problem in Eq.~\ref{eq:constrinedLearning} over the search space $\fspace{G}{\fset{}}$. In this setting, we need to simultaneously optimize task-specific performance while guaranteeing that the resulting program provably satisfies the given behavioral constraints. This is difficult because the search space contains both discrete program structure and continuous model parameters, and because although GMs such as LLMs are powerful, they are inherently unreliable, which makes it hard to ensure conformance with behavioral specifications.

\noindent To address these challenges, \Tool retains the best of both worlds: constraint-guided program search from deductive synthesis and unconstrained gradient-based parameter optimization. Specifically, \Tool employs a CEGIS-style loop in which a planner LLM proposes candidate parametric programs and a verifier checks their correctness with respect to the behavioral specification. Once a parametric program is verified, \Tool applies unconstrained gradient-based optimization to tune the underlying GM parameters and reduce the task-specific loss $\loss{}$. The proposed Formally Guarded Generative Model (FGGM) serves as the critical link between these two strategies.
 The FGGM setup allows one to bind each GM call to explicit local input–output contracts and a verified non-parametric fallback program, ensuring that these contracts hold regardless of the underlying GM parameters. The verifier then leverages these local contracts to prove the correctness of the parametric program with respect to the behavioral specification. Beyond providing correctness guarantees, these contracts also extract local learning objectives by characterizing the expected outputs of each GM call. This enables gradient-based optimization to improve the GM’s conformance to its local contracts, thereby connecting constrained program search with unconstrained learning.

\noindent Next, we discuss the components of \Tool through two representative and easy-to-understand problem instances from different domains: scientific discovery and program verification.
\subsection{Example Instantiation of \Tool}
\subsubsection{Constrained Symbolic Regression:}
\noindent In symbolic regression, each problem instance consists of possibly noisy observations of input–output pairs generated by an unknown ground-truth function $f_{gt}$. The goal is to recover this ground-truth function from the training data $\data$, a popular task in scientific discovery and neuro-symbolic program synthesis \cite{llmsr, dreamCoder}. In constrained symbolic regression \cite{symReg1, shapeConstrained}, users encode known properties of $f_{gt}$ beyond what is captured in $\data$ as formal constraints, ensuring that the recovered function $f$ satisfies these constraints. This setup directly follows the constrained learning formulation in Eq.~\ref{eq:constrinedLearning}, where the synthesized agent $\opt{f}$ corresponds to the recovered function.
For illustration, we consider the problem instance with ground-truth function $f_{gt}(x) = \sqrt{1.23 \times \max(x, 0.0)}$, taken from \cite{constantSymReg}.
The dataset $\data$ contains input–output pairs $(x, y) \in \mathbb{R}^2$, where $y = f_{gt}(x) + \epsilon$ is a noisy observation of $f_{gt}(x)$ and $\epsilon \sim \mathcal{N}(0, \sigma)$ denotes additive Gaussian noise. The pointwise loss function $\loss{}$ is the commonly used Normalized Mean Squared Error (NMSE) (see Eq.~\ref{eq:lossSymReg}).
For this instance, the agent’s type signature $\typeSig$ is $(\mathbb{R} \to \mathbb{R})$, and the behavioral specifications $(\inspec{}, \outspec{})$ define known symbolic bounds on $f_{gt}$ (see Eq.~\ref{eq:constrainedSymReg}) where $\funcIt{sqrt}: \mathbb{R} \to \mathbb{R}$ is a provided library function.  

\begin{small}
    \begin{equation}
        \loss{}(x_i, y_i, f(x_i)) = \tfrac{(f(x_i) - y_i)^2}{C}\;\;\text{where ${\textstyle C = \sum_{(x_i, y_i) \in \data} y_i^2}$} \label{eq:lossSymReg}
    \end{equation}
    \begin{equation}
        \inspec{}(x):\;(x\geq 0),\; \outspec{}(x, f(x)):\; \left((x \leq 1) \implies (f(x) \geq \funcIt{pow}(x, 0.8)\right) \wedge \left((x \geq 1) \implies (f(x) \geq \funcIt{sqrt}((x))\right)\label{eq:constrainedSymReg}    
    \end{equation}
\end{small}

\noindent\textbf{Search Space: } \Tool defines the search space using a restricted subset of Dafny supporting four basic types $\set{\typeit{int}, \typeit{bool}, \typeit{real},  \typeit{str}}$, conditional blocks, while loops, and function calls. We formalize the search space in \secRef~\ref{sec:searchSpace}.

\noindent\textbf{Library Functions: }For this task, the library function set $\fset{}$ includes common transcendental functions (e.g., $\funcIt{sin}$, $\funcIt{cos}$, $\funcIt{pow}$, $\funcIt{sqrt}$) as well as small parametric neural networks (NNs), enabling the search space to capture a wide range of neuro-symbolic programs. Users provide type signatures and formal contracts for all non-parametric library functions; the planner LLM defines formal contracts for each parametric model call through the FGGM setup (\secRef~\ref{sec:fggm}). We present representative type signatures and specifications in Eq.~\ref{eq:libSymReg}, with the complete set in Appendix~\ref{appen:symRegLib}.

\begin{small}
    \begin{equation}
        \funcIt{sqrt}(x: \typeit{real}) \to \typeit{real},\; \Phi_{sqrt}(x): (x \geq 0),\; \Psi_{sqrt}(x, \funcIt{sqrt}(x)): (\funcIt{sqrt}(x) \geq 0);\;\; {\textstyle \funcIt{NN^{(1)}_\theta}(x_1: \typeit{real})} \to \typeit{real} \label{eq:libSymReg}
    \end{equation}
    \begin{equation}
        \forall x, d_1, d_2 \in \mathbb{R}. (x \geq 1) \wedge (d_1 \geq d_2)\implies \funcIt{pow}(x, d_1) \geq \funcIt{pow}(x, d_2)\label{eq:axiomSymReg1}
    \end{equation}
    \begin{equation}
        \forall x, d_1, d_2 \in \mathbb{R}. (x \leq 1) \wedge (d_1 \geq d_2)\implies \funcIt{pow}(x, d_1) \leq \funcIt{pow}(x, d_2)  \label{eq:axiomSymReg}
    \end{equation}
    \begin{equation}
        \;\axiomEncode{\axiom} \implies \forall x \in T_i.\big(\inspec{}(x) \implies \outspec{}(x, f(x))\big) \;\;\text{where $\axiomEncode{\axiom{}} = ({\textstyle \bigwedge_{\varphi \in \axiom}\;} \varphi)$}\label{eq:axiomImplication}
    \end{equation}
\end{small}
\noindent \textbf{Axioms:} The axioms $\axiom$ encode known properties of library functions in $\fset{}$, including their formal contracts as universally quantified first-order sentences, where library functions are treated as uninterpreted. For example, Eq.~\ref{eq:axiomSymReg} captures a property of $\funcIt{pow}$. These axioms assist automated verification as shown in Eq.~\ref{eq:axiomImplication}. We provide the formal syntax for defining $\axiom$ in \secRef~\ref{sec:searchSpace} and the complete axiom set for this task in Appendix~\ref{appen:symRegAxiom}.

\noindent Like other synthesis frameworks with library functions \cite{componentSynthesis}, \Tool assumes that all library functions have correct input–output specifications, and that they are pure and terminating. Verifying the correctness of the library functions and axioms is outside the scope of \Tool.

\subsubsection{LLM-Assisted Automated Verification} This task aims to synthesize annotations, such as inductive loop invariants, ranking functions, and assertions, that enable automatic verification and termination checking of input Dafny programs with pre-encoded input–output specifications \cite{clover, dafnyBench}.
The agent has type signature $(p: \typeit{str}) \to \typeit{str}$, mapping an input Dafny program with pre-encoded specifications, represented as a string, to an annotated Dafny program.  
The point-wise loss function $\loss{}$ evaluates whether the annotated program verifies within a pre-fixed time budget, assigning $1$ to unverified programs and $0$ to verified ones (Eq.~\ref{eq:lossDafny}). Hard constraints $(\inspec{}, \outspec{})$ ensure that, for any parsable input program, the output remains parsable and does not change the input program beyond adding annotations (Eq.~\ref{eq:specsDafny}).
We use the Dafny built-in verifier to define $\loss{}$, and the Dafny parser together with the AST-based formal diff checker from \cite{dafnyPro} to define $(\inspec{}, \outspec{})$. $\funcIt{noDiff}$ returns True if the second input program is syntactically equivalent to the first (ignoring annotations), and False otherwise. The library function set $\fset{}$ includes all these components.
Although parsing and diff checks could be incorporated into $\loss{}$ by penalizing invalid outputs on $\data$, these constraints must hold for all possible inputs, not only those in $\data$. Therefore, any candidate agent that fails to satisfy them must be filtered out.  

\noindent\textbf{Library Functions: }We use the same search space as constrained symbolic regression. $\fset{}$ includes the verifier with a pre-fixed time budget $\funcIt{verify}$, the Dafny parser $\funcIt{parse}$, the diff-checker $\funcIt{noDiff}$. 
We show the type signature and formal contract of representative functions in Eq.~\ref{eq:libFuncDafny} while providing the rest in Appendix~\ref{appen:dafnyBenchLib}.  
\begin{small}
\begin{align}
&\funcIt{noDiff}(\vect{x}: \typeit{str}, \vect{y}: \typeit{str}) \to \typeit{bool},\; \Phi(\vect{x}, \vect{y}): \funcIt{parse}(\vect{x}),\; \Psi(\vect{x}, \vect{y}, \funcIt{noDiff}(\vect{x}, \vect{y})): T \label{eq:libFuncDafny}\\
& \loss{}(\vect{x}, \_, \vect{y}) = 1 - \mathbb{I}(\funcIt{parse}(\vect{y}) \wedge \funcIt{noDiff}(\vect{x}, \vect{y}) \wedge \funcIt{verify}(\vect{y})) \label{eq:lossDafny}\\
& \inspec{}(\vect{x}): \funcIt{parse}(\vect{x});\; \outspec{}(\vect{x}, f(\vect{x})): \funcIt{parse}(f(\vect{x})) \wedge \funcIt{noDiff}(\vect{x}, f(\vect{x})) \label{eq:specsDafny}
\end{align}    
\end{small}
\noindent\textbf{Axioms: }Example of a first-order sentence encoding known properties of $\funcIt{noDiff}$ in Eq.~\ref{eq:axiomDafny}.  $\funcIt{noDiff}$ is both reflexive and transitive w.r.t all parsable Dafny programs. It is easy to see that same program will always have no difference with itself. We list all the axioms in Appendix~\ref{appen:danfyBenchAxiom}.
\begin{small}
\begin{align}
&\textbf{Reflexivity:}\;\; \forall \vect{x}\in \alpN{*}.\; \funcIt{parse}(\vect{x}) \implies \funcIt{noDiff}(\vect{x}, \vect{x}) \label{eq:axiomDafny}
\end{align}    
\end{small}
\subsection{Formally Guarded Generative Model}
\label{sec:fggm}
Formally Guarded Generative Model (FGGM) setup allows the planner LLM to bind each generative model (GM) call to local input–output contracts $(\linspec{}, \loutspec{})$ that formally characterize the output. Once the FGGM is verified to be well-formed, the defined contracts can be used to prove the correctness of the agentic program against the behavioral specifications $(\inspec{}, \outspec{})$. Our FGGM design ensures the following essential properties.
\circled{1} \emph{Flexibility:} Unlike functions in $\fset{c}$, output contracts cannot be predefined by users because the expected output of a GM varies across calls depending on the prompt. In our design, the planner LLM synthesizes custom first-order formulas over terms involving non-parametric functions (e.g., \funcIt{sqrt}) and predicates (e.g., $\funcIt{noDiff}$) from $\fset{c}$ to specify local contracts. 
\circled{2} \emph{Parameter-Independent Correctness:} Provided the defined FGGM is well-formed, the specified contracts hold irrespective of the parameters of the underlying GM. This property is crucial to ensure that gradient-based parameter optimization never breaks the correctness guarantees.
\circled{3} \emph{Local Learning Objective:} Although correctness holds irrespective of parameter values, the local contracts $(\linspec{}, \loutspec{})$ extract the local learning objective for a particular GM call by characterizing its expected output. This objective can guide gradient-based optimization so that, after tuning, the GM’s conformance with $(\linspec{}, \loutspec{})$ improves (\secRef~\ref{subsec:conformanceTune}).
\subsubsection{Intuition} For each FGGM with local contracts $(\linspec{}, \loutspec{})$ and underlying parametric GM $\llm{\Theta}$, the planner LLM synthesizes two non-parametric programs using only functions from $\fset{c}$.
The first is a fallback program $\fallback$ that satisfies $(\linspec{}, \loutspec{})$ and guards against edge cases in which the GM hallucinates or produces outputs that violate the specification. Without a verified fallback, the rejection sampler could exhaust all samples; the fallback is what makes the contract hold unconditionally for all parameter values. Intuitively, although $\fallback$ may lack the GM’s task-solving capability (low task performance), it prevents the candidate agent from entering unsafe program states during execution (high constraint satisfaction). The second is a prompting program $\prompt$, which constructs the input to the GM (which may be just a neural network). For LLMs, $\prompt$ encodes task-specific natural-language instructions, analogous to prompt-tuning techniques \cite{dspy, lmql} known to improve GM performance and instruction adherence.
\noindent Once the planner LLM proposes $\linspec{}, \loutspec{}, \prompt,$ and $\fallback$, \Tool automatically constructs a rejection sampler that treats outputs of underlying GM as samples from a proposal distribution $\pi_p$ for an input $\vect{p}$. The local specifications $(\linspec{}, \loutspec{})$ define the support set (Definition~\ref{def:suppSet}) of the target distribution $\pi_t$ on $\vect{p}$, assigning zero probability to outputs that violate $(\linspec{}, \loutspec{})$.
The rejection sampler draws a fixed number $K \geq 1$ of samples from $\llm{\Theta}$, as specified by the user, and falls back to $\fallback$ if all samples are rejected (Fig~\ref{fig:rejectionSampler}). The purpose of $\prompt$ and optional conformance tuning is to improve the acceptance rate of the rejection sampler for any input, thereby reducing reliance on the fallback $\fallback$. We formally define the FGGM syntax, well-formedness conditions, and the computability of the output checker in \secRef~\ref{sec:FGGMFormal}.
\begin{wrapfigure}{r}{0.40\textwidth}
\vspace{-0.9em}
\begin{minipage}{0.4\textwidth}

\begin{tcolorbox}[
colback=lightgraybg,
colframe=lightgraybg,
boxrule=0pt,
arc=2pt,
left=4pt,right=4pt,top=4pt,bottom=4pt
]
\small
\[
\begin{aligned}
&\kw{function}\ \funcIt{id}
(x_1:\typeit{T_1},\dots,x_n:\typeit{T_n})
: (\typeit{T_o}) \\[-0.25em]
&\quad \kw{requires}\ \linspec{}(x_1,\dots,x_n)\ \{ \\[-0.25em]  
&\quad \kw{var}\ p: \typeit{T_{in}} := 
\prompt(x_1,\dots,x_n), y: \typeit{T_o}; \\[-0.35em]
&\quad \left.
\begin{aligned}
& \kw{for}\ i = 1 \dots K\ \;\{ \\
&\quad y := \llm{\Theta}(p); \\
&\quad \kw{if}\ 
\funcIt{\eval}(x_1,\dots,x_n,y)\\
&\quad\qquad \kw{return}\ y; \\
& \}
\end{aligned}
\quad \right]
\circled{1} \\[-0.25em]
&\quad \left.
\begin{aligned}
&\kw{return}\ 
\fallback(x_1,\dots,x_n, y);
\end{aligned}
\quad\quad\;\;\:\right]
\circled{2} \\[-0.25em]
&\}
\end{aligned}
\]
\end{tcolorbox}
\vspace{-0.9em}
\caption{\small Auto-synthesized guarded GM with rejection sampler (1) and fallback (2).}
\label{fig:rejectionSampler}
\end{minipage}
\end{wrapfigure}
\subsubsection{Examples} We show a couple of example FGGM definitions that the planner LLM suggests: one from symbolic regression and one from Dafny program verification.  The first example from symbolic regression (Eq.~\ref{eq:fallbackSymReg}) with id $\funcIt{boundedParam}$, type signature $(l:\typeit{real}, u:\typeit{real}) \to \typeit{real}$, underlying parametric GM which is a neural network $NN^{(2)}_{\Theta}: \mathbb{R}^2 \to \mathbb{R}$ for this case, $\linspec{}: (l \leq u)$ $\loutspec{}$ (Eq.~\ref{eq:fallbackSymReg}) restricts $\funcIt{boundedParam}(l, u)$ to lie within $[l,u]$. The fallback clamps any violating sample $y$ using library functions $\funcIt{min}$ and $\funcIt{max}$, ensuring contract preservation.

\noindent The second example from Dafny program verification (Eq.~\ref{eq:fallbackDafny}) with id $\funcIt{dafnyAnnotate}$, type signature $(p:\typeit{str}) \to \typeit{str}$, underlying parametric GM which is a LLM $\llm{\Theta}: \alpN{*} \to \alpN{*}$, $\linspec{}: \funcIt{parse}(p)$, $\loutspec{}$ (Eq.~\ref{eq:fallbackDafny}) ensures $\funcIt{dafnyAnnotate}(p)$ always outputs a syntactically valid program and only adds annotations to the input program $p$. If all samples are rejected, the fallback safely returns the original program $p$, guaranteeing $\loutspec{}$ under $\linspec{}$ utilizing the \textit{\textbf{reflexivity axiom}} (Eq~\ref{eq:axiomDafny}). These examples show how a planner LLM can define FGGMs with fallbacks to avoid incorrect program states for diverse tasks.
\begin{small}
\begin{align}
\loutspec{} &:\ (l \leq f(l,u)) \wedge (f(l,u) \leq u), 
& \fallback &:\ \text{ 
\kw{function}\ \funcIt{$\fallback$}(l:\typeit{real}, u:\typeit{real}, y:\typeit{real})
\{\kw{return}\ $\funcIt{min}(\funcIt{max}(l,y),u)$;\}} \label{eq:fallbackSymReg}\\
\loutspec{} &:\ \funcIt{parse}(p) \wedge \funcIt{noDiff}(p,f(p)), 
& \fallback &:\ \text{ 
\kw{function}\ \funcIt{$\fallback$}(p:\typeit{str}, y:\typeit{str}) $: (\typeit{str})$
\{\kw{return}\ $p$;\}} \label{eq:fallbackDafny}
\end{align}
\end{small}
\noindent\textbf{Design Considerations.} Alternative approaches for enforcing formal constraints on generative model calls exist, but they fall short of FGGM. Constrained decoding \cite{syncode, grammarAligned, chopChop, crane} modifies the model’s internal decoding procedure, which limits its applicability to open-source models and restricts constraints to syntactic forms such as context-free grammars. Post-hoc output filtering without a fallback provides no guarantees: if all sampled outputs are rejected, the program produces no valid output. Fully verifying an agentic program is computationally intractable, as it depends on large parametric components such as LLMs. FGGM addresses all three limitations: it operates solely on model outputs, making it compatible with closed-source models; it incorporates a verified fallback that guarantees a valid output on every execution; and it decomposes verification into parameter-independent local contracts. Moreover, these local contracts can be utilized during learning to guide parameter tuning, improving conformance, and reducing reliance on the fallback.

\subsubsection{Conformance Tuning}
\label{subsec:conformanceTune}
A FGGM that always uses the fallback is formally correct but practically no better than a non-parametric program $\fallback$, conformance tuning helps avoid this by teaching the GM to satisfy the defined contract directly. The local contracts $(\linspec{}, \loutspec{})$ define the local learning objective to avoid the fallback and assign higher probability to outputs that pass the check $\eval$. The objective in Eq.~\ref{eq:conformanceLoss} minimizes the expected contract violation of GM samples over an input set $\mathbb{P}$. For any $\theta \in \Theta$, the inner term $\conformLoss^{\theta}(p)$ measures the probability that a sampled output from $\llm{\theta}(p)$ violates $\eval(\linspec{}, \loutspec{}, x_1,\dots,x_n,y)$, while $\conformLoss(\theta)$ aggregates this violation across inputs in $\mathbb{P}$.
By optimizing $\theta \in \Theta$ to reduce this loss, the GM post-tuning is encouraged to concentrate the probability mass on the support defined by $(\linspec{}, \loutspec{})$ (details in \secRef~\ref{subsec:learningDetails}). Overall conformance tuning improves the acceptance rate of the rejection sampler, reducing reliance on $\fallback$. 
We formalize the conformance loss in \secRef~\ref{subsec:learningDetails} (Eq.~\ref{eq:conformanceLoss}).

\subsection{\Tool}
\Tool employs a three-step strategy to solve the constrained learning problem in Eq.~\ref{eq:constrinedLearning}:
\begin{wrapfigure}{r}{0.46\textwidth}
\vspace{0.5em}
\centering
\begin{subfigure}{\linewidth}
\begin{tcolorbox}[
colback=lightgraybg,
colframe=lightgraybg,
boxrule=0pt,
arc=2pt,
left=4pt,right=4pt,top=4pt,bottom=4pt
]
\small
\[
\begin{aligned}
&\kw{function}\ \funcIt{agent}(x:\typeit{real}) 
 : (\typeit{real})\ \{ \\[-0.25em]
&\quad \kw{var}\ a:\typeit{real} := 
\funcIt{boundedParam}(0.0, 1.0); \\[-0.25em]
&\quad \kw{var}\ b:\typeit{real} := 
\funcIt{boundedParam}(0.0, 1.0); \\[-0.25em]
&\quad \kw{var}\ linear\_x:\typeit{real} := a * x; \\[-0.25em]
&\quad \kw{var}\ y:\typeit{real} := linear\_x + b; \\[-0.25em]
&\quad \kw{return}\ y; \\[-0.25em]
&\}
\end{aligned}
\]
\end{tcolorbox}
\vspace{-1em}
\caption{\small Unverified parametric program pruned out.}
\label{fig:unverifiedCandidateSymReg}
\end{subfigure}

\vspace{1.5em}

\begin{subfigure}{\linewidth}
\begin{tcolorbox}[
colback=lightgraybg,
colframe=lightgraybg,
boxrule=0pt,
arc=2pt,
left=4pt,right=4pt,top=4pt,bottom=4pt
]
\small
\[
\begin{aligned}
&\kw{function}\ \funcIt{agent}(x:\typeit{real}) 
: (\typeit{real})\ \{ \\[-0.25em]
&\quad \kw{if}\ (x \leq 0.0)\ \{\kw{return}\ 0.0;\} \\[-0.25em]
&\quad \kw{var}\ a:\typeit{real} := 
\funcIt{boundedParam}(1.0, 1.5); \\[-0.25em]
&\quad \kw{var}\ d:\typeit{real} := 
\funcIt{boundedParam}(0.5, 0.8); \\[-0.25em]
&\quad \kw{assert}\ x \geq 0.0; \\[-0.25em]
&\quad \kw{var}\ pow\_x:\typeit{real} := 
\funcIt{pow}(x, d); \\[-0.25em]
&\quad \kw{assert}\ (d \leq 0.8); \quad \kw{assert}\ (d \geq 0.5);\\[-0.25em]
&\quad \kw{assert}\ (x \leq 1) \implies (pow\_x \geq \funcIt{pow}(x, 0.8)); \\[-0.25em]
&\quad \kw{assert}\ (x \geq 1) \implies (pow\_x \geq \funcIt{pow}(x, 0.5)); \\[-0.25em]
&\quad \kw{assert}\ (x \geq 0) \implies (\funcIt{pow}(x, 0.5) = \funcIt{sqrt}(x)); \\[-0.25em]
&\quad \kw{var}\ y:\typeit{real} := a * pow\_x; \\[-0.25em]
&\quad \kw{return}\ y; \\[-0.25em]
&\}
\end{aligned}
\]
\end{tcolorbox}
\vspace{-1em}
\caption{\small Verified parametric program.}
\label{fig:verifiedCandidateSymReg}
\end{subfigure}

\vspace{-0.5em}
\caption{\small Two example candidate parametric programs generated for symbolic regression.}
\label{fig:candidateAgentsSymReg}
\vspace{-1.2em}
\end{wrapfigure}

\textbf{(1) Search:} The planner LLM defines FGGMs with their prompting programs $\prompt$, then samples candidate parametric programs that invoke these FGGMs. Direct calls to parametric GMs are disallowed; each must be wrapped within a verified FGGM.

\textbf{(2) Verify:} \Tool checks all FGGM definitions for well-formedness and then verifies the candidate program against $(\inspec{}, \outspec{})$ using the Dafny built-in verifier. If verification fails, error feedback is returned to the planner LLM to refine the candidate (CEGIS-style loop). Due to the FGGM setup, any verified program satisfies the behavioral specifications for all parameter values.
\textbf{(3) Learn:} The learning step applies unconstrained gradient-based optimization to tune GM parameters, guided by the global task loss $\loss{}$ and local conformance losses from each FGGM. For closed-source LLMs without accessible parameters, the learning step is skipped and improvements rely on prompt tuning through $\prompt$.
We formalize the search-verify-learn loop in \secRef~\ref{sec:keyStepDetails} (Algorithms~\ref{alg:searchVerify} and~\ref{alg:tool}).

\subsubsection{Candidate Pool and Execution Feedback}
\label{sec:agentExamples}
\Tool maintains a pool of verified candidate agentic programs with fine-tuned parameters (Fig.~\ref{fig:overView}). \Tool selects the best-performing candidate agent from the pool, i.e., the one with the minimal task loss $\loss{}$, and uses its execution traces on data $\data$ to iteratively search for new candidates via the planner LLM. The number of parametric candidate sampling steps is capped by the user. Once this limit is reached, \Tool returns the agent in the pool with the lowest task loss $\loss{}$.
Fig.~\ref{fig:candidateAgentsSymReg} shows two candidate parametric programs sampled by the planner LLM. Each program invokes the FGGM defined in Eq.~\ref{eq:fallbackSymReg} with id $\funcIt{boundedParam}$ from two different locations. Fig.~\ref{fig:unverifiedCandidateSymReg} shows the first parametric program, which fails to verify against the behavioral specification and is therefore pruned by the verifier. This candidate represents the output as a parametric affine function of the input $x$, which does not match the ground truth, illustrating how strict behavioral constraints eliminate poor candidates.
The second candidate verifies for all parameters of the neural networks $\funcIt{NN^{(2)}_{\Theta}}$ wrapped inside the $\funcIt{boundedParam}$ calls. After parameter tuning, it recovers the ground truth with negligible error, as discussed below.
Currently, \Tool does not share parameters across FGGM calls at different locations and instead assigns separate parameter sets to each $\funcIt{boundedParam}$ call. This design allows the system to potentially learn different parameters at different call sites to improve task-specific performance.
Since both $\funcIt{boundedParam}$ calls receive constant inputs, after fine-tuning they return constant values, i.e., $a = 1.11$ and $d = 0.503$. This recovers the ground-truth function $\sqrt{1.23 \times \max(x, 0)}$ with negligible error. Note that \Tool currently neither shares parameters nor caps the number of FGGM calls. However, \Tool can be extended to support both these checks to reduce the computational budget required for parameter learning, as discussed in \secRef~\ref{sec:discussion}.
The parametric candidate programs and all FGGM definitions with synthesized prompting programs $\prompt{}$ for the program verification task are included in Appendix~\ref{appen:exampleDafnyAgent}.

\noindent In the next section, we formalize the search space, provide complete algorithmic details for each step, and establish theoretical guarantees for the framework.

\section{Technical Details}
We formalize the search space in \secRef~\ref{sec:searchSpace}, provide the details of the steps search, verify, and learn along with FGGM in \secRef~\ref{sec:keyStepDetails}. We provide formal proof of soundness and a sufficient condition for agent synthesis success with non-trivial utility in \secRef~\ref{subsec:formalProof}.  
\subsection{Search Space} 
\label{sec:searchSpace}
\noindent\textbf{Program Search Space: } We use a restricted subset of the popular verification-aware language Dafny \cite{dafny} to define the search space of candidate programs. In this restricted subset, we allow conditional blocks, while loops, assignment statements, assertion statements, function calls, and annotations including invariant clauses and ranking functions (decreases clauses), along with first-order input–output specifications. We present the BNF grammar $\dg$ capturing this subset in Appendix~\ref{appen:seachSpace}. We allow four basic types $\typeSet = \set{\typeit{bool}, \typeit{int}, \typeit{real}, \typeit{string}}$, basic built-in arithmetic functions $\bin{a} = \set{\funcIt{+}, \funcIt{-}, \funcIt{\times}, \funcIt{/}}$, boolean functions $\bin{b} = \set{\funcIt{\&\&}, \funcIt{||}, \funcIt{!}, \funcIt{\implies}}$, and relations $\bin{r} = \set{\funcIt{==}, \funcIt{\leq}, \funcIt{\geq}, \funcIt{<}, \funcIt{>}}$ with their usual semantics. We use a planner LLM $\planner{}: \alpN{*} \to \alpN{*}$ to search the program space by sampling candidate programs as strings. We assume that $\planner{}$’s output space can represent all valid Dafny programs $L(\dg) \subseteq \alpN{*}$ and that the type $\typeit{string}$ can represent all strings $\alpN{*}$ over $\alp{}$. \Tool provides built-in predicates $\funcIt{lex_{T}}(x: \typeit{string}) \to \typeit{bool}$ to check whether a string can be parsed as a specific type $T \in \set{\typeit{bool}, \typeit{int}, \typeit{real}}$. Using Dafny as the language allows us to leverage its built-in verifier to check candidate programs against a first-order input–output specification, as well as its built-in termination checker.
This choice is motivated by several factors: (1) Dafny's built-in verification infrastructure directly supports the formal guarantees central to \Tool, (2) its type system aligns with the basic types $\typeSet$ used in our specification language, and (3) it supports annotations such as loop invariants and ranking functions that our planner LLM can synthesize to aid the verifier. While the framework is not language-specific, using a verification-aware host language avoids re-implementing a custom verifier and termination checker.

\noindent\textbf{Library Functions: } We categorize the set of library functions $\fset{}$ into two disjoint sets: $\fset{c}$, the set of non-parametric functions, and $\fset{p}$, the set of parametric functions. $\fset{c}$ includes all built-in functions. Each user specified non-parametric function of arity $n$ is specified by a tuple $(\funcIt{id(f_n)}, \typs{f_n}, \pre{f_n}, \post{f_n})$, where $\funcIt{id(f_n)}$ is the function name, $\typs{f_n} = (T_1 \times \cdots \times T_n \to T_0)$ is the type signature with $T_i \in \typeSet$, and $(\pre{f_n}, \post{f_n})$ are first-order input–output specifications defining a formal over-approximate semantics of $f_n$. The function $f_n : T_1 \times \cdots \times T_n \to T_0$ halts for any inputs $x_1 \in T_1, \dots, x_n \in T_n$ and computes $r = f_n(x_1, \dots, x_n)$. The computed output $r$, together with $(x_1, \dots, x_n)$, always satisfies $\forall x_1 \in T_1, \dots, \forall x_n \in T_n.\ \pre{f_n}(x_1, \dots, x_n) \implies \post{f_n}(x_1, \dots, x_n, r)$.  $\axiom{}$ contains all these formal contracts.

\noindent \textbf{Parametric Functions: } Each $f^{\paramset{}}_n$ in $\fset{p}$ is specified by the tuple $(\funcIt{id(f^{\paramset{}}_n)}, \typs{f^{\paramset{}}_n}, \param{0}, \paramset{})$, where $\funcIt{id(f^{\paramset{}}_n)}$ denotes the function name, $\typs{f^{\paramset{}}_n}$ is the type signature, $\paramset{}$ is a continuous set of parameters, and $\param{0} \in \paramset{}$ denotes the default parameters. Here, $f^{\paramset{}}_n = \set{f^{\param{}}_n \mid \param{} \in \paramset{}}$ represents an infinite set of functions with the same type signature $\typs{f^{\paramset{}}_n}$, where each individual function $f^{\param{}}_n \in f^{\paramset{}}_n$ is instantiated by a concrete parameter value $\theta \in \Theta$. Although $f^{\paramset{}}_n$ represents a set of functions, we sometimes abuse this notation to denote a single function corresponding to $f^{\param{0}}_n$ with the default parameters.

\noindent The input $\mathcal{I} = (\fset{}, \axiom{}, \loss{}, \data{}, \inspec{}, \outspec{}, I)$ to \Tool includes the library functions $\fset{}$, axioms $\axiom{}$, point-wise differentiable loss $\loss{}$, dataset $\data$, behavioral specifications $(\inspec{}, \outspec{})$, and textual task information $I$ used by the planner LLM $\planner{}$. We use $\parser : \alpN{*} \to \set{T, F}$ to denote the parser for checking the syntactic validity of input programs, $\ver{\inspec{}}{\outspec{}} : \alpN{*} \to \set{T, F}$ to denote the verifier that checks a program against the specifications $(\inspec{}, \outspec{})$, and $\ter : \alpN{*} \to \set{T, F}$ to denote the termination checker. Each of $\parser$, $\ver{\inspec{}}{\outspec{}}$, and $\ter$ is instantiated with user-provided context $\context = (\dg, \fset{}, \axiom{})$. Both $\ver{\inspec{}}{\outspec{}}$ and $\ter{}$ are restricted to a fixed timeout and return \textit{false} if they cannot verify $(\inspec{}, \outspec{})$ or prove termination within the allotted time budget.

\subsection{Key Steps}
\label{sec:keyStepDetails}
\subsubsection{FGGM Formal Definition}
\label{sec:FGGMFormal}

Eq.~\ref{eq:FGGMDefBNF} specifies the formal syntax for defining an FGGM.
\begin{small}
\begin{align}
\langle \funcIt{FGGM} \rangle &::= 
\langle 
\typeit{id}\rangle \ \text{";"}\ 
\langle \typeit{GMid} \rangle \text{";"}\ 
\overbrace{
\langle\typeit{typeSig}\rangle \ \text{";"}\ 
\langle \typeit{\linspec{}} \rangle \text{";"}\ 
\langle \typeit{\loutspec{}} \rangle \ \text{";"}
}^{\text{signature + formal contract}} 
\underbrace{
\langle \typeit{\prompt} \rangle \text{";"} 
}_{\text{prompt}}
\underbrace{
\langle \typeit{\fallback} \rangle \text{";"}
}_{\text{fallback}}
\langle \typeit{info}
\rangle
\label{eq:FGGMDefBNF}
\end{align}
\end{small}
\noindent The planner LLM defines a guarded GM following Eq.~\ref{eq:FGGMDefBNF}, where:
(a) \textit{id:} is the unique identifier of the created FGGM;
(b) \textit{GMid:} identifies the underlying GM used as the proposal distribution;
(c) \textit{typeSig:} defines the type signature of the FGGM, including input variables with their types and the output type, e.g., $(x_1: \typeit{T_1}, \dots, x_n: \typeit{T_n}) \to \typeit{T_o}$;
(d) \textit{$(\linspec{}, \loutspec{})$:} are first-order formulas whose terms can include non-parametric library functions from $\fset{c}$;
(e) \textit{$\prompt$:} is the program that constructs the GM input from the typed input variables and the task description;
(f) \textit{$\fallback$:} is a non-parametric fallback program that satisfies the local specifications $(\linspec{}, \loutspec{})$; and (g) \textit{info:} is a textual description of the defined FGGM, utilized by the planner LLM during the generation of the parametric program.
\Tool parses these inputs, validates the definition, and automatically synthesizes a rejection sampler with fallback, as shown in Fig.~\ref{fig:rejectionSampler} where $\llm{\Theta}$ is the parametric GM specified by GMid. The defined FGGM is then invoked in the candidate program via its identifier $\funcIt{id}$. For valid definitions, \Tool synthesizes $\eval$, which checks whether a concrete input--output pair $(x_1, \dots, x_n, y)$ satisfies the proposed contract $(\linspec{}, \loutspec{})$, as elaborated in \secRef~\ref{subsec:FGGMcomputability}.

\noindent We provide the BNF grammar for parsing the FGGM definition in Appendix~\ref{appen:FGGMsyntax}. Next, we discuss correctness checks for a proposed FGGM definition. Provided the \Tool validates definition including the fallback $\fallback$, $\forall \theta \in \Theta. \forall x_1 \in T_1\dots\forall x_n \in T_n. \linspec{}(x_1,\dots, x_n) \implies \loutspec{}(x_1,\dots,x_n, \funcIt{id}(x_1, \dots, x_n))$ holds ensuring formal contracts are preserved for all parametric values $\theta \in \Theta$. 

\subsubsection{FGGM Well-formedness}
\label{sec:wellFormedFGGM}
Given an FGGM definition $\fggm{} = (\funcIt{id_{\fggm}}, f^{\paramset{}}_{\fggm{}}, \typs{\fggm{}}, \linspec{}, \loutspec{}, \prompt, \fallback, info)$ following Eq.~\ref{eq:FGGMDefBNF}, this step checks whether the definition is well-formed. Let the parametric GM identified by \textit{GMid} be $\llm{\Theta} : T_{in} \rightarrow T_o$. 
\Tool first checks the well-formedness of the type signature $\typs{\fggm{}} = (T_1 \times \cdots \times T_n) \to T_o$ and the well-formedness of the local contracts $(\linspec{}, \loutspec{})$.
We then verify that all terms in $\linspec{}$ and subsequently in $\loutspec{}$ that involve non-parametric functions from $\fset{c}$ are valid (details in Appendix~\ref{appen:FGGMvalidity}). 
The $\fggm{}$ definition is considered valid provided both $\prompt$ and $\fallback$ are type-checked and terminating, and the fallback $\fallback$ satisfies the contract $\ver{\linspec{}}{\loutspec{}}(\fallback)$.  
After the validity check, \Tool constructs  
$\eval : (T_1 \times \cdots \times T_n \times T_o) \to \set{T, F}$  
that checks whether a concrete input–output tuple $(x_1, \dots, x_n, y)$ satisfies the local output contract $\loutspec{}$ (see \secRef~\ref{subsec:FGGMcomputability}).  
We provide the pseudocode of $\eval$ along with a soundness proof showing $\axiomEncode{\axiom{}} \implies \big(\forall x_1 \in T_1, \dots, \forall x_n \in T_n, \forall y \in T_o.\;
\linspec{}(x_1, \dots, x_n) \implies \left(\eval(x_1, \dots, x_n, y) \implies \loutspec{}(x_1, \dots, x_n, y)\right)\big)$  
in Appendix~\ref{appen:proofDetails}. Furthermore, for quantifier-free $\loutspec{}$, the checker is complete (Lemma~\ref{lem:checkerComplete}).
\begin{restatable}[Completeness of Checker]{lemma}{checkerComplete}
\label{lem:checkerComplete}
For any valid FGGM $\fggm{}$ with quantifier-free $\loutspec{}$,  
$\forall x_1 \in T_1, \dots, \forall x_n \in T_n, \forall y \in T_o.\;
\linspec{}(x_1, \dots, x_n) \implies \left(\eval(x_1, \dots, x_n, y)
\iff \loutspec{}(x_1, \dots, x_n, y)\right)$.
\end{restatable}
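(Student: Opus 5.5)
The plan is a structural induction on the syntax of the quantifier-free formula $\loutspec{}$, exploiting the fact that the checker $\eval$ is built compositionally and never needs to search over a domain. Fix a valid FGGM $\fggm{}$ and concrete values $x_1 \in T_1, \dots, x_n \in T_n$, $y \in T_o$ with $\linspec{}(x_1,\dots,x_n)$. We take $\eval$ to be the evaluator described in \secRef~\ref{subsec:FGGMcomputability}. It evaluates every term of $\loutspec{}$ bottom-up by actually executing the non-parametric library functions in $\fset{c}$, together with the built-ins $\bin{a}$ and relations $\bin{r}$. It then combines the resulting truth values with the connectives in $\bin{b}$. The soundness direction ($\eval \implies \loutspec{}$) is already established in Appendix~\ref{appen:proofDetails}, so only the converse remains. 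We nevertheless prove the biconditional directly, since the induction gives both directions at once.

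\textbf{Step 1 (terms).} We show by induction on terms $t$ that, whenever the precondition of every function application in $t$ holds, the value computed by the checker equals the denotation of $t$ under the assignment $(x_1,\dots,x_n,y)$. Variables and constants are immediate. For an application $f_m(t_1,\dots,t_m)$ with $f_m \in \fset{c}$, the argument values agree by the induction hypothesis. The function $f_m$ is pure and terminating by the standing library assumptions, so the checker obtains exactly $f_m$ applied to those values, which is the interpreted value of the term. Here we use that the semantics of $\loutspec{}$ is the one given by the actual library functions, with $\axiom{}$ only an over-approximation of that semantics.

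\textbf{Step 2 (well-definedness of preconditions).} The delicate point is that a term such as $\funcIt{sqrt}(t)$ has a meaningful value only when $\pre{sqrt}(t)$ holds. Validity of $\fggm{}$ (Appendix~\ref{appen:FGGMvalidity}) guarantees that every term in $\loutspec{}$ is well-formed: under $\linspec{}$, and under the guards encountered along the evaluation path of the connectives, the preconditions of all applied library functions hold. We will need to check that the checker's evaluation order respects exactly those guards. For example, $A \implies B$ should evaluate $B$ only when $A$ is true, mirroring Dafny's well-formedness discipline. With this in place, Step 1 applies to every term the checker actually evaluates. We expect this step to be the main obstacle, because it relies on the precise form of the validity check. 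If that check is stated in Dafny's short-circuit style, the argument goes through by matching the two traversals.

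\textbf{Step 3 (formulas).} By induction on the quantifier-free formula, we show that $\eval$ returns $T$ exactly when the formula is true. Atomic relations and predicates from $\fset{c}$, such as $\funcIt{noDiff}$ or $\funcIt{parse}$, are decided by comparing or executing on the values from Step 1, so they agree with their truth values. The connectives in $\bin{b}$ are evaluated truth-functionally, and the induction hypothesis applies to the subformulas that are actually evaluated (Step 2). Because no quantifiers appear, no case requires an unbounded search; this is exactly where completeness can fail in the quantified case. Since each library call terminates, $\eval$ halts and returns the truth value of $\loutspec{}(x_1,\dots,x_n,y)$. This gives the biconditional for all inputs satisfying $\linspec{}$.
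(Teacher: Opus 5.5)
Your Steps 1 and 3 are the paper's proof. The paper also proceeds by structural induction on the quantifier-free $\loutspec{}$. It settles the atomic case by executing the computable functions of $\fset{c}$ on the concrete values, and the $\neg$, $\wedge$, $\vee$ cases truth-functionally. Your separate induction on terms just spells out that base case.

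The problem is Step 2, which you leave open and on which you make Step 1 depend. In this paper that obstacle does not exist. Every $f_n \in \fset{c}$ is assumed to halt on all well-typed inputs and return a value; $\pre{f_n}$ only delimits where the contract $\post{f_n}$ is guaranteed. So every term has a value under any concrete assignment, and the checker computes exactly that value. Hence Step 1 holds unconditionally. Like the paper, you can then prove the biconditional for every concrete tuple without using $\linspec{}(x_1,\dots,x_n)$ at all.

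You should drop Step 2 rather than try to complete it. The traversal-matching argument you sketch is not supported by the paper's definitions anyway. The checker (Algorithm~\ref{alg:evalChecker}) recursively evaluates both operands of $\wedge$ and $\vee$ with no guard-sensitive order, and it has no case for $\implies$. The term check in Appendix~\ref{appen:FGGMvalidity} is also not stated in short-circuit form.

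Finally, you are right not to rely on the soundness direction, although you say it is already established in the appendix. For quantifier-free $\loutspec{}$, the paper obtains Lemma~\ref{lem:checkerSound} from this very lemma, so citing it here would be circular.
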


\begin{proofSketch}
We prove this by structural induction on the quantifier-free formula $\loutspec{}$.
\emph{Base case:} If $\loutspec{}$ is an atomic predicate over terms built from computable library functions in $\fset{c}$, then under any concrete substitution $(x_1,\dots,x_n,y)$, $\eval$ evaluates the same computable terms and returns the same boolean value as $\loutspec{}$.
\emph{Inductive step:} Assume the claim holds for formulas $\phi$ and $\psi$. For compound formulas $\neg \phi$, $\phi \wedge \psi$, and $\phi \vee \psi$, $\eval$ recursively evaluates the subformulas and applies the corresponding boolean operator, and returns the result.
Thus, by induction on the structure of $\loutspec{}$, $\eval(x_1,\dots,x_n,y)$ returns true iff $\loutspec{}(x_1,\dots,x_n,y)$ holds whenever $\linspec{}(x_1,\dots,x_n)$ holds. Hence, the checker is complete. Formal details in Appendix~\ref{appen:proofDetails}.
\end{proofSketch}

\noindent For any valid FGGM $\fggm{}$, the rejection sampler  
$\funcIt{id_{\fggm{}}} : T_1 \times \cdots \times T_n \to T_o$  
with fallback synthesized by \Tool (Fig.~\ref{fig:rejectionSampler}) satisfies the defined contract $(\linspec{}, \loutspec{})$ for all well-typed inputs $(x_1, \dots, x_n)$. Crucially the contract preservation holds over all parameters $\theta \in \Theta$ of the underlying parametric model $f^{\paramset{}}_{\fggm{}}$.  
The function $\funcIt{id_{\fggm{}}}$ terminates on all inputs since the number of samples $K$ is finite and $\prompt$, $\fallback$, and $\eval$ are all terminating. We provide a soundness proof sketch in Theorem~\ref{thm:FGGMsoundness}.
\begin{restatable}[Valid Local contract]{theorem}{localContract}
\label{thm:FGGMsoundness}
For any valid FGGM $\fggm{}$ with $(\linspec{}, \loutspec{})$ and parametric GM $f^{\paramset{}}_{\fggm{}}$,  
$\axiomEncode{\axiom{}}\implies\big(\forall \theta \in \Theta, \forall x_1 \in T_1, \dots, \forall x_n \in T_n.\;
\linspec{}(x_1, \dots, x_n)
\implies
\loutspec{}(x_1, \dots, x_n, \funcIt{id_{\fggm{}}}(x_1, \dots, x_n))\big)$.
\end{restatable}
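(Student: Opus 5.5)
We argue directly from the structure of the synthesized rejection sampler in Fig.~\ref{fig:rejectionSampler}. Fix an arbitrary $\theta \in \Theta$ and arbitrary well-typed inputs $x_1 \in T_1, \dots, x_n \in T_n$ with $\linspec{}(x_1,\dots,x_n)$, and assume $\axiomEncode{\axiom{}}$. Since $\prompt$, $\fallback$ and $\eval$ are terminating (validity of $\fggm{}$) and the loop runs at most $K$ times, $\funcIt{id_{\fggm{}}}(x_1,\dots,x_n)$ is defined and returns a value $y^\ast$. Each call to $\llm{\theta}$ returns some element of $T_o$ (of $\alpN{\leq n}$ for LLMs). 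We never use any property of this value beyond its type, so the argument is uniform in $\theta$ and in the sampling randomness. We then split on which return statement produced $y^\ast$.

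\emph{Case 1 (block \circled{1}).} Some iteration $i \le K$ returned its sample $y$ because $\eval(x_1,\dots,x_n,y)$ evaluated to true, so $y^\ast = y$. We invoke the soundness of the checker stated in \secRef~\ref{sec:wellFormedFGGM} and proved in Appendix~\ref{appen:proofDetails}:
\[
\axiomEncode{\axiom{}} \implies \forall \vec{x}, y.\ \linspec{}(\vec{x}) \implies \big(\eval(\vec{x},y) \implies \loutspec{}(\vec{x},y)\big).
\]
Instantiating this at our $\vec{x}$ and $y$ gives $\loutspec{}(x_1,\dots,x_n,y^\ast)$. Completeness (Lemma~\ref{lem:checkerComplete}) is not needed here; only the soundness direction matters.

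\emph{Case 2 (block \circled{2}).} All $K$ samples were rejected, so $y^\ast = \fallback(x_1,\dots,x_n,y)$, where $y$ is the last sample. Validity of $\fggm{}$ requires $\ver{\linspec{}}{\loutspec{}}(\fallback)$ to succeed. By the soundness of the Dafny verifier under context $\context$ (axioms treated as hypotheses), this yields
\[
\axiomEncode{\axiom{}} \implies \forall \vec{x}, y.\ \linspec{}(\vec{x}) \implies \loutspec{}(\vec{x}, \fallback(\vec{x},y)).
\]
Instantiating this gives the claim. For instance, in the Dafny example the reflexivity axiom (Eq.~\ref{eq:axiomDafny}) is exactly what discharges this obligation. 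Because $\fallback$ is non-parametric, this case does not depend on $\theta$ at all.

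Since the two cases are exhaustive and $\theta$, $\vec{x}$ were arbitrary, the universally quantified statement follows. The main obstacle is making Case 2 precise. The verifier's guarantee is relative to the axioms and to the over-approximate contracts of $\fset{c}$, so we must state the verifier-soundness assumption explicitly: a successful $\ver{\cdot}{\cdot}$ check means the contract is entailed by $\axiomEncode{\axiom{}}$. We must also check that the extra argument $y$ passed to $\fallback$ is universally quantified in that obligation, so that it may be an arbitrary, possibly adversarial, GM output. A secondary subtlety is nondeterminism of $\llm{\theta}$. We handle it by proving the claim for every possible execution trace, that is, for every sequence of sampled outputs, which subsumes every $\theta$.
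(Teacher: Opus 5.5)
Your proposal is correct and matches the paper's proof. Both arguments split on whether the returned value comes from the accepting branch of the rejection loop, discharged by checker soundness (Lemma~\ref{lem:checkerSound}), or from the fallback call $\fallback(x_1,\dots,x_n,y_f)$, discharged by the fallback's verified contract, and both conclude uniformly in $\theta$; your remarks on termination, on the rejected sample being universally quantified in the fallback obligation, and on reasoning per execution trace only make explicit what the paper leaves implicit.
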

\begin{proofSketch}
Let $r = \funcIt{id_{\fggm{}}}(x_1, \dots, x_n)$. Based on the rejection sampler with the checker $\eval$, the following condition always holds for the output $r$ (Eq.~\ref{eq:FGGMsound1}). Eq.~\ref{eq:FGGMsound2} follows from the validity of the fallback $\fallback$, and Eq.~\ref{eq:FGGMsound3} follows from the soundness of the checker $\eval$. To simplify the notation, we used $(\forall x_i\in T_i)$ to denote $(\forall x_1 \in T_1, \dots, \forall x_n \in T_n)$ and $y_f$ is the final rejected sample if $\fallback$ is used.
\begin{small}
\begin{align}
&\axiomEncode{\axiom{}}\implies (\forall \theta \in \Theta. (\forall x_i\in T_i). \linspec{}(x_1, \dots,x_n) \implies\eval(x_1,\dots,x_n, r) \vee (r = \fallback(x_1,\dots,x_n,y_f)) \label{eq:FGGMsound1}\\
&\axiomEncode{\axiom{}}\implies\forall \theta \in \Theta. (\forall x_i\in T_i). \linspec{}(x_1, \dots,x_n) \wedge (r = \fallback(x_1,\dots,x_n, y_f)) \implies \loutspec{}(x_1,\dots, x_n, r) \label{eq:FGGMsound2}\\
&\axiomEncode{\axiom{}}\implies(\forall \theta \in \Theta. (\forall x_i\in T_i). (\linspec{}(x_1, \dots,x_n) \wedge \eval(x_1,\dots,x_n, r)) \implies \loutspec{}(x_1,\dots, x_n, r)) \label{eq:FGGMsound3}\\
&\axiomEncode{\axiom{}}\implies(\forall \theta \in \Theta. (\forall x_i\in T_i). \linspec{}(x_1, \dots,x_n) \implies \loutspec{}(x_1, \dots, x_n, r))\;\;\;\; \text{  Using Eq.~{(\ref{eq:FGGMsound1}, \ref{eq:FGGMsound2}, \ref{eq:FGGMsound3}}}) \nonumber
\end{align}    
\end{small}
We provide the detailed formal proof in Appendix~\ref{appen:proofDetails}.
\end{proofSketch}    

\subsubsection{Computability of $\eval$ (Fig.~\ref{fig:rejectionSampler}) on concrete values} 
\label{subsec:FGGMcomputability} 
The auto-synthesized rejection sampler (Fig.~\ref{fig:rejectionSampler}) requires checking whether a sample $y$ from $\llm{\Theta}$, given a concrete input $(x_1, \dots, x_n)$, satisfies the planner LLM--defined $\loutspec{}$. We show that for any \textit{quantifier-free} $\loutspec{}$, this check can be computed efficiently for all concrete values $(x_1, \dots, x_n, y)$. Since all library functions in $\fset{c}$ are computable, $\eval$ can evaluate every term on the concrete inputs $(x_1, \dots, x_n, y)$. Satisfiability can then be decided in linear time in the number of terms in $\loutspec{}$. We provide formal completeness for the quantifier-free fragment in Lemma~\ref{lem:checkerComplete}.
For $\loutspec{}$ with quantifiers, we instantiate $\eval$ with the axioms $\axiom$. 
Then, $\eval$ substitutes the free variables in $\loutspec{}$ with their corresponding concrete values, encodes the axioms $\axiom$ and $\linspec{}$ similarly to Eq.~\ref{eq:axiomImplication}, and submits the resulting formula to an SMT solver. First-order formulas over the basic types $\typeSet$ and uninterpreted library functions may lie outside decidable fragments. Therefore, we bound the SMT solver with a user-specified timeout. $\eval$ returns true only if the solver succeeds within the timeout; otherwise, it returns false. This guarantees soundness; however, in the presence of quantifiers, $\eval$ may be incomplete and may incorrectly reject valid samples $y$.

\begin{wrapfigure}{r}{0.48\textwidth} 
\vspace{-10pt}
\resizebox{\linewidth}{!}{%
\begin{minipage}{\linewidth}
\begin{algorithm}[H]
\footnotesize
\caption{searchVerify}
\label{alg:searchVerify}
\begin{algorithmic}[1]
\State \textbf{Input:} $\mathcal{I} =(\fset{},\axiom{},\inspec{},\outspec{}, I)$, planner $\planner{}$, budget $\delta$
\State \textbf{Input:} Previous attempts feed-back $I'$ 
\State \textbf{Output:} Verified Parametric Program $\program{}$ else $\bot$ 
\State $i \gets 0$, $\msf{err} \gets \{\}$
\While{$i < \delta$}
    \State $\fggmSet \gets \funcIt{defineFGGM}(\planner{}, I,\fset{},\axiom{}, I', \msf{err})$
    \State $\msf{err} \gets \funcIt{validateFGGM}(\fggmSet{}, \fset{c},\axiom{})$
    \If{$\msf{err} \neq \set{}$}
    \State $i \gets i+1$;\;\;\textbf{continue}
    \EndIf
    \State $\program{}  \gets \funcIt{sampleProgram}(\planner{}, I, \fset{},\axiom{}, I', \fggmSet{}, \msf{err})$
    \State $\context \gets (\dg, \fset{}\cup\fggmSet{}, \axiom{} \cup \axiom{}_{\fggmSet{}})$
    \State $\msf{err} \gets \funcIt{validateProgram}(\program{}, \parser{}, \ter{},\ver{\inspec{}}{\outspec{}})$
    \If{$\msf{err} = \set{}$}
    \State \Return $\program{}$
    \EndIf
    \State $i \gets i + 1$
\EndWhile
\State \Return $\bot$
\end{algorithmic}
\end{algorithm}
\end{minipage}%
}
\end{wrapfigure}

\subsubsection{Search and Verify}
The search and verification steps jointly implement a CEGIS-style loop that explores candidate parametric programs while enforcing the behavioral specification.
In the \textbf{search} phase, the planner LLM $\planner{}$ first synthesizes a set of FGGM definitions
$\fggmSet = \set{\fggm{}_1,\dots,\fggm{}_m}$ following Eq.~\ref{eq:FGGMDefBNF}. 
Each FGGM wraps a parametric model with local contracts and a verified fallback.  
Using these FGGMs as callable functions, the planner then samples a candidate parametric program $\program{}$ where $\fullParamSet{} = \set{\paramset{}_1, \dots, \paramset{}_k}$ represents the set of all optimizable parameters in $\program{}$. As mentioned in \secRef~\ref{sec:agentExamples}, FGGM calls at different program locations do not share parameters and add their own parameter in $\fullParamSet{}$.
In the verification phase, \Tool first checks the well-formedness of every $\fggm{}_i \in \fggmSet$. If any of the definitions fail to verify it returns an error message.   
If all definitions are valid, the verification context $\context = (\dg, \fset{}, \axiom{})$ is extended with the synthesized FGGMs and their local contracts. The updated context
$\context' = (\dg, \fset{} \cup \fggmSet{}, \axiom{} \cup \axiom_{\fggmSet{}})$ adds all FGGMs $\fggmSet{}$ along with $\axiom_{\fggmSet{}}$ containing local contracts $(\fggm{}_i^{\linspec{}},\fggm{}_i^{\loutspec{}})$ of each FGGM $\fggm{}_i \in \fggmSet{}$.
The candidate program $\program{}$ is then checked for syntactic validity, termination, and compliance with the behavioral specification $(\inspec{}, \outspec{})$. If verification succeeds, \Tool accepts the parametric program $\program{}$  and ensures that $\program{}$ satisfies $(\inspec{}, \outspec{})$ over all parameters (Theorem~\ref{thm:verificationParamSet}).
\noindent Algorithm~\ref{alg:searchVerify} summarizes the combined search–verification loop. 

Given the input specification $\mathcal{I}$ and planner $\planner{}$, the algorithm iteratively proposes candidate solutions within a fixed budget $\delta$ (Lines~1--4). 
In each iteration, the planner first synthesizes a set of FGGM definitions $\fggmSet$ (Line~6), which are then checked for well-formedness using $\funcIt{validateFGGM}$ (Line~7). Within $\fggmSet$, the planner LLM $\planner{}$ also synthesizes the prompting functions $\prompt{}$, enabling each FGGM-specific customization.  
If the definitions are valid, the planner samples a candidate parametric program $\program{}$ that may invoke these FGGMs (Line~11), and the verification context is extended with their contracts (Line~12). 
The candidate program is then validated for syntactic correctness, termination, and compliance with the behavioral specification using $\funcIt{validateProgram}$ (Line~13). 
If verification succeeds, the program is returned (Lines~14--15); otherwise, the loop continues with the error feedback until the search budget is exhausted (Lines~16--18).
\begin{restatable}{theorem}{searchVerify}
\label{thm:verificationParamSet}
If $\;\program{} \neq \bot$ with FGGM set $\fggmSet{}$, then $\forall \theta_1 \in \Theta_1, \dots, \forall \theta_k \in \Theta_k. \axiomEncode{\axiom{}} \implies \big(\forall x_1 \in T_1, \dots, \forall x_n \in T_n. \inspec{}(x_1, \dots, x_n) \implies \outspec{}(x_1, \dots, x_n, \programSub{(\theta_1, \dots, \theta_k)}(x_1, \dots, x_n))\big)$.
\end{restatable}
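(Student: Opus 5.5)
The plan is to unfold what $\program{} \neq \bot$ means in Algorithm~\ref{alg:searchVerify}, and then to lift the verifier's guarantee from the abstract, contract-level semantics of FGGM calls to every concrete parameter instantiation. Algorithm~\ref{alg:searchVerify} returns a non-$\bot$ program only at Line~15. Reaching Line~15 requires two earlier checks to have passed. First, $\funcIt{validateFGGM}$ returned no errors, so every $\fggm{}_i \in \fggmSet{}$ is valid. Second, $\funcIt{validateProgram}$ returned no errors under the extended context $\context' = (\dg, \fset{}\cup\fggmSet{}, \axiom{}\cup\axiom{}_{\fggmSet{}})$, so $\parser[\context'](\program{})$, $\ter[\context'](\program{})$ and $\ver{\inspec{}}{\outspec{}}[\context'](\program{})$ all hold. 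I will take as the semantic assumption on the verifier that it is sound with respect to its context. If it accepts a program, then for every interpretation of the callable functions that is pure, terminating and satisfies the axioms in $\axiom{}\cup\axiom{}_{\fggmSet{}}$, the program satisfies $(\inspec{},\outspec{})$. In that reasoning each FGGM identifier is treated as an uninterpreted function constrained only by its contract $\linspec{}\implies\loutspec{}$, and Dafny's standard modular treatment of function calls is what justifies this.

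Next, fix arbitrary $\theta_1\in\Theta_1,\dots,\theta_k\in\Theta_k$ and assume $\axiomEncode{\axiom{}}$. Since FGGM calls at distinct locations carry separate parameter sets, the substitution $\programSub{(\theta_1,\dots,\theta_k)}$ replaces each call site $j$ by the concrete rejection sampler $\funcIt{id}_{\fggm{}}$ built from $f^{\theta_j}_{\fggm{}}$. For each such call I invoke Theorem~\ref{thm:FGGMsoundness}. Under $\axiomEncode{\axiom{}}$, the instantiated sampler satisfies $\linspec{}\implies\loutspec{}$ for all inputs, for that specific $\theta_j$. Consequently the concrete interpretation models every formula in $\axiom{}_{\fggmSet{}}$, and it also models $\axiom{}$ by assumption. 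Each instantiated sampler terminates, because $K$ is finite and $\prompt$, $\fallback$ and $\eval$ terminate. It is also deterministic given its sampling randomness. Hence the concrete interpretation lies within the class quantified over by the verifier's soundness.

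Applying verifier soundness to this interpretation then gives $\forall x_1,\dots,x_n.\ \inspec{}(x_1,\dots,x_n)\implies\outspec{}(x_1,\dots,x_n,\programSub{(\theta_1,\dots,\theta_k)}(x_1,\dots,x_n))$. Termination from $\ter$ ensures the output is defined. Since the $\theta_j$ were arbitrary, the universally quantified statement follows.

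The main obstacle is the stochasticity of FGGM calls. The sampler is not a mathematical function of its inputs, whereas the verifier models it as one. I would handle this by treating each call as a function of its inputs together with an arbitrary fixed random seed, or equivalently each execution trace. I would then note that Theorem~\ref{thm:FGGMsoundness} holds for every outcome of the $K$ samples, since Eq.~\ref{eq:FGGMsound1} is pathwise. The argument above therefore holds per execution, and the verifier only relies on the contract, never on functional consistency across different calls. A secondary point to state explicitly is that timeouts in $\ver{\cdot}{\cdot}$ and $\ter$ only cause false negatives, so acceptance still implies soundness.
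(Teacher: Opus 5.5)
Your proposal is correct and follows essentially the paper's argument: Theorem~\ref{thm:FGGMsoundness} discharges the local-contract axioms $\axiom_{\fggmSet}$ for every parameter value, and soundness of the verifier under the extended context $\context'$ then yields $(\inspec{},\outspec{})$; you fix $\theta_1,\dots,\theta_k$ up front and argue via interpretations, whereas the paper folds $\forall\theta_i$ into each contract axiom and chains implications. Your extra treatment of per-call-site parameters and stochastic sampling goes beyond the paper, but it rests on the verifier reasoning about each FGGM call through its contract alone, with no congruence between calls on equal arguments (which Dafny does assume for \emph{functions}); the paper also leaves this assumption implicit.
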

\begin{proofSketch}
Follows from Theorem~\ref{thm:FGGMsoundness} with details in Appendix~\ref{appen:proofDetails}.
\end{proofSketch}

\subsubsection{Learn}
\label{subsec:learningDetails}
Once a candidate parametric program $\program{}$ passes the search–verification loop (Algorithm~\ref{alg:searchVerify}), the verification step guarantees that $\program{}$ satisfies the behavioral specification $(\inspec{},\outspec{})$ for \emph{all} parameter values (Theorem~\ref{thm:verificationParamSet}). Hence, the learning step can freely select parameters without violating $(\inspec{}, \outspec{})$. Finding the optimal solution to the general formulation in Eq.~\ref{eq:learnObjective} over $\fullParamSet{}$ for a general loss $\loss{}$ is practically intractable. Instead, \Tool employs a scalable gradient-descent–based optimization to efficiently search over $\fullParamSet{}$.
\begin{small}
\begin{align}
(\opt{\theta_1},\dots, \opt{\theta_k}) = \argmin_{(\theta_1,\dots, \theta_k) \in \fullParamSet{}} 
\;\; 
{\textstyle \sum_{(x,y)\in\data} \loss{}(x,y,\programSub{}(x))} 
\label{eq:learnObjective}
\end{align}    
\end{small}
Conformance loss $\conformLoss{}$ for each local contract $(\linspec{}, \loutspec{})$ encourages the parameters to produce outputs that satisfy their local contracts. By improving conformance with these contracts, \Tool increases the success rate of the corresponding rejection sampler and thereby avoids triggering the static non-parametric fallback program. Since the fallback cannot be tuned, reducing its usage allows the gradient-based search to more effectively guide the parameters toward solutions that reduce the final loss $\loss{}$.
 Our experiments in \secRef~\ref{sec:ablation} substantiate this observation, showing that reducing reliance on the fallback improves task performance.
 For a single FGGM with local contracts $(\linspec{}, \loutspec{})$ and parameter $\theta$, we define the conformance loss over an input set $\mathbb{P}$ in Eq.~\ref{eq:conformanceLoss}. The inner term $\conformLoss^{\theta}(p)$ measures the probability that a sampled output from $\llm{\theta}(p)$ violates $\eval$, while $\conformLoss(\theta)$ aggregates this violation across inputs.
\begin{small}
\begin{align}
\underbrace{\conformLoss(\theta) = \tfrac{1}{|\mathbb{P}|} \times {\textstyle \sum_{p \in \mathbb{P}} \conformLoss^{\theta}(p)}}_{\text{Aggregated constraint violation over $\mathbb{P}$}}, \;\; \text{ where  }\conformLoss^{\theta}(p) = \underbrace{\mathbb{E}_{y\sim\llm{\theta}(p)} 1 - \mathbb{I}(\eval(x_1,\dots,x_n, y))}_{\text{Expected constraint violation on single GM input $p$}} \label{eq:conformanceLoss}
\end{align}
\end{small}
The augmented objective, which includes a local conformance loss for each FGGM call, is defined in Eq.~\ref{eq:augmentedLoss}, where $\lambda \in \mathbb{R}^{+}$ is a user-provided constant.
\begin{small}
\begin{align}
 {\textstyle \sum_{(x,y)\in\data} \loss{}(x,y,\programSub{}(x))}\; +\;\lambda \times {\textstyle \sum_{i \in k, (x,y)\in\data} \conformLossParam{i}(\theta_i)}   \label{eq:augmentedLoss} 
\end{align}    
\end{small}


\noindent Here, the first term is the task-specific loss over the dataset $\data$, and the second term aggregates the local conformance losses (Eq.~\ref{eq:conformanceLoss}) of the FGGMs appearing in the program. Optimizing all parameters jointly according to Eq.~\ref{eq:augmentedLoss}, while feasible for smaller neural networks, becomes practically expensive for agents with multiple interdependent LLM calls.
To address this challenge, we leverage a key structural property: the conformance loss $\conformLossParam{i}(\theta_i)$ for each FGGM depends only on the local inputs to that FGGM call and the outputs of its underlying GM, not on the parameters of other FGGMs. This independence arises because each FGGM's rejection sampler and $\eval$ checker operate solely on the local contract $(\linspec{}^i, \loutspec{}^i)$ and the corresponding GM's outputs. As a result, the conformance term decomposes naturally across FGGM calls. Following this observation, for larger models we approximate the augmented loss by optimizing each FGGM $\funcIt{id_{\theta_i}}$ with parameter $\theta_i$ independently, as shown in Eq.~\ref{eq:decomposedLoss}. Here, the first term includes the task-specific loss $\loss{}$ only when the output $y_l$ of $\funcIt{id_{\theta_i}}$ matches the final output $y$ of the agent on input $x$, where $(x, \_) \in \data$. The modularized loss (Eq.~\ref{eq:decomposedLoss}) enables efficient parallelized training across different $\theta_i$.
\begin{small}
\begin{align}
& \loss{}_a(\theta_i) = \tfrac{1}{|\mathbb{P}|} \times {\textstyle \sum_{p \in \mathbb{P}} \loss{}_a^{\theta_i}(p)};\;\;\;\;    \loss{}^{\theta_i}_{a}(p) = \mathbb{E}_{y_l\sim\funcIt{id_{\theta_i}}(p)} \loss{}(x, \_, y)\times \mathbb{I}(y = y_l)+ \lambda \times \conformLoss^{\theta_i}(p) \label{eq:decomposedLoss} \\
&\reward(p, y_l) = 1 - \funcIt{Sigmoid}\big(\loss{}(x, \_, y)\times \mathbb{I}(y = y_l) + \lambda \times (1 - \mathbb{I}(\eval(p, y_l))\big) \label{eq:rewardFunc}
\end{align}
\end{small}
\noindent For generative models with accessible parameters, the objective in Eq.~\ref{eq:decomposedLoss} can be optimized using reinforcement learning–based fine-tuning methods such as GRPO \cite{grpo} (details in Appendix~\ref{appen:GRPO}). Eq.~\ref{eq:rewardFunc} defines the reward for an input-output pair $(p, y_l)$, with the goal of learning an output distribution over $y_l$ for inputs $p \in \mathbb{P}$ that maximizes total reward $\reward(p, y_l)$. The reward function combines two signals: (1) the task-specific loss $\loss{}$, which drives the GM toward outputs that reduce overall task error, and (2) the conformance indicator $\mathbb{I}(\eval(p, y_l))$, which rewards outputs satisfying the local contract. The sigmoid transformation maps the combined penalty to the $[0, 1]$ range, providing a smooth reward landscape. Outputs that violate the local contract receive near-zero reward due to the $(1 - \mathbb{I}(\eval(p, y_l)))$ penalty term, strongly discouraging the GM from producing outputs that would be rejected by the rejection sampler. The reward function need not be differentiable w.r.t. $\theta_i$, as long as the outputs $y_l$ remain differentiable w.r.t. $\theta_i$ \cite{grpo}. For closed-source models with inaccessible parameters, the learning step is skipped, and performance improvements rely solely on prompt tuning through the synthesized $\prompt$ programs within each FGGM.

\subsubsection{\Tool}
\noindent Algorithm~\ref{alg:tool} summarizes the overall workflow of \Tool. 
The algorithm maintains a pool $\mathcal{P}$ of verified candidate agents and iteratively improves it through a search–verify–learn loop. 
In each iteration, \Tool first invokes the \funcIt{searchVerify} procedure (Algorithm~\ref{alg:searchVerify}), which performs CEGIS-style discrete search to synthesize FGGM definitions and candidate parametric program $\program{}$  that satisfy the behavioral specification $(\inspec{},\outspec{})$. 
If verification succeeds, the program is passed to the \textsc{learn} procedure, which optimizes the parameters of the underlying generative models using the dataset $\data$ and task loss $\loss{}$.  For closed-source models with inaccessible parameters, this step is skipped, and performance improvements rely solely on prompt tuning through the synthesized $\prompt$ programs in each FGGM.

\begin{wrapfigure}{r}{0.51\textwidth}
\vspace{-1.75em}
\footnotesize
\captionsetup{type=algorithm}
\resizebox{\linewidth}{!}{%
\begin{minipage}{\linewidth}

\begin{algorithm}[H]
\footnotesize
\caption{ \Tool }
\label{alg:tool}
\begin{algorithmic}[1]
\State \textbf{Input:} $\mathcal I = (\fset{},\axiom{},\loss{},\data{},\inspec{},\outspec{}, I)$, planner $\planner{}$
\State \textbf{Input:} total budget $\Delta$, budget per candidate $\delta$
\State \textbf{Output:} Best agent $\opt{f}$ or $\bot$ if search fails

\State $\mathcal{P} \gets \{\}$ \Comment{Pool of verified candidates}
\State $I' \gets \emptyset$

\While{$\delta \leq \Delta$}
    \State $\program{} \gets \funcIt{searchVerify}(\fset{},\axiom{},\inspec{},\outspec{}, I, \planner{}$, $\delta, I')$
    \State $\Delta \gets \Delta - \delta$
    \If{$\program{} = \bot$}
        \State \textbf{continue}
    \EndIf
\If{$\set{\Theta} \neq \set{}$}
    \State $(\opt{\theta_1},\dots,\opt{\theta_k}) \gets \funcIt{learn}(\program{}, \loss{}, \data)$
    \State $\mathcal{P} \gets \mathcal{P} \cup \{\programOptSub{}\}$
\Else
    \State $\mathcal{P} \gets \mathcal{P} \cup \{\program{}\}$
\EndIf
    \State $I' \gets \funcIt{collectFeedback}(\program{}, \data)$
\EndWhile

\State $\opt{f} \gets \arg\min_{f \in \mathcal{P}}
\sum_{(x,y)\in\data}\loss{}(x,y,f(x))$

\State \Return $\opt{f}$
\end{algorithmic}
\end{algorithm}
\end{minipage}
}
\vspace{-3em}
\end{wrapfigure}

\noindent The resulting tuned agent is added to the candidate pool, and its execution traces on $\data$ are used to construct feedback $I'$ (see Appendix \ref{appen:plannerFeedback}) that guides the planner in subsequent search iterations. 
After the search budget $\Delta$ is exhausted, \Tool returns the agent in the pool that achieves the lowest task loss on the dataset. If $\mathcal{P}$ is empty, \Tool returns $\bot$. 
\subsection{Theoretical Results}
\label{subsec:formalProof}
The soundness of \Tool guarantees that any agent program $\opt{f} \neq \bot$ returned satisfies the behavioral specifications $(\inspec{}, \outspec{})$. The \funcIt{searchVerify} step generates candidate programs $\program{}$ that satisfy $(\inspec{}, \outspec{})$ for all $(\theta_1, \dots, \theta_k) \in \fullParamSet{}$. Consequently, the parameters $(\opt{\theta_1}, \dots, \opt{\theta_k})$ predicted by the learning step preserve constraint satisfaction. The pool $\mathcal{P}$ is either empty or contains only programs that satisfy the behavioral specifications.
\begin{restatable}[Soundness]{theorem}{soundnessThm}
\label{thm:soundness}
If $\opt{f} \neq \bot$ then, $\axiomEncode{\axiom{}} \implies \forall x_1 \in T_1, \dots, \forall x_n \in T_n .\inspec{}(x_1, \dots, x_n) \implies \outspec{}(x_1, \dots, x_n, \opt{f}(x_1, \dots, x_n))$.
\end{restatable}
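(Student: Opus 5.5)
We argue from the structure of Algorithm~\ref{alg:tool}. The plan is to establish an invariant on the candidate pool $\mathcal{P}$: at every point during execution, every element of $\mathcal{P}$ satisfies $(\inspec{}, \outspec{})$ under $\axiomEncode{\axiom{}}$. The returned $\opt{f}$ is selected by an $\arg\min$ over $\mathcal{P}$, and $\opt{f} \neq \bot$ only when $\mathcal{P}$ is nonempty. Hence $\opt{f} \in \mathcal{P}$, and the invariant immediately yields the claimed implication.

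We prove the invariant by induction on the iterations of the while loop. Initially $\mathcal{P} = \{\}$, so the invariant holds vacuously. In each iteration, $\mathcal{P}$ changes only when $\funcIt{searchVerify}$ returns some $\program{} \neq \bot$; otherwise the \textbf{continue} branch leaves $\mathcal{P}$ unchanged. When $\program{} \neq \bot$, Theorem~\ref{thm:verificationParamSet} applies. It states that for all $\theta_1 \in \Theta_1, \dots, \theta_k \in \Theta_k$, the instantiated program $\programSub{(\theta_1,\dots,\theta_k)}$ satisfies $\axiomEncode{\axiom{}} \implies \forall x.\,\inspec{}(x) \implies \outspec{}(x, \cdot)$. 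There are two cases.
\begin{itemize}
\item If $\set{\Theta}$ is nonempty, the added element is $\programOptSub{}$. We instantiate the universal statement at the learned tuple $(\opt{\theta_1}, \dots, \opt{\theta_k})$. For this step we need that $\funcIt{learn}$ returns parameters inside $\Theta_1 \times \cdots \times \Theta_k$. We take this as part of the specification of the learning step, since gradient updates stay in the parameter space $\paramset{}$ by definition of the parametric functions.
\item If $\set{\Theta} = \set{}$, the program has no parameters. The universal quantification is then trivial, and the theorem applies directly to $\program{}$.
\end{itemize}
In both cases the new element satisfies the specification, so the invariant is preserved.

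The statement of Theorem~\ref{thm:verificationParamSet} places $\axiomEncode{\axiom{}}$ inside the parameter quantifier, while the goal has it outermost. These are interchangeable because $\axiomEncode{\axiom{}}$ contains no parameter variables: the axioms constrain only library functions in $\fset{}$, and the FGGM contracts in $\axiom{}_{\fggmSet{}}$ are parameter-independent by Theorem~\ref{thm:FGGMsoundness}. The same observation lets us identify the verification context used inside $\funcIt{searchVerify}$ with the global $\axiom{}$ in the conclusion. The extra FGGM contracts are themselves consequences of $\axiomEncode{\axiom{}}$ for every $\theta$, by Theorem~\ref{thm:FGGMsoundness}, so they may be discharged.

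The main subtlety, rather than a real obstacle, is this bookkeeping. First, the verified agent must be exactly the program executed with the tuned parameters; in particular, learning must not alter program structure, $\prompt$, or $\fallback$. Second, the extended context $\context'$ must reduce back to $\axiom{}$. Once these are stated explicitly, the proof is a short induction.
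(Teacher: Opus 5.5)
Your proof is correct and follows the paper's argument: $\opt{f} \neq \bot$ forces $\opt{f} \in \mathcal{P}$, and every pool member satisfies $(\inspec{}, \outspec{})$ under $\axiomEncode{\axiom{}}$ by Theorem~\ref{thm:verificationParamSet}. You make explicit what the paper leaves implicit: the invariant on $\mathcal{P}$, instantiating the universal statement at the learned parameters $(\opt{\theta_1},\dots,\opt{\theta_k})$, and moving $\axiomEncode{\axiom{}}$ outside the parameter quantifiers, which is harmless since no parameter variables occur in it.
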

\begin{proofSketch}
$(\opt{f} \neq \bot) \implies (\opt{f} \in \mathcal{P})$ and all program $f \in \mathcal{P}$ satisfies $\axiomEncode{\axiom{}} \implies \forall x_1 \in T_1, \dots, \forall x_n \in T_n .\inspec{}(x_1, \dots, x_n) \implies \outspec{}(x_1, \dots, x_n, f(x_1, \dots, x_n))$ following Theorem~\ref{thm:verificationParamSet}. We provide the formal details in Appendix~\ref{appen:proofDetails}.
\end{proofSketch}
\noindent Next, we characterize a sufficient condition that ensures the existence of a candidate solution $f \in \fspace{G}{\fset{}}$ that satisfies $(\inspec{}, \outspec{})$ and achieves no worse loss $\loss{}$ than any generative model $f_n^{\theta} \in \fset{p}$ with any prompting function $\prompt \in \fspace{G}{\fset{c}}$ and initial parameters $\param{0}$. 
The high-level idea is as follows. If there exists a non-parametric program $\fallback \in \fspace{G}{\fset{c}}$ that satisfies $(\inspec{}, \outspec{})$, we can use it as a fallback together with any type-correct generative model $f_n^{\theta}$ (i.e., with output type $T_o$) to construct an FGGM. A program that invokes this FGGM on the input and returns its output satisfies $(\inspec{}, \outspec{})$. Moreover, if $\evalb{}$ is complete and does not incorrectly reject valid samples from $f_n^{\theta}$, then the FGGM always returns valid samples from $f_n^{\theta}$. Lemma~\ref{lem:checkerComplete} shows that $\evalb{}$ is complete when $\outspec{}$ is quantifier-free.  
Assume: (i) $\loss{}$ penalizes constraint-violating outputs more than constraint-satisfying ones, i.e.,
$\forall x \in T_i,\; \forall y, y' \in T_o.\; 
(\inspec{}(x) \wedge \neg \outspec{}(x, y) \wedge \outspec{}(x, y')) 
\implies \loss{}(x, \_, y') < \loss{}(x, \_, y)$,
(ii) $\outspec{}$ is quantifier-free, and (iii) there exists a valid non-parametric program $\fallback$ satisfying $(\inspec{}, \outspec{})$. Then there exists a program $f \in \fspace{G}{\fset{}}$ that satisfies $(\inspec{}, \outspec{})$ and incurs no greater loss than any type-correct generative model $f_n^{\theta} : T_i \to T_o$ with initial parameters. Moreover, if $f_n^{\theta}(x)$ violates $\outspec{}(x, f_n^{\theta}(x))$ for some $(x, \_) \in \data$, the improvement in loss is strict.

\begin{restatable}[Sufficient Success Condition]{theorem}{sufficientCond}
\label{thm:sufficientCond}
If (i) $\loss{}$ penalizes constraint violations, i.e.,$
    \forall x \in T_i,\; \forall y, y' \in T_o.\;
    (\inspec{}(x) \wedge \neg \outspec{}(x, y) \wedge \outspec{}(x, y'))
    \implies \loss{}(x, \_, y') < \loss{}(x, \_, y),
    $, (ii) $\outspec{}$ is quantifier-free, (iii) there exists a non-parametric program $\fallback \in \fspace{G}{\fset{c}}$ such that $\fallback$ satisfies $(\inspec{}, \outspec{})$.
Then, for any type-correct generative model $f_n^{\theta} : T_i \to T_o$ with initial parameters and any prompting program $\prompt{}$, there exists a program $f \in \fspace{G}{\fset{}}$ such that: (1) $f$ satisfies $(\inspec{}, \outspec{})$, and (2) $\loss{}(f) \leq \loss{}(f_n^{\theta})$.
Moreover, if there exists $(x, \_) \in \data$ such that $\neg \outspec{}(x, f_n^{\theta}(x))$, then
$\loss{}(f) < \loss{}(f_n^{\theta})$ where $L(f) = \sum_{(x, \_) \in \data} \loss{}(x,\_, f(x))$.
\end{restatable}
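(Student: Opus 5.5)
The plan is a direct construction: build $f$ as a program whose body is a single call to an FGGM that wraps $f_n^{\theta}$. Concretely, take $\fggm{} = (\funcIt{id}, f_n^{\theta}, (T_i)\to T_o, \inspec{}, \outspec{}, \prompt, \fallback, info)$, so the local contracts coincide with the global behavioral specification, i.e.\ $\linspec{} = \inspec{}$ and $\loutspec{} = \outspec{}$. The candidate agent is then $f(x) := \funcIt{id}(x)$. First, check that $\fggm{}$ is valid in the sense of \secRef~\ref{sec:wellFormedFGGM}. Validity of the fallback is exactly assumption (iii). The type signature is well-formed because $f_n^{\theta}$ is type-correct with output type $T_o$. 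The prompting program $\prompt$ is type-checked and terminating because it lies in $\fspace{G}{\fset{c}}$.

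Property (1) then follows immediately from Theorem~\ref{thm:FGGMsoundness}. For every $\theta$ (in particular $\param{0}$) and every $x$ with $\inspec{}(x)$, we get $\outspec{}(x, \funcIt{id}(x))$, and the wrapper $f$ returns exactly this value. Alternatively, one invokes Theorem~\ref{thm:verificationParamSet} on the trivial one-call program.

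For property (2), compare the losses pointwise on each $(x,\_) \in \data$, using the sample $y = f_n^{\theta}(x)$ as the first proposal of the rejection sampler. I assume, as the statement implicitly does by writing $f_n^{\theta}(x)$ as a single value, that the GM with initial parameters is treated as producing a fixed output $y$ per input, with the same prompt. I also assume $\inspec{}(x)$ holds on the data points; condition (i) only constrains $\loss{}$ under $\inspec{}$, so this is needed.

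The analysis splits into two cases. In the first case $\outspec{}(x,y)$ holds. By (ii) and Lemma~\ref{lem:checkerComplete}, the checker $\evalb{}$ accepts $y$, so $f(x)=y$ and the losses are equal. In the second case $\neg\outspec{}(x,y)$. Then $\evalb{}$ rejects $y$ by soundness of the checker, and $f(x)$ is either a later accepted sample or $\fallback(x,y_f)$. In both sub-cases Theorem~\ref{thm:FGGMsoundness} gives $\outspec{}(x,f(x))$. Assumption (i), applied with $y' = f(x)$, then yields $\loss{}(x,\_,f(x)) < \loss{}(x,\_,y)$. Summing over $\data$ gives $L(f) \le L(f_n^{\theta})$. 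If the second case occurs for some data point, the inequality is strict, which is the "moreover" clause.

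The main obstacle is the first case of the comparison step. We need $f(x)$ to equal $f_n^{\theta}(x)$ exactly whenever the latter is valid, so the argument must pin down the stochastic semantics. I will handle this by fixing the sample sequence (equivalently, coupling the randomness so the first sample of the rejection sampler is $f_n^{\theta}(x)$) and relying on completeness of the checker to accept it on the first iteration. Without quantifier-freeness of $\outspec{}$ this step would fail, because an incomplete SMT-based check could reject a valid $y$ and route to a worse fallback. This is why assumption (ii) is used precisely here. If expected losses are intended instead, the same pointwise argument goes through under this coupling, sample by sample.
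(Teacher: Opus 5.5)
Your proposal is correct and follows the paper's proof essentially step by step. It uses the same FGGM with $(\linspec{},\loutspec{}) := (\inspec{},\outspec{})$ wrapping $f_n^{\theta}$ and $\fallback$, gets correctness from Theorem~\ref{thm:FGGMsoundness}, and makes the same two-case pointwise loss comparison via completeness (Lemma~\ref{lem:checkerComplete}) and assumption (i). Your handling of the rejected case, which allows a later accepted sample via coupling, and your explicit assumption that $\inspec{}$ holds on $\data$, are slightly more careful than the paper; it simply sets $f(x)=\fallback(x)$ there (treating the GM as deterministic) and uses $\inspec{}(x)$ on the data implicitly.
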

\begin{proofSketch}
Define $\fggm{} = (\funcIt{id}, f_n^{\theta}, \typs{}, \linspec{}, \loutspec{}, \prompt, \fallback, info)$ where $(\linspec{}, \loutspec{}) := (\inspec{}, \outspec{})$. Since $\fallback$ satisfies $(\inspec{}, \outspec{})$, the FGGM is valid. Define the program:
\begin{align*}
\kw{function}\ \funcIt{f}(x_1: T_1, \dots, x_n: T_n): T_o\{ \kw{return}\ \funcIt{id}(x_1, \dots, x_n);\}    
\end{align*}

For any input satisfying $\inspec{}$, by completeness of $\eval$ (Lemma~\ref{lem:checkerComplete}), $\funcIt{id}$ returns $f_n^{\theta}(x)$ whenever $\outspec{}(x, f_n^{\theta}(x))$ holds, and otherwise returns $\fallback(x)$, which satisfies $\outspec{}$. For each $(x,\_) \in \data$, $f(x)$ either equals $f_n^{\theta}(x)$ (if valid) or $\fallback(x)$. By assumption (i), such replacement cannot increase loss, so $\loss{}(f) \leq \loss{}(f_n^{\theta})$. If $f_n^{\theta}$ violates $\outspec{}$ on some $(x,\_) \in \data$, the improvement is strict. Formal details in Appendix~\ref{appen:proofDetails}.
\end{proofSketch}

Theorem~\ref{thm:sufficientCond} provides a constructive argument: the FGGM mechanism can always improve upon a bare GM call by filtering out constraint-violating outputs and substituting the fallback. The three assumptions are mild in practice. Assumption (i) requires only that constraint-satisfying outputs are preferred over violating ones, which holds for any reasonable loss function in constrained settings. Assumption (ii), requiring quantifier-free $\outspec{}$, ensures completeness of the checker $\eval$ (Lemma~\ref{lem:checkerComplete}) so that no valid GM outputs are incorrectly rejected. In our experiments, the symbolic regression task uses quantifier-free specifications, while the program verification task uses library-function predicates (e.g., $\funcIt{parse}$, $\funcIt{noDiff}$) that are computable on concrete inputs, effectively behaving as quantifier-free checks at runtime. Assumption (iii), the existence of a valid fallback, is a natural requirement: if no non-parametric program can satisfy the specification, then the specification may be too strong for the given library. The theorem does not guarantee that \Tool will \emph{find} the optimal program, as the planner LLM may not propose the right candidate within the search budget. Rather, it establishes that the search space $\fspace{G}{\fset{}}$ is rich enough to contain solutions that dominate unconstrained GM calls.

\section{Evaluation}
We evaluate \Tool on four tasks spanning scientific discovery, program verification, mathematical reasoning, and agentic tool use. The 
evaluation seeks to answer three main questions: whether formal constraints improve safety, whether learning remains effective under 
those constraints, and how conformance tuning with local constraints improves performance.

\subsection{Experimental Setup}
\subsubsection{Tasks and Datasets}
We instantiate \Tool on four tasks, each representative of a different kind of constrained agent synthesis problem.

\noindent\textbf{LLM-Assisted Program Verification (DafnyBench).}
The agent receives a Dafny program with pre-encoded input-output specifications and must synthesize annotations that enable verification against the pre-encoded specification inside the input program \cite{dafnyBench}. The behavioral specification of the agent requires the output to remain parsable and equivalent to the input modulo added annotations. The task performance is measured by verification success within a fixed time budget. The underlying LLM is Claude Sonnet 4.5 \cite{claudeSonnet45}, a closed-source model whose weights are not accessible; consequently, the learning step (i.e., parameter tuning) is not applicable to this task.

\noindent\textbf{Symbolic Math Synthesis (GSM-Symbolic).}
GSM-Symbolic is a benchmark of grade-school math word problems generated from symbolic templates \cite{gsmsymbolic}. 
We use it to evaluate the LLM's ability to synthesize correct symbolic mathematical expressions. Behavioral specification of the agent enforces that each generated expression is syntactically valid with respect to a formal grammar, and task performance measures whether the answer is equivalent to the ground-truth expression.

\noindent\textbf{Agentic Tool Use ($\tau^2$-bench).}
$\tau^2$-bench \cite{barres2025tau2} evaluates conversational LLM agents in realistic customer-service scenarios. We use the retail and airlines domains. The agent must select and invoke API tools to resolve user requests while respecting domain-specific policies. Unlike the other tasks, the hard constraint and the utility metric operate at different granularities. The hard constraint is enforced at each individual tool call: before every tool call is executed, the Agent-C checker \cite{agentc} verifies that the proposed call complies with temporal policy-compliance rules (e.g., refund eligibility, booking-modification policies) given the trace of prior calls. Task performance (pass rate) is measured over the entire multi-turn interaction trace as the percentage of interactions where the agent successfully resolves the user request.

\noindent\textbf{Constrained Symbolic Regression (SymReg).} 
In symbolic regression, each synthesis instance consists of noisy input–output observations generated by an unknown ground-truth function $f_{gt}$, along with a behavioral specification encoding prior knowledge about $f_{gt}$. The goal is to synthesize a neuro-symbolic program (i.e., the agent) that fits the observed input–output pairs while provably satisfying the behavioral specifications.
We consider a total of $35$ synthesis tasks, each with $2$ different training sets corresponding to varying noise levels $(5\%, 10\%)$, and behavioral specifications that encode symbolic bounds on the ground truth. Each training set includes $600$ noisy samples and $400$ test samples. All synthesis instances are drawn from popular benchmarks~\cite{shapeConstrained, symReg1}.

\subsubsection{Implementation Details}
All experiments are conducted on NVIDIA A100 GPUs (40\,GB).
Across all tasks we use the same high-level pipeline: planner-guided program search with a budget of $\Delta = 10$ iterations, FGGM-based rejection sampling with a fixed sample budget $K=5$, deductive verification with bounded verifier and SMT timeouts, and hyperparameter tuning when model parameters are accessible.
The planner LLM is provided with 3 ground-truth examples as in-context examples (2 for $\tau^2$-bench, due to the long traces); these examples are excluded from all train and test splits. The full planner prompt template is given in Appendix~\ref{appen:plannerPrompt}, and details on how execution feedback is constructed and provided to the planner between iterations are given in Appendix~\ref{appen:plannerFeedback}.
\subsubsection{Learning Implementation}
\label{sec:implement}
Once a program verifies, \Tool uses Dafny's built-in transpiler \cite{dafny2Python} to translate the verified code into Python. Parameter tuning and experiments are then conducted on the transpiled code. \Tool assumes that Dafny’s transpiler is sound for this restricted subset of the language. 

\subsubsection{Metrics}
For each task, we report: (a) the \textit{constraint violation rate} on a held-out test set of unseen inputs, measuring the fraction of outputs that violate the behavioral specification $(\inspec{}, \outspec{})$; (b) \textit{task performance}, measured by mean squared error for SymReg, verification success rate for Dafny, answer accuracy for GSM-Symbolic, and pass rate for $\tau^2$-Bench; and (c) \textit{wall-clock time}. 

\subsubsection{Baseline} For all four tasks outside GSM-symbolic, we consider state-of-the-art (SOTA) hand-crafted agents. For GSM-Symbolic, we use the SOTA constrained decoder \cite{crane}. Some hand-crafted agents (e.g., Agent-C on $\tau^2$-Bench) manually enforce hard constraints; however, being static, they achieve lower task performance.
We also consider self-evolving frameworks without constraints, which do not verify the generated programs against behavioral specifications and accept all planner programs provided they are syntactically valid and type-checked. Consequently, such frameworks provide no guarantees about the correctness or safety of the generated code. To ensure fairness, unconstrained self-evolving frameworks are only allowed to use the same set of library functions.

\subsubsection{Verification Success}
For program verification, tool calling, and GSM-Symbolic, \Tool consistently finds a valid program satisfying $(\inspec{}, \outspec{})$ within $10$ attempts. For symbolic regression, it produces a verified program in $33$ out of $35$ cases, whereas the baselines generate at least $11$ cases where the generated program violates the behavioral specification on test inputs.

\renewcommand{\arraystretch}{0.9}
\begin{wraptable}{r}{0.43\linewidth}
\vspace{-0pt}
\centering
\small
\begin{tabular}{@{}lrr@{}}
\toprule
\textbf{Dataset} & \textbf{Total} & \textbf{Train} \\
\midrule
HumanEvalDafny         & 135 & 30 \\
DafnyBench             & 760 & 50 \\
$\tau^2$-bench Retail  & 114 & 15 \\
$\tau^2$-bench Airline &  50 & 10 \\
GSM-Symbolic           & 100 & 50 \\
Symbolic Regression    & 600 & 400 \\
\bottomrule
\end{tabular}
\caption{Dataset sizes and train/test splits.}
\label{tab:splits}
\vspace{-35pt}
\end{wraptable}
\renewcommand{\arraystretch}{1}
\subsubsection{Dataset splits.}
Table~\ref{tab:splits} summarizes the dataset sizes for each synthesis task.
For each task, a small training set is used for the learning step and the remaining examples form the held-out test set on which all reported metrics are computed.
GSM-Symbolic uses a larger training fraction (50\%) because its learning step performs GRPO-based fine-tuning, which requires more training data than the other tasks.

\subsubsection{Task-specific details.}
For Dafny, the underlying LLM is Claude Sonnet 4.5 \cite{claudeSonnet45}, a closed-source model; because model weights are inaccessible, Dafny experiments use only the search and verification stages of \Tool and do not perform parameter tuning.
For $\tau^2$-bench, parameter tuning is also omitted because the long, multi-turn execution traces make gradient-based fine-tuning prohibitively expensive.
For GSM-Symbolic, the learning step uses GRPO \cite{grpo} with LoRA adapters \cite{lora} on Qwen3-8B \cite{qwen3}. 
Additional hyperparameters (LoRA rank, solver timeouts, etc.)\ are reported in Appendix~\ref{appen:hyperparams}.

\subsection{Main Results}
\label{sec:mainResults}

We begin with the central question motivating \Tool:

\vspace{-5pt}\calloutbox{mygray}{\small{\textbf{RQ1 (Safety).} \emph{What goes wrong without formal constraints, and does \Tool eliminate these failures?}}}\vspace{-5pt}

\noindent We compare unconstrained agent synthesis against \Tool across all four tasks to show that hard behavioral specifications are necessary: without them, agents silently produce invalid outputs on unseen inputs that are undetectable by soft performance metrics.

\subsubsection{LLM-Assisted Program Verification (Dafny)}

\begin{table}[t]
\caption{Dafny program verification results on HumanEvalDafny and DafnyBench.
\emph{Verif.\ \& NoDiff} counts a program as correct only if it both verifies and passes the AST-based diff check against the original; \emph{Verif.} counts verification alone.}
\vspace{-5pt}
\label{tab:dafny}
\centering
\small
\setlength{\tabcolsep}{5pt}%
\begin{tabular}{@{}ll r@{\hspace{5pt}}l r@{\hspace{5pt}}l r@{\hspace{5pt}}l r @{}}
\toprule
\textbf{Dataset} & \textbf{Method}
  & \multicolumn{2}{c}{\textbf{Ver.\ \& NoDiff (\%)}~$\uparrow$}
  & \multicolumn{2}{c}{\textbf{Ver.\ (\%)}~$\uparrow$}
  & \multicolumn{2}{c}{\textbf{Vio.\ (\%)}~$\downarrow$}
  & \textbf{Time (s)} \\
\midrule
\multirow{4}{*}{HumanEvalDafny}
  & LLM (Claude Sonnet 4.5) & \quad\quad73.7 && 76.8 && \redn{8.1} && 9.8 \\
  & DafnyBench baseline     & 86.9 && 87.9 && \redn{4.0} && 16.1 \\
  & \Tool~(w/o constraints) & 84.8 && 88.9 && \redn{5.1} && 15.7 \\
  \rowcolor{lightgrayrow}
  & \Tool                   & \greenn{97.0} & \dt{\greenn{(+10.1)}} & \greenn{97.0} & \dt{\greenn{(+9.1)}} & \greenn{0.0} & \dt{\greenn{(-4.0)}} & 18.2 \\
\midrule
\multirow{4}{*}{DafnyBench}
  & LLM (Claude Sonnet 4.5) & 68.7 && 71.1 && \redn{10.3} && 10.3 \\
  & DafnyBench baseline     & 81.6 && 84.0 && \redn{8.2}  && 20.1 \\
  & \Tool~(w/o constraints) & 79.2 && 84.8 && \redn{7.9}  && 18.4 \\
  \rowcolor{lightgrayrow}
  & \Tool                   & \greenn{89.1} & \dt{\greenn{(+7.5)}} & \greenn{89.1} & \dt{\greenn{(+5.1)}} & \greenn{0.0} & \dt{\greenn{(-8.2)}} & 25.6 \\
\bottomrule
\end{tabular}%

\vspace{-5pt}
\end{table}

Table~\ref{tab:dafny} presents results on two Dafny program-verification benchmarks and offers the clearest illustration of why formal constraints are indispensable.
The gap between the \emph{Verif.} and \emph{Verif.\ \& NoDiff} columns reveals the core safety issue (\textbf{RQ1}): without hard constraints, agents inflate their verification rate by modifying the original program.
The LLM baseline reports 76.8\% verification on HumanEvalDafny, yet only 73.7\% of those outputs actually preserve the input program. Overall, 8.1\% of all outputs are violations that would go undetected by a metric that checks verification alone.
The DafnyBench baseline and \Tool without constraints reduce but do not eliminate this problem (4.0\% and 5.1\% violations, respectively).
Only the full \Tool pipeline, which enforces the \emph{NoDiff} behavioral specification, achieves a 0\% violation rate on both benchmarks while simultaneously attaining the highest Verif.\ \& NoDiff rate (97.0\% on HumanEvalDafny, 89.1\% on DafnyBench).
This confirms that FGGM-based rejection sampling with a verified fallback does not merely filter outputs but actively steers the planner toward higher-quality candidates.
This result is particularly notable because it is achieved without any parameter tuning (the underlying model, Claude Sonnet 4.5, is closed-source), the gains come entirely from making constraints visible to the planner and enforcing them at synthesis time.
The overhead introduced by \Tool is modest. The full pipeline takes 18.2\,s per instance on HumanEvalDafny compared to 9.8\,s for LLM generation, roughly a $1.9\times$ factor that includes Dafny verification and up to $k$ LLM calls. On DafnyBench the factor is $2.5\times$ (25.6\,s vs.\ 10.3\,s). Given that the LLM baseline produces untrustworthy outputs on 8 to 10\% of inputs, we believe this overhead is justified for the formal guarantees it provides.

\subsubsection{Agentic Tool Use ($\tau^2$-bench)}

\begin{table}[t]
\caption{$\tau^2$-bench results with Qwen3-8B on the retail and airline domains.
Higher pass rate is better; lower violation rate is better.}
\label{tab:tau2}
\centering
\small
\begin{tabular}{@{}ll r@{\hspace{5pt}}l r@{\hspace{5pt}}l r @{}}
\toprule
\textbf{Domain} & \textbf{Method}
  & \multicolumn{2}{c}{\textbf{Pass Rate (\%)}~$\uparrow$}
  & \multicolumn{2}{c}{\textbf{Violation (\%)}~$\downarrow$}
  & \textbf{Time (s)} \\
\midrule
\multirow{4}{*}{Retail}
  & LLM (Qwen3-8B)             & 11.3 && \redn{76.3} && 146.6 \\
  & Agent-C (Qwen3-8B)                    & 42.2 && \greenn{0.0} && 234.7 \\
  & \Tool~(w/o constraints)    & 49.4 && \redn{10.3} && 238.3 \\
  \rowcolor{lightgrayrow}
  & \Tool                      & \greenn{53.6} & \dt{\greenn{(+11.4)}} & \greenn{0.0} & \dt{\greenn{(0.0)}} & 212.4 \\
\midrule
\multirow{4}{*}{Airline}
  & LLM (Qwen3-8B)             & 13.2 && \redn{68.4} && 184.8 \\
  & Agent-C (Qwen3-8B)                   & 39.4 && \greenn{0.0} && 272.6 \\
  & \Tool~(w/o constraints)    & 44.7 && \redn{25.5} && 268.4 \\
  \rowcolor{lightgrayrow}
  & \Tool                      & \greenn{52.6} & \dt{\greenn{(+13.2)}} & \greenn{0.0} & \dt{\greenn{(0.0)}} & 241.1 \\
\bottomrule
\end{tabular}

\vspace{-10pt}
\end{table}

Table~\ref{tab:tau2} reports performance on $\tau^2$-bench. The LLM baseline (Qwen3-8B without constraints) violates domain policies on 76.3\% of retail interactions and 68.4\% of airline interactions, yielding pass rates of only 11.3\% and 13.2\%, respectively.
Both Agent-C and \Tool reduce the violation rate to 0\%, confirming that formal policy enforcement is necessary.
\Tool without constraints illustrates an intermediate point: by running the search loop without informing the planner of the policy specification, \Tool~(w/o constraints) improves pass rates to 49.4\% (retail) and 44.7\% (airline), but still incurs 10.3\% and 25.5\% violation rates, respectively. Enforcing constraints eliminates all violations while further improving pass rates to 53.6\% and 52.6\%.
This gap reflects \Tool's search-verify-learn design. By synthesizing agent programs that are verified to satisfy the per-call policy specification, \Tool can explore a wider space of compliant strategies rather than relying solely on runtime constraint checking. The per-call hard constraint ensures that every individual tool call is policy-compliant, while the search loop optimizes the agent's overall strategy to maximize the end-of-trace pass rate. Impressively, \Tool with the small open-weight Qwen3-8B outperforms Agent-C \cite{agentc} with Claude Sonnet 4.5 (52.6 vs 47.3).

\noindent \Tool is also faster than Agent-C in both domains (212.4s vs. 234.7s in retail and 241.1s vs. 272.6s in airline). The LLM baseline is fastest because it performs no policy checking, but its outputs are unusable in practice given the violation rates.

\subsubsection{Symbolic Math Synthesis (GSM-Symbolic)}

GSM-Symbolic uses Qwen3-8B as the underlying LLM. Qwen3-8B is open-weight, making parameter tuning feasible. This allows us to ask:

\vspace{-5pt}\calloutbox{mygray}{\small{\textbf{RQ2 (Training Impact).} \emph{Does parameter tuning improve task performance under formal constraints?}}}\vspace{-5pt}

\begin{table}[t]
\caption{GSM-Symbolic results with Qwen3-8B.
Higher accuracy is better; lower violation rate is better.}

\vspace{-5pt}
\label{tab:gsm}
\centering
\small
\begin{tabular}{@{}l r@{\hspace{5pt}}l r@{\hspace{5pt}}l r @{}}
\toprule
\textbf{Method}
  & \multicolumn{2}{c}{\textbf{Accuracy (\%)}~$\uparrow$}
  & \multicolumn{2}{c}{\textbf{Violation (\%)}~$\downarrow$}
  & \textbf{Time (s)} \\
\midrule
LLM (Qwen3-8B)                & 38.3 && \redn{10.6} && 10.9 \\
CRANE~\cite{crane}             & 44.7 && \redn{2.1} && 12.4 \\
\Tool~(no parameter tuning)    & \greenn{53.2} & \dt{\greenn{(+8.5)}}  & \greenn{0.0} & \dt{\greenn{(-2.1)}} & 18.8 \\
\rowcolor{lightgrayrow} \Tool~(with parameter tuning)  & \greenn{66.0} & \dt{\greenn{(+21.3)}} & \greenn{0.0} & \dt{\greenn{(-2.1)}} & 16.7 \\
\bottomrule
\end{tabular}

\vspace{-10pt}
\end{table}

\noindent Table~\ref{tab:gsm} compares \Tool against the LLM baseline and CRANE~\cite{crane}, a state-of-the-art constrained-decoding method, on GSM-Symbolic using Qwen3-8B.
The LLM baseline (Qwen3-8B) achieves only 38.3\% accuracy with a 10.6\% constraint violation rate (\textbf{RQ1}), confirming that a vanilla LLM frequently produces outputs that are syntactically or semantically invalid.
CRANE improves accuracy to 44.7\% and reduces violations to 2.1\% by augmenting the output grammar with reasoning tokens. \footnote{CRANE only ensures that at every decoding step, the generated partial output is a valid prefix of the target grammar.}\Tool without parameter tuning already achieves 53.2\% accuracy and eliminates all violations, an 8.5\% accuracy improvement over CRANE with strictly stronger correctness guarantees. This improvement stems from the FGGM mechanism: the rejection sampler with a verified fallback ensures that every output satisfies the formal grammar and the semantic specification, while the synthesized prompting program $\prompt$ guides the underlying LLM toward correct answers.

\noindent We also compare the performance of \Tool with and without parameter tuning (\textbf{RQ2}). We fine-tune Qwen3-8B via GRPO with LoRA adapters under the same formal constraints. This provides a further 12.8\% accuracy gain (from 53.2\% to 66.0\%) while maintaining the zero-violation guarantee. The tuned model produces outputs that more often pass the FGGM checker $\eval$ on the first sample, reducing reliance on the fallback and thereby improving answer quality. Notably, the tuned variant is also faster than the untuned one (16.7\,s vs.\ 18.8\,s), because higher conformance reduces the number of rejection-sampling iterations needed per call.

\subsubsection{Constrained Symbolic Regression}
In constrained symbolic regression, we consider the SOTA genetic programming–based method PySR \cite{pysr}, as well as the existing self-evolving method LLM-SR \cite{llmsr}, which samples parametric programs with optimizable PyTorch parameters and refines them to minimize loss. We denote the noise level by $\epsilon$. As shown in Table~\ref{tab:symreg}, both baselines frequently produce programs that violate the behavioral specifications on the test data, with violations occurring in up to $62.86\%$ of synthesis instances for PySR and up to $34.29\%$ for LLM-SR. For both noise levels, \Tool finds a verified solution in $33$ out of $35$ instances.
Table~\ref{tab:nmse} presents a pairwise comparison between \Tool and the baselines in terms of normalized mean squared error (NMSE) on the test dataset. We compute the average NMSE over only those instances where the baseline-generated programs satisfy the behavioral constraints, pruning instances that correspond to invalid solutions. \Tool achieves significantly lower NMSE values compared to both baselines.
\begin{table}[t]
\centering
\small
\begin{minipage}{0.48\linewidth}
\centering
\caption{\% of synthesised instances that violate behavioral specifications on the test set.}
\label{tab:symreg}
\begin{tabular}{@{}lcc@{}}
\toprule
\textbf{Method}
  & $\epsilon$ (5\%)~$\downarrow$
  & $\epsilon$ (10\%)~$\downarrow$ \\
\midrule
PySR 
  & \redn{62.86\%}
  & \redn{62.86\%} \\
LLM-SR 
  & \redn{31.43\%}
  & \redn{34.29\%} \\
\rowcolor{lightgrayrow} \Tool 
  & \greenn{0.00\%}
  & \greenn{0.00\%} \\
\bottomrule
\end{tabular}
\end{minipage}
\hfill
\begin{minipage}{0.48\linewidth}
\centering
\caption{NMSE values averaged over all instances that do not violate constraints on test data.}
\label{tab:nmse}
\small
\begin{tabular}{lcccc}
\toprule
\multirow{2}{*}{Comparison}
  & \multicolumn{2}{c}{$\epsilon = 0.05$}
  & \multicolumn{2}{c}{$\epsilon = 0.1$} \\
 & {\tiny \Tool} & {\tiny Baseline} & {\tiny \Tool} & {\tiny Baseline} \\
\midrule
vs PySR  
  & 0.0593 & 0.3113
  & 0.0879 & 0.1217 \\
vs LLM-SR 
  & 0.0200 & 0.1580
  & 0.0205 & 0.1583 \\
\bottomrule
\end{tabular}
\end{minipage}
\vspace{-10pt}
\end{table}
\subsection{Constraint Decomposition}
\label{sec:ablation}
The previous section showed that formal constraints improve both safety and performance. We now ask a finer-grained question:

\vspace{-5pt}\calloutbox{mygray}{\small{\textbf{RQ3 (Constraint Decomposition).} \emph{What are the individual contributions of parameter tuning with local FGGM contracts versus global loss $\loss{}$?}}}\vspace{-5pt}

\begin{table}[t]
\caption{Constraint decomposition ablation on GSM-Symbolic (Qwen3-8B).
\emph{Local} = FGGM contracts only; \emph{Global} = behavioral specification only; \emph{Full} = both.}
\label{tab:ablation}
\centering
\small
\begin{tabular}{@{}l r@{\hspace{5pt}}l r@{\hspace{5pt}}l @{}}
\toprule
\textbf{Configuration}
  & \multicolumn{2}{c}{\textbf{Accuracy (\%)}~$\uparrow$}
  & \multicolumn{2}{c}{\textbf{Violation (\%)}~$\downarrow$} \\
\midrule
LLM (Qwen3-8B)                          & 38.3 && \redn{10.6} & \\
\Tool (no parameter tuning)            & \greenn{53.2} & \dt{\greenn{(+14.9)}} & \greenn{0.0} & \dt{\greenn{(-10.6)}} \\
\Tool (local only parameter tuning)    & \greenn{55.3} & \dt{\greenn{(+17.0)}} & \greenn{0.0} & \dt{\greenn{(-10.6)}} \\
\Tool (global only parameter tuning)   & \greenn{61.7} & \dt{\greenn{(+23.4)}} & \greenn{0.0} & \dt{\greenn{(-10.6)}} \\
\rowcolor{lightgrayrow} \Tool (full)                           & \greenn{66.0} & \dt{\greenn{(+27.7)}} & \greenn{0.0} & \dt{\greenn{(-10.6)}} \\
\bottomrule
\end{tabular}

\vspace{-10pt}
\end{table}

\noindent We ablate each component independently on GSM-Symbolic. Table~\ref{tab:ablation} decomposes the 12.8\% gain that parameter tuning provides (from 53.2\% to 66.0\%, as shown in Table~\ref{tab:gsm}) into its two sources.
Starting from \Tool without parameter tuning (53.2\%), adding \emph{local-only} tuning, which optimizes the FGGM conformance loss so that the model's outputs more frequently satisfy per-call contracts, yields a modest improvement to 55.3\%. This 2.1\% gain indicates that parameter tuning alone helps the model produce syntactically valid outputs more reliably, reducing reliance on rejection sampling.
\emph{Global-only} tuning, which optimizes the task loss $\loss{}$ without the FGGM conformance component, provides a substantially larger gain to 61.7\%. This 8.5\% improvement reflects the direct benefit of training the model to produce semantically correct answers.
The full \Tool pipeline combines both losses and achieves 66.0\%, a 4.3\% gain. This demonstrates that the two training signals are complementary: global tuning improves answer correctness while local conformance tuning ensures outputs satisfy per-call contracts, jointly yielding the best overall accuracy.

\subsection{Example Synthesized Agent}
\label{sec:synthesizedAgents}

We provide a complete agent program synthesised by \Tool for the DafnyBench task in Appendix~\ref{appen:exampleDafnyAgent}. The agent defines three FGGMs: $\funcIt{initialFGGM}$, $\funcIt{diffErrorFGGM}$, and $\funcIt{verifierErrorFGGM}$. Each sharing the local output contract $\loutspec{}:= \funcIt{noDiff}(base\_program, \cdot)$ and differing only in their prompting function $\prompt$, which specialises the LLM prompt for initial annotation, diff-checker repair, and verifier-error repair, respectively. The resulting agent program is verified to provably comply with the diff-checker specification for all inputs and all parameter values, ensuring that no synthesised output modifies the original program beyond adding annotations.

\subsection{Discussion}
\label{sec:discussion}

Across all four tasks, \Tool achieves zero constraint violations on held-out test inputs while simultaneously improving task performance over every baseline. These results confirm that formal behavioral specifications do not merely enforce safety but actively prune the space of candidate programs, steering synthesis toward higher-quality agents.
\Tool's runtime overhead is competitive: on $\tau^2$-bench it is \emph{faster} than Agent-C, and on Dafny and GSM-Symbolic the full pipeline introduces a modest $1.9$--$2.5\times$ slowdown relative to the LLM baseline, attributable to the verification and rejection-sampling steps.
One current limitation is that the formulation is \emph{resource-unaware}: behavioral specifications constrain functional correctness but do not account for computational resources such as LLM calls, token usage, or wall-clock budget. Extending \Tool with resource-bound specifications \cite{resource}, for example by encoding token or call budgets as additional hard constraints within the FGGM framework, is a promising direction for future work.

\section{Related Work}

\noindent\textbf{Self-Evolving Agents and Their Risks.}
Recent work has explored automatic agent synthesis and self-improvement, from code-based action unification~\cite{autoAgent1} and open-ended skill libraries~\cite{autoAgent2} to architecture search~\cite{autoAgent3,AutoAgent4,autoAgent5}, gradient-free symbolic updates~\cite{selfEvolve1}, co-evolution with world models~\cite{selfEvolve2}, and reinforcement-learning-driven skill refinement~\cite{selfEvolve3,selfEvolve4}.
However, none of these provide formal behavioral guarantees, a gap made urgent by demonstrated risks of coding-agent exploits~\cite{dangerCode1,dangerCode2,dangerCodeArticle}, test-case exploitation~\cite{impossiblebench}, and verification cheating~\cite{dafnyPro}.
\Tool addresses this by imposing and verifying hard formal specifications before deployment.

\noindent\textbf{Runtime Monitoring and Shielding.}
Rather than verifying at synthesis time, runtime approaches enforce safety on observed executions. Shield synthesis from neural policies~\cite{shield1} and temporal-logic runtime checking of tool-call sequences~\cite{agentc} are representative examples, but both are limited to observed traces and cannot guarantee safety on unseen inputs.
\Tool instead verifies the synthesized program for \emph{all} inputs and parameter values at synthesis time. The monitoring checks can be used to define the specification as done with \cite{agentc}.

\noindent\textbf{Neuro-Symbolic Program Synthesis.}
Neuro-symbolic methods, including neural program generation from examples~\cite{neuroSymSynthesis}, type-directed differentiable programming~\cite{houdini}, wake-sleep library learning~\cite{dreamCoder}, vision-language program composition~\cite{viperGPT}, and LLM-guided symbolic regression~\cite{llmsr}, synthesize programs that invoke neural components but lack formal correctness guarantees over the composed system.
\Tool extends this paradigm with FGGMs that bind each model call to a verified local contract, enabling end-to-end verification.

\noindent\textbf{Constrained Decoding.}
Constrained decoding restricts LLM outputs to a formal language through grammar-based~\cite{syncode,grammarAligned,itergen}, diffusion-based~\cite{dingo}, programming-abstraction~\cite{lmql,dspy}, reasoning-preserving~\cite{crane}, type-based~\cite{typeConstrained}, and semantic~\cite{chopChop} techniques.
These approaches require access to the model's decoding internals (precluding closed-source models) and enforce per-token or per-output constraints, not program-level behavioral specifications.
\Tool's FGGM mechanism operates on model outputs via rejection sampling, supports rich first-order logic specifications verified at the program level, and is model-agnostic.

\noindent\textbf{Deductive Program Synthesis.}
Classical deductive synthesis, including syntax-guided~\cite{sygus}, semantics-guided~\cite{semGus}, counterexample-guided~\cite{cegis}, abstraction-guided~\cite{abstractSynthesis}, and component-based~\cite{componentSynthesis} approaches, provides strong correctness guarantees but targets deterministic, non-parametric programs.
\Tool bridges this gap with a CEGIS-style search--verify loop for program structure while using FGGMs to encapsulate parametric generative models within verified contracts.

\section{Conclusion}
We presented \Tool, a framework for synthesizing self-evolving LLM agents with formal behavioral guarantees. By introducing Formally Guarded Generative Models (FGGMs), \Tool binds each generative model call to a verified local contract backed by a rejection sampler with a provably correct fallback, ensuring that specifications hold for all inputs and all parameter values. This design decomposes the constrained learning problem into a CEGIS-style discrete search over program structure followed by unconstrained gradient-based parameter optimization, retaining the scalability of modern fine-tuning methods such as GRPO while providing end-to-end correctness guarantees. Our evaluation across constrained symbolic regression, LLM-assisted Dafny verification, symbolic math synthesis, and policy-compliant agentic tool use demonstrates that \Tool achieves zero constraint violations on all tasks while simultaneously improving task performance over unconstrained and state-of-the-art baselines. These results show that formal behavioral constraints are not merely a safety mechanism but an active guide that prunes the search space and steers synthesis toward higher-quality agents.

\clearpage
\newpage
\bibliographystyle{ACM-Reference-Format}
\bibliography{references}

@inproceedings{autoAgent1,
author = {Wang, Xingyao and Chen, Yangyi and Yuan, Lifan and Zhang, Yizhe and Li, Yunzhu and Peng, Hao and Ji, Heng},
title = {Executable code actions elicit better LLM agents},
year = {2024},
publisher = {JMLR.org},
booktitle = {Proceedings of the 41st International Conference on Machine Learning},
articleno = {2054},
numpages = {25},
location = {Vienna, Austria},
series = {ICML'24}
}

@misc{autoAgent2,
      title={Voyager: An Open-Ended Embodied Agent with Large Language Models}, 
      author={Guanzhi Wang and Yuqi Xie and Yunfan Jiang and Ajay Mandlekar and Chaowei Xiao and Yuke Zhu and Linxi Fan and Anima Anandkumar},
      year={2023},
      eprint={2305.16291},
      archivePrefix={arXiv},
      primaryClass={cs.AI},
      url={https://arxiv.org/abs/2305.16291}, 
}

@inproceedings{
autoAgent3,
title={Automated Design of Agentic Systems},
author={Shengran Hu and Cong Lu and Jeff Clune},
booktitle={The Thirteenth International Conference on Learning Representations},
year={2025},
url={https://openreview.net/forum?id=t9U3LW7JVX}
}

@inproceedings{viperGPT,
  author={Surís, Dídac and Menon, Sachit and Vondrick, Carl},
  booktitle={2023 IEEE/CVF International Conference on Computer Vision (ICCV)}, 
  title={ViperGPT: Visual Inference via Python Execution for Reasoning}, 
  year={2023},
  volume={},
  number={},
  pages={11854-11864},
  doi={10.1109/ICCV51070.2023.01092}
}

@inproceedings{
AutoAgent4,
title={{GPTS}warm: Language Agents as Optimizable Graphs},
author={Mingchen Zhuge and Wenyi Wang and Louis Kirsch and Francesco Faccio and Dmitrii Khizbullin and J{\"u}rgen Schmidhuber},
booktitle={Forty-first International Conference on Machine Learning},
year={2024},
url={https://openreview.net/forum?id=uTC9AFXIhg}
}

@misc{autoAgent5,
      title={FlowReasoner: Reinforcing Query-Level Meta-Agents}, 
      author={Hongcheng Gao and Yue Liu and Yufei He and Longxu Dou and Chao Du and Zhijie Deng and Bryan Hooi and Min Lin and Tianyu Pang},
      year={2025},
      eprint={2504.15257},
      archivePrefix={arXiv},
      primaryClass={cs.AI},
      url={https://arxiv.org/abs/2504.15257}, 
}

@misc{selfEvolve1,
      title={Symbolic Learning Enables Self-Evolving Agents}, 
      author={Wangchunshu Zhou and Yixin Ou and Shengwei Ding and Long Li and Jialong Wu and Tiannan Wang and Jiamin Chen and Shuai Wang and Xiaohua Xu and Ningyu Zhang and Huajun Chen and Yuchen Eleanor Jiang},
      year={2024},
      eprint={2406.18532},
      archivePrefix={arXiv},
      primaryClass={cs.CL},
      url={https://arxiv.org/abs/2406.18532}, 
}

@article{selfEvolve2,
  title={WebEvolver: Enhancing Web Agent Self-Improvement with Coevolving World Model},
  author={Fang, Tianqing and Zhang, Hongming and Zhang, Zhisong and Ma, Kaixin and Yu, Wenhao and Mi, Haitao and Yu, Dong},
  year={2025},
  journal={arXiv preprint arXiv:2504.21024}
}

@misc{selfEvolve3,
      title={Reinforcement Learning for Self-Improving Agent with Skill Library}, 
      author={Jiongxiao Wang and Qiaojing Yan and Yawei Wang and Yijun Tian and Soumya Smruti Mishra and Zhichao Xu and Megha Gandhi and Panpan Xu and Lin Lee Cheong},
      year={2026},
      eprint={2512.17102},
      archivePrefix={arXiv},
      primaryClass={cs.AI},
      url={https://arxiv.org/abs/2512.17102}, 
}

@misc{selfEvolve4,
      title={SEEA-R1: Tree-Structured Reinforcement Fine-Tuning for Self-Evolving Embodied Agents}, 
      author={Wanxin Tian and Shijie Zhang and Kevin Zhang and Xiaowei Chi and Chunkai Fan and Junyu Lu and Yulin Luo and Qiang Zhou and Yiming Zhao and Ning Liu and Siyu Lin and Zhiyuan Qin and Xiaozhu Ju and Shanghang Zhang and Jian Tang},
      year={2025},
      eprint={2506.21669},
      archivePrefix={arXiv},
      primaryClass={cs.AI},
      url={https://arxiv.org/abs/2506.21669}, 
}

@inproceedings{
dangerCode1,
title={RedCode: Risky Code Execution and Generation Benchmark for Code Agents},
author={Chengquan Guo and Xun Liu and Chulin Xie and Andy Zhou and Yi Zeng and Zinan Lin and Dawn Song and Bo Li},
booktitle={The Thirty-eight Conference on Neural Information Processing Systems Datasets and Benchmarks Track},
year={2024},
url={https://openreview.net/forum?id=mAG68wdggA}
}

@misc{dangerCode2,
  title={Takedown: How It's Done in Modern Coding Agent Exploits}, 
  author={Eunkyu Lee and Donghyeon Kim and Wonyoung Kim and Insu Yun},
  year={2025},
  eprint={2509.24240},
  archivePrefix={arXiv},
  primaryClass={cs.CR},
  url={https://arxiv.org/abs/2509.24240}, 
}

@misc{dangerCodeArticle,
  title={Rogue AI agents published passwords and bypassed security protections},
  author={The Guardian},
  year={2026},
  howpublished={News investigation}
}

@INPROCEEDINGS{sygus,
  author={Alur, Rajeev and Bodik, Rastislav and Juniwal, Garvit and Martin, Milo M. K. and Raghothaman, Mukund and Seshia, Sanjit A. and Singh, Rishabh and Solar-Lezama, Armando and Torlak, Emina and Udupa, Abhishek},
  booktitle={2013 Formal Methods in Computer-Aided Design}, 
  title={Syntax-guided synthesis}, 
  year={2013},
  volume={},
  number={},
  pages={1-8},
  doi={10.1109/FMCAD.2013.6679385}
}

@misc{pysr,
      title={Interpretable Machine Learning for Science with PySR and SymbolicRegression.jl}, 
      author={Miles Cranmer},
      year={2023},
      eprint={2305.01582},
      archivePrefix={arXiv},
      primaryClass={astro-ph.IM},
      url={https://arxiv.org/abs/2305.01582}, 
}

@inproceedings{
llmsr,
title={{LLM}-{SR}: Scientific Equation Discovery via Programming with Large Language Models},
author={Parshin Shojaee and Kazem Meidani and Shashank Gupta and Amir Barati Farimani and Chandan K. Reddy},
booktitle={The Thirteenth International Conference on Learning Representations},
year={2025},
url={https://openreview.net/forum?id=m2nmp8P5in}
}

@misc{dafny2Python,
  author       = {{Dafny Language Community}},
  title        = {Integrating Dafny and Python Code},
  howpublished = {\url{https://dafny.org/v3.10.0/DafnyRef/integration-py/IntegrationPython}},
  year         = {2023},
  note         = {Accessed: 2026-03-17}
}

@article{semGus,
author = {Kim, Jinwoo and Hu, Qinheping and D'Antoni, Loris and Reps, Thomas},
title = {Semantics-guided synthesis},
year = {2021},
issue_date = {January 2021},
publisher = {Association for Computing Machinery},
address = {New York, NY, USA},
volume = {5},
number = {POPL},
url = {https://doi.org/10.1145/3434311},
doi = {10.1145/3434311},
journal = {Proc. ACM Program. Lang.},
month = jan,
articleno = {30},
numpages = {32}
}

@article{abstractSynthesis,
author = {Guria, Sankha Narayan and Foster, Jeffrey S. and Van Horn, David},
title = {Absynthe: Abstract Interpretation-Guided Synthesis},
year = {2023},
issue_date = {June 2023},
publisher = {Association for Computing Machinery},
address = {New York, NY, USA},
volume = {7},
number = {PLDI},
url = {https://doi.org/10.1145/3591285},
doi = {10.1145/3591285},
journal = {Proc. ACM Program. Lang.},
month = jun,
articleno = {171},
numpages = {24}
}

@article{cegis,
  author  = {Armando Solar-Lezama},
  title   = {Program Sketching},
  journal = {International Journal on Software Tools for Technology Transfer},
  volume  = {15},
  number  = {5},
  pages   = {475--495},
  year    = {2013},
  doi     = {10.1007/s10009-012-0249-7},
  url     = {https://doi.org/10.1007/s10009-012-0249-7},
  issn    = {1433-2787}
}

@inproceedings{componentSynthesis,
author = {Feng, Yu and Martins, Ruben and Van Geffen, Jacob and Dillig, Isil and Chaudhuri, Swarat},
title = {Component-based synthesis of table consolidation and transformation tasks from examples},
year = {2017},
isbn = {9781450349888},
publisher = {Association for Computing Machinery},
address = {New York, NY, USA},
url = {https://doi.org/10.1145/3062341.3062351},
doi = {10.1145/3062341.3062351},
booktitle = {Proceedings of the 38th ACM SIGPLAN Conference on Programming Language Design and Implementation},
pages = {422–436},
numpages = {15},
location = {Barcelona, Spain},
series = {PLDI 2017}
}

@misc{grpo,
      title={DeepSeekMath: Pushing the Limits of Mathematical Reasoning in Open Language Models}, 
      author={Zhihong Shao and Peiyi Wang and Qihao Zhu and Runxin Xu and Junxiao Song and Xiao Bi and Haowei Zhang and Mingchuan Zhang and Y. K. Li and Y. Wu and Daya Guo},
      year={2024},
      eprint={2402.03300},
      archivePrefix={arXiv},
      primaryClass={cs.CL},
      url={https://arxiv.org/abs/2402.03300}, 
}

@inproceedings{houdini,
author = {Valkov, Lazar and Chaudhari, Dipak and Srivastava, Akash and Sutton, Charles and Chaudhuri, Swarat},
title = {HOUDINI: lifelong learning as program synthesis},
year = {2018},
publisher = {Curran Associates Inc.},
address = {Red Hook, NY, USA},
booktitle = {Proceedings of the 32nd International Conference on Neural Information Processing Systems},
pages = {8701–8712},
numpages = {12},
location = {Montr\'{e}al, Canada},
series = {NIPS'18}
}

@article{syncode,
title={SynCode: {LLM} Generation with Grammar Augmentation},
author={Shubham Ugare and Tarun Suresh and Hangoo Kang and Sasa Misailovic and Gagandeep Singh},
journal={Transactions on Machine Learning Research},
issn={2835-8856},
year={2025},
url={https://openreview.net/forum?id=HiUZtgAPoH},
note={}
}

@inproceedings{grammarAligned,
title={Grammar-Aligned Decoding},
author={Kanghee Park and Jiayu Wang and Taylor Berg-Kirkpatrick and Nadia Polikarpova and Loris D'Antoni},
booktitle={The Thirty-eighth Annual Conference on Neural Information Processing Systems},
year={2024},
url={https://openreview.net/forum?id=5G7ve8E1Lu}
}

@inproceedings{
    dingo,
    title={{DINGO}: Constrained Inference for Diffusion {LLM}s},
    author={Tarun Suresh and Debangshu Banerjee and Shubham Ugare and Sasa Misailovic and Gagandeep Singh},
    booktitle={The Thirty-ninth Annual Conference on Neural Information Processing Systems},
    year={2025},
    url={https://openreview.net/forum?id=KaYMGsnZ4R}
}

@inproceedings{itergen,
title={IterGen: Iterative Semantic-aware Structured {LLM} Generation with Backtracking},
author={Shubham Ugare and Rohan Gumaste and Tarun Suresh and Gagandeep Singh and Sasa Misailovic},
booktitle={The Thirteenth International Conference on Learning Representations},
year={2025},
url={https://openreview.net/forum?id=ac93gRzxxV}
}

@article{symRegImprove,
    author = {Kronberger, G. and de Franca, F. O. and Burlacu, B. and Haider, C. and Kommenda, M.},
    title = {Shape-Constrained Symbolic Regression—Improving Extrapolation
                    with Prior Knowledge},
    journal = {Evolutionary Computation},
    volume = {30},
    number = {1},
    pages = {75-98},
    year = {2022},
    month = {03},
    issn = {1063-6560},
    doi = {10.1162/evco_a_00294},
    url = {https://doi.org/10.1162/evco_a_00294},
    eprint = {https://direct.mit.edu/evco/article-pdf/30/1/75/1995582/evco\_a\_00294.pdf},
}

@misc{agentc,
      title={Enforcing Temporal Constraints for LLM Agents}, 
      author={Adharsh Kamath and Sishen Zhang and Calvin Xu and Shubham Ugare and Gagandeep Singh and Sasa Misailovic},
      year={2025},
      eprint={2512.23738},
      archivePrefix={arXiv},
      primaryClass={cs.PL},
      url={https://arxiv.org/abs/2512.23738}, 
}

@inproceedings{dafny,
author = {Leino, K. Rustan M.},
title = {Dafny: an automatic program verifier for functional correctness},
year = {2010},
isbn = {3642175104},
publisher = {Springer-Verlag},
address = {Berlin, Heidelberg},
booktitle = {Proceedings of the 16th International Conference on Logic for Programming, Artificial Intelligence, and Reasoning},
pages = {348–370},
numpages = {23},
location = {Dakar, Senegal},
series = {LPAR'10}
}

@inproceedings{
constantSymReg,
title={Transformer-based model for symbolic regression via joint supervised learning},
author={Wenqiang Li and Weijun Li and Linjun Sun and Min Wu and Lina Yu and Jingyi Liu and Yanjie Li and Songsong Tian},
booktitle={The Eleventh International Conference on Learning Representations },
year={2023},
url={https://openreview.net/forum?id=ULzyv9M1j5}
}

@article{lmql,
author = {Beurer-Kellner, Luca and Fischer, Marc and Vechev, Martin},
title = {Prompting Is Programming: A Query Language for Large Language Models},
year = {2023},
issue_date = {June 2023},
publisher = {Association for Computing Machinery},
address = {New York, NY, USA},
volume = {7},
number = {PLDI},
url = {https://doi.org/10.1145/3591300},
doi = {10.1145/3591300},
journal = {Proc. ACM Program. Lang.},
month = jun,
articleno = {186},
numpages = {24}
}

@inproceedings{
dspy,
title={{DSP}y: Compiling Declarative Language Model Calls into State-of-the-Art Pipelines},
author={Omar Khattab and Arnav Singhvi and Paridhi Maheshwari and Zhiyuan Zhang and Keshav Santhanam and Sri Vardhamanan A and Saiful Haq and Ashutosh Sharma and Thomas T. Joshi and Hanna Moazam and Heather Miller and Matei Zaharia and Christopher Potts},
booktitle={The Twelfth International Conference on Learning Representations},
year={2024},
url={https://openreview.net/forum?id=sY5N0zY5Od}
}

@inproceedings{symReg1,
author = {B\l{}\k{a}dek, Iwo and Krawiec, Krzysztof},
title = {Solving symbolic regression problems with formal constraints},
year = {2019},
isbn = {9781450361118},
publisher = {Association for Computing Machinery},
address = {New York, NY, USA},
url = {https://doi.org/10.1145/3321707.3321743},
doi = {10.1145/3321707.3321743},
booktitle = {Proceedings of the Genetic and Evolutionary Computation Conference},
pages = {977–984},
numpages = {8},
location = {Prague, Czech Republic},
series = {GECCO '19}
}

@misc{asympoticSymReg,
      title={Neural-Guided Symbolic Regression with Asymptotic Constraints}, 
      author={Li Li and Minjie Fan and Rishabh Singh and Patrick Riley},
      year={2019},
      eprint={1901.07714},
      archivePrefix={arXiv},
      primaryClass={cs.LG},
      url={https://arxiv.org/abs/1901.07714}, 

}

@article{
dafnyBench,
title={DafnyBench: A Benchmark for Formal Software Verification},
author={Chloe R Loughridge and Qinyi Sun and Seth Ahrenbach and Federico Cassano and Chuyue Sun and Ying Sheng and Anish Mudide and Md Rakib Hossain Misu and Nada Amin and Max Tegmark},
journal={Transactions on Machine Learning Research},
issn={2835-8856},
year={2025},
url={https://openreview.net/forum?id=yBgTVWccIx},
note={}
}

@article{laurel,
author = {Mugnier, Eric and Gonzalez, Emmanuel Anaya and Polikarpova, Nadia and Jhala, Ranjit and Yuanyuan, Zhou},
title = {Laurel: Unblocking Automated Verification with Large Language Models},
year = {2025},
issue_date = {April 2025},
publisher = {Association for Computing Machinery},
address = {New York, NY, USA},
volume = {9},
number = {OOPSLA1},
url = {https://doi.org/10.1145/3720499},
doi = {10.1145/3720499},
journal = {Proc. ACM Program. Lang.},
month = apr,
articleno = {134},
numpages = {27}
}

@inproceedings{clover,
author = {Sun, Chuyue and Sheng, Ying and Padon, Oded and Barrett, Clark},
title = {Clover: Closed-Loop Verifiable Code Generation},
year = {2024},
isbn = {978-3-031-65111-3},
publisher = {Springer-Verlag},
address = {Berlin, Heidelberg},
url = {https://doi.org/10.1007/978-3-031-65112-0\_7},
doi = {10.1007/978-3-031-65112-0\_7},
booktitle = {AI Verification: First International Symposium, SAIV 2024, Montreal, QC, Canada, July 22–23, 2024, Proceedings},
pages = {134–155},
numpages = {22},
location = {Montreal, QC, Canada}
}

@inproceedings{dreamCoder,
author = {Ellis, Kevin and Wong, Catherine and Nye, Maxwell and Sabl\'{e}-Meyer, Mathias and Morales, Lucas and Hewitt, Luke and Cary, Luc and Solar-Lezama, Armando and Tenenbaum, Joshua B.},
title = {DreamCoder: bootstrapping inductive program synthesis with wake-sleep library learning},
year = {2021},
isbn = {9781450383912},
publisher = {Association for Computing Machinery},
address = {New York, NY, USA},
url = {https://doi.org/10.1145/3453483.3454080},
doi = {10.1145/3453483.3454080},
booktitle = {Proceedings of the 42nd ACM SIGPLAN International Conference on Programming Language Design and Implementation},
pages = {835–850},
numpages = {16},
location = {Virtual, Canada},
series = {PLDI 2021}
}

@inproceedings{resource,
author = {Knoth, Tristan and Wang, Di and Polikarpova, Nadia and Hoffmann, Jan},
title = {Resource-guided program synthesis},
year = {2019},
isbn = {9781450367127},
publisher = {Association for Computing Machinery},
address = {New York, NY, USA},
url = {https://doi.org/10.1145/3314221.3314602},
doi = {10.1145/3314221.3314602},
booktitle = {Proceedings of the 40th ACM SIGPLAN Conference on Programming Language Design and Implementation},
pages = {253–268},
numpages = {16},
location = {Phoenix, AZ, USA},
series = {PLDI 2019}
}

@inproceedings{shapeConstrained,
author = {Haider, Christian and Kronberger, Gabriel},
title = {Shape-Constrained Symbolic Regression with NSGA-III},
year = {2022},
isbn = {978-3-031-25311-9},
publisher = {Springer-Verlag},
address = {Berlin, Heidelberg},
url = {https://doi.org/10.1007/978-3-031-25312-6_19},
doi = {10.1007/978-3-031-25312-6_19},
booktitle = {Computer Aided Systems Theory – EUROCAST 2022: 18th International Conference, Las Palmas de Gran Canaria, Spain, February 20–25, 2022, Revised Selected Papers},
pages = {164–172},
numpages = {9},
location = {Las Palmas de Gran Canaria, Spain}
}

@article{typeConstrained,
author = {M\"{u}ndler, Niels and He, Jingxuan and Wang, Hao and Sen, Koushik and Song, Dawn and Vechev, Martin},
title = {Type-Constrained Code Generation with Language Models},
year = {2025},
issue_date = {June 2025},
publisher = {Association for Computing Machinery},
address = {New York, NY, USA},
volume = {9},
number = {PLDI},
url = {https://doi.org/10.1145/3729274},
doi = {10.1145/3729274},
journal = {Proc. ACM Program. Lang.},
month = jun,
articleno = {171},
numpages = {26}
}

@article{chopChop,
author = {Nagy, Shaan and Zhou, Timothy and Polikarpova, Nadia and D'Antoni, Loris},
title = {ChopChop: A Programmable Framework for Semantically Constraining the Output of Language Models},
year = {2026},
issue_date = {January 2026},
publisher = {Association for Computing Machinery},
address = {New York, NY, USA},
volume = {10},
number = {POPL},
url = {https://doi.org/10.1145/3776708},
doi = {10.1145/3776708},
journal = {Proc. ACM Program. Lang.},
month = jan,
articleno = {66},
numpages = {28}
}

@misc{dafnyPro,
      title={DafnyPro: LLM-Assisted Automated Verification for Dafny Programs}, 
      author={Debangshu Banerjee and Olivier Bouissou and Stefan Zetzsche},
      year={2026},
      eprint={2601.05385},
      archivePrefix={arXiv},
      primaryClass={cs.SE},
      url={https://arxiv.org/abs/2601.05385}, 
}

@inproceedings{crane,
title={{CRANE}: Reasoning with constrained {LLM} generation},
author={Debangshu Banerjee and Tarun Suresh and Shubham Ugare and Sasa Misailovic and Gagandeep Singh},
booktitle={Forty-second International Conference on Machine Learning},
year={2025},
url={https://openreview.net/forum?id=wKs9fHYxCV}
}

@misc{neuroSymSynthesis,
      title={Neuro-Symbolic Program Synthesis}, 
      author={Emilio Parisotto and Abdel-rahman Mohamed and Rishabh Singh and Lihong Li and Dengyong Zhou and Pushmeet Kohli},
      year={2016},
      eprint={1611.01855},
      archivePrefix={arXiv},
      primaryClass={cs.AI},
      url={https://arxiv.org/abs/1611.01855}, 
}

@inproceedings{shield1,
author = {Zhu, He and Xiong, Zikang and Magill, Stephen and Jagannathan, Suresh},
title = {An inductive synthesis framework for verifiable reinforcement learning},
year = {2019},
isbn = {9781450367127},
publisher = {Association for Computing Machinery},
address = {New York, NY, USA},
url = {https://doi.org/10.1145/3314221.3314638},
doi = {10.1145/3314221.3314638},
booktitle = {Proceedings of the 40th ACM SIGPLAN Conference on Programming Language Design and Implementation},
pages = {686–701},
numpages = {16},
location = {Phoenix, AZ, USA},
series = {PLDI 2019}
}

@article{impossiblebench,
  title={ImpossibleBench: Measuring LLMs' Propensity of Exploiting Test Cases},
  author={Zhong, Ziqian and Raghunathan, Aditi and Carlini, Nicholas},
  journal={arXiv preprint arXiv:2510.20270},
  year={2025}
}

@misc{barres2025tau2,
      title={$\tau^2$-Bench: Evaluating Conversational Agents in a Dual-Control Environment}, 
      author={Victor Barres and Honghua Dong and Soham Ray and Xujie Si and Karthik Narasimhan},
      year={2025},
      eprint={2506.07982},
      archivePrefix={arXiv},
      primaryClass={cs.AI},
      url={https://arxiv.org/abs/2506.07982}, 
}

@article{gsmsymbolic,
  title={Gsm-symbolic: Understanding the limitations of mathematical reasoning in large language models},
  author={Mirzadeh, Iman and Alizadeh, Keivan and Shahrokhi, Hooman and Tuzel, Oncel and Bengio, Samy and Farajtabar, Mehrdad},
  journal={arXiv preprint arXiv:2410.05229},
  year={2024}
}

@article{lora,
  title={Lora: Low-rank adaptation of large language models.},
  author={Hu, Edward J and Shen, Yelong and Wallis, Phillip and Allen-Zhu, Zeyuan and Li, Yuanzhi and Wang, Shean and Wang, Liang and Chen, Weizhu and others},
  journal={Iclr},
  volume={1},
  number={2},
  pages={3},
  year={2022}
}

@misc{claudeSonnet45,
      title={System Card: Claude Sonnet 4.5},
      author={Anthropic},
      year={2025},
      howpublished={\url{https://www-cdn.anthropic.com/963373e433e489a87a10c823c52a0a013e9172dd.pdf}},
      note={Accessed: 2026-03-18}
}

@misc{qwen3,
      title={Qwen3 Technical Report},
      author={Qwen Team},
      year={2025},
      eprint={2505.09388},
      archivePrefix={arXiv},
      primaryClass={cs.CL},
      url={https://arxiv.org/abs/2505.09388}
}
\clearpage
\newpage
\appendix
\section{Restricted Dafny Grammar}
\label{appen:seachSpace}
\[
\begin{array}{rcl}

\langle program \rangle 
& ::= & 
\langle method\_decl \rangle 
\mid 
\langle function\_decl \rangle 
\\[0.8ex]

\langle method\_decl \rangle 
& ::= & 
\textcolor{blue}{\texttt{method}}\ \langle ident \rangle
\texttt{(} \langle params \rangle \texttt{)}
\ \langle specs \rangle \
\texttt{\{} 
    \langle stmt\_list \rangle 
\texttt{\}} 
\\[0.8ex]

\langle function\_decl \rangle 
& ::= & 
\textcolor{blue}{\texttt{function}}\ \langle ident \rangle
\texttt{(} \langle params \rangle \texttt{)} :
\langle type \rangle
\ \langle specs \rangle
\ \langle expr \rangle
\\[0.8ex]

\langle params \rangle 
& ::= & 
\epsilon 
\mid \langle param \rangle 
\mid \langle param \rangle , \langle params \rangle 
\\[0.8ex]

\langle param \rangle 
& ::= & \langle ident \rangle : \langle type \rangle 
\\[0.8ex]

\langle type \rangle 
& ::= & 
\textcolor{blue}{\texttt{int}} 
\mid \textcolor{blue}{\texttt{bool}} 
\mid \textcolor{blue}{\texttt{string}} 
\mid \textcolor{blue}{\texttt{real}} 
\\[0.8ex]

\langle specs \rangle 
& ::= & 
\epsilon 
\mid \langle spec \rangle \langle specs \rangle 
\\[0.8ex]

\langle spec \rangle 
& ::= & 
\textcolor{blue}{\texttt{requires}}\ \langle formula \rangle 
\mid 
\textcolor{blue}{\texttt{ensures}}\ \langle formula \rangle 
\\[0.8ex]

\langle stmt\_list \rangle 
& ::= & 
\epsilon 
\mid \langle stmt \rangle \langle stmt\_list \rangle 
\\[0.8ex]

\langle stmt \rangle 
& ::= & 
\langle assign\_stmt \rangle
\mid \langle if\_stmt \rangle
\mid \langle while\_stmt \rangle
\mid \langle assert\_stmt \rangle
\mid \langle call\_stmt \rangle 
\\[0.8ex]

\langle assign\_stmt \rangle 
& ::= & 
\langle ident \rangle := \langle expr \rangle ; 
\\[0.8ex]

\langle call\_stmt \rangle 
& ::= & 
\langle ident \rangle \texttt{(} \langle args \rangle \texttt{)} ; 
\\[0.8ex]

\langle args \rangle 
& ::= & 
\epsilon
\mid \langle expr \rangle
\mid \langle expr \rangle , \langle args \rangle 
\\[0.8ex]

\langle if\_stmt \rangle 
& ::= & 
\textcolor{blue}{\texttt{if}}\ \texttt{(} \langle formula \rangle \texttt{)}
\texttt{\{} \langle stmt\_list \rangle \texttt{\}} 
\\
& & 
\mid 
\textcolor{blue}{\texttt{if}}\ \texttt{(} \langle formula \rangle \texttt{)}
\texttt{\{} \langle stmt\_list \rangle \texttt{\}}
\ \textcolor{blue}{\texttt{else}}\ 
\texttt{\{} \langle stmt\_list \rangle \texttt{\}} 
\\[0.8ex]

\langle while\_stmt \rangle 
& ::= & 
\textcolor{blue}{\texttt{while}}\ \texttt{(} \langle formula \rangle \texttt{)}
\ \langle loop\_annots \rangle 
\ \texttt{\{} \langle stmt\_list \rangle \texttt{\}} 
\\[0.8ex]

\langle loop\_annots \rangle 
& ::= & 
\epsilon 
\mid \langle loop\_annot \rangle \langle loop\_annots \rangle 
\\[0.8ex]

\langle loop\_annot \rangle 
& ::= & 
\textcolor{blue}{\texttt{invariant}}\ \langle formula \rangle
\mid 
\textcolor{blue}{\texttt{decreases}}\ \langle expr \rangle 
\\[0.8ex]

\langle assert\_stmt \rangle 
& ::= & 
\textcolor{blue}{\texttt{assert}}\ \langle formula \rangle ; 
\\[0.8ex]

\langle expr \rangle 
& ::= & 
\langle aexpr \rangle \mid \langle sexpr \rangle 
\\[0.8ex]

\langle aexpr \rangle 
& ::= & 
\langle int\_lit \rangle
\mid \langle ident \rangle
\mid \langle aexpr \rangle \; \langle arith\_op \rangle \; \langle aexpr \rangle
\mid ( \langle aexpr \rangle )
\mid \langle ident \rangle ( \langle args \rangle ) 
\\[0.8ex]

\langle sexpr \rangle 
& ::= & 
\langle string\_lit \rangle
\mid \langle ident \rangle
\mid ( \langle sexpr \rangle )
\mid \langle ident \rangle ( \langle args \rangle ) 
\\[0.8ex]

\langle formula \rangle 
& ::= & 
\langle expr \rangle \; \langle rel\_op \rangle \; \langle expr \rangle
\\ & | & 
! \; \langle formula \rangle
\\ & | & 
( \langle formula \rangle )
\\ & | & 
\langle ident \rangle ( \langle args \rangle )
\\ & | & 
\textcolor{blue}{\texttt{forall}} \; \langle bound\_vars \rangle :: \langle formula \rangle
\\ & | & 
\textcolor{blue}{\texttt{exists}} \; \langle bound\_vars \rangle :: \langle formula \rangle
\\[0.8ex]
\langle bound\_vars \rangle 
& ::= & 
\langle ident \rangle : \langle type \rangle
\mid \langle ident \rangle : \langle type \rangle , \langle bound\_vars \rangle 
\\[0.8ex]

\langle arith\_op \rangle 
& ::= & + \mid - \mid * \mid / 
\\[0.8ex]

\langle rel\_op \rangle 
& ::= & = \mid \neq \mid < \mid > \mid \leq \mid \ge 
\\[0.8ex]

\langle literal \rangle 
& ::= & \langle int\_lit \rangle 
\mid \langle string\_lit \rangle 
\mid \textcolor{blue}{\texttt{true}} 
\mid \textcolor{blue}{\texttt{false}} 
\\[0.8ex]

\langle int\_lit \rangle 
& ::= & \texttt{0} \mid \texttt{1} \mid \cdots \mid \texttt{9} \text{ (and sequences thereof)} 
\\[0.8ex]

\langle string\_lit \rangle 
& ::= & "\langle char \rangle^*" 
\\[0.8ex]

\langle ident \rangle 
& ::= & \langle letter \rangle \mid \langle letter \rangle \langle ident \rangle 
\\[0.8ex]

\langle letter \rangle 
& ::= & a \mid b \mid \cdots \mid z \mid A \mid B \mid \cdots \mid Z 
\\[0.8ex]

\langle digit \rangle 
& ::= & 0 \mid 1 \mid \cdots \mid 9 
\end{array}
\]
\section{Rejection Sampling Details}
\label{appen:rejectionSampling}
\paragraph{Rejection Sampling Details.}
Given a target distribution $\pi_t$ and a proposal distribution $\pi_p$ with $S(\pi_t) \subseteq S(\pi_p)$, assume there exists $M \geq 1$ such that
$
\forall z \in \Omega, \quad D_{\pi_t}(z) \leq M \cdot D_{\pi_p}(z).
$
The procedure samples $z \sim \pi_p$ and $u \sim \mathrm{Uniform}(0,1)$, and accepts $z$ iff
$
u \leq \frac{D_{\pi_t}(z)}{M \cdot D_{\pi_p}(z)}.
$
Otherwise, the sample is rejected and the process repeats.
Accepted samples follow $\pi_t$. Moreover, any $z \notin S(\pi_t)$ is always rejected since $D_{\pi_t}(z)=0$, ensuring all accepted samples lie in $S(\pi_t)$. The acceptance probability is $1/M$, so efficiency depends on how tightly $\pi_p$ upper-bounds $\pi_t$. For this paper, we assume $M = 1$.
\section{Library Functions for Symbolic Regression}
\label{appen:symRegLib}
\paragraph{Library Function Set $\fset{}$}

\begin{itemize}

\item $\funcIt{abs}(x:\typeit{real}) \rightarrow r:\typeit{real}$  
\textbf{ensures} $r \ge 0$  
\textbf{ensures} $r = x \lor r = -x$

\item $\funcIt{max}(x:\typeit{real},y:\typeit{real}) \rightarrow r:\typeit{real}$  
\textbf{ensures} $(x \le r) \land (y \le r)$  
\textbf{ensures} $(r = x) \lor (r = y)$

\item $\funcIt{min}(x:\typeit{real},y:\typeit{real}) \rightarrow r:\typeit{real}$  
\textbf{ensures} $(x \ge r) \land (y \ge r)$  
\textbf{ensures} $(r = x) \lor (r = y)$

\item $\funcIt{sin}(x:\typeit{real}) \rightarrow r:\typeit{real}$  
\textbf{ensures} $-1 \le r \le 1$  
\textbf{ensures} $(x = 0) \Rightarrow (r = 0)$

\item $\funcIt{cos}(x:\typeit{real}) \rightarrow r:\typeit{real}$  
\textbf{ensures} $-1 \le r \le 1$  
\textbf{ensures} $(x = 0) \Rightarrow (r = 1)$

\item $\funcIt{pi}(x:\typeit{real}) \rightarrow r:\typeit{real}$  
\textbf{requires} $x \ge 0$  
\textbf{ensures} $3.141592653589790 \cdot x \le r$  
\textbf{ensures} $r \le 3.141592653589793 \cdot x$

\item $\funcIt{pow}(x:\typeit{real}, d:\typeit{real}) \rightarrow r:\typeit{real}$  
\textbf{requires} $x \ge 0$  
\textbf{requires} $(x \neq 0) \lor (d \ge 0)$  
\textbf{ensures} $r \ge 0$  
\textbf{ensures} $(x > 0) \Rightarrow (r > 0)$  
\textbf{ensures} $(d = 0) \Rightarrow (r = 1)$  
\textbf{ensures} $(x \ge 1 \land d \ge 0) \Rightarrow (r \ge 1)$  
\textbf{ensures} $(x \ge 1 \land d \le 0) \Rightarrow (r \le 1)$  
\textbf{ensures} $(x \le 1 \land d \ge 0) \Rightarrow (r \le 1)$  
\textbf{ensures} $(x \le 1 \land d \le 0) \Rightarrow (r \ge 1)$


\item $\funcIt{sqrt}(x:\typeit{real}) \rightarrow r:\typeit{real}$  
\textbf{requires} $x \ge 0$  
\textbf{ensures} $r \ge 0$  
\textbf{ensures} $r \cdot r = x$  


\item $\funcIt{exp}(x:\typeit{real}) \rightarrow r:\typeit{real}$  
\textbf{ensures} $r \ge 0$  
\textbf{ensures} $(x \le 0) \Rightarrow (r \le 1)$  
\textbf{ensures} $(x \ge 0) \Rightarrow (r \ge 1)$  
\textbf{ensures} $(x = 1) \Rightarrow (2.71 \le r \le 2.72)$  
\textbf{ensures} $r \ge (1 + x)$

\item $\funcIt{log}(x:\typeit{real}) \rightarrow r:\typeit{real}$
\textbf{requires} $x > 0$  
\textbf{ensures} $(x \ge 1) \Rightarrow (r \ge 0)$  
\textbf{ensures} $(x \le 1) \Rightarrow (r \le 0)$  
\textbf{ensures} $(x = 1) \Rightarrow (r = 0)$  

\item $\funcIt{neural1}(x:\typeit{real}) \rightarrow r:\typeit{real}$
\item $\funcIt{neural2}(x1:\typeit{real}, x2:\typeit{real}) \rightarrow r:\typeit{real}$
\item $\funcIt{neural3}(x1:\typeit{real}, x2:\typeit{real}, x3:\typeit{real}) \rightarrow r:\typeit{real}$

\end{itemize}

\section{Additional Axioms for Symbolic Regression}
\label{appen:symRegAxiom}
\subsection{Axiom Syntax}
\begin{align*}
    \langle \text{axiom} \rangle ::= \langle formula \rangle 
\end{align*}
\paragraph{Axiom Set $\axiom{}$}

\begin{itemize}

\item \textbf{Interval Multiplication Rule: }$\forall\, x, y, x_{lb}, x_{ub}, y_{lb}, y_{ub} \in \mathbb{R}.\; (x_{lb} \le x \le x_{ub}) \wedge (y_{lb} \le y \le y_{ub}) \implies$
\[
\funcIt{min}(x_{lb} \times y_{lb},\, x_{lb}\times y_{ub},\, x_{ub} \times y_{lb},\, x_{ub} \times y_{ub}) \le x \times y \le \funcIt{max}(x_{lb}\times y_{lb},\, x_{lb}\times y_{ub},\, x_{ub} \times y_{lb},\, x_{ub}\times y_{ub}).
\]

\item $\forall\, x, d_1, d_2 \in \mathbb{R}.\; (0 \le x \le 1) \wedge (d_1 \le d_2) \implies \funcIt{pow}(x,d_1) \ge \funcIt{pow}(x,d_2)$.

\item $\forall\, x, d_1, d_2 \in \mathbb{R}.\; (x \ge 1) \wedge (d_1 \le d_2) \implies \funcIt{pow}(x,d_1) \le \funcIt{pow}(x,d_2)$.

\item $\forall\, x \in \mathbb{R}.\; \funcIt{exp}(x) = \funcIt{pow}(\funcIt{exp}(1),\, x)$.

\item $\forall\, x \in \mathbb{R}.\; (x > 0) \implies \big(\forall\, r \in \mathbb{R}.\; r = \funcIt{log}(x) \iff \funcIt{exp}(r) = x\big)$.
\item $\forall\, x \in \mathbb{R}.\; (x \geq 0) \implies (\funcIt{sqrt}(x) = \funcIt{pow}(x, 0.5))$.
\end{itemize}

\section{Library Functions for Dafny Program Verification}
\label{appen:dafnyBenchLib}
\paragraph{Library Function Set $\fset{}$}

\begin{itemize}

\item $\funcIt{claude}(prompt:\typeit{str}) \rightarrow r:\typeit{str}$

\item $\funcIt{noDiff}(base\_code:\typeit{str}, annotated\_code:\typeit{str}) \rightarrow r:\typeit{bool}$

\item $\funcIt{noDiffWithErrorMsg}(base\_code:\typeit{str}, annotated\_code:\typeit{str}) \rightarrow r:\typeit{str}$

\item $\funcIt{dafnyVerifier}(dafny\_code:\typeit{str}) \rightarrow r:\typeit{bool}$

\item $\funcIt{dafnyVerifierWithErrorMsg}(dafny\_code:\typeit{str}) \rightarrow r:\typeit{str}$

\item $\funcIt{extractDafnyCode}(text:\typeit{str}) \rightarrow r:\typeit{str}$

\item $\funcIt{isSubstring}(sub:\typeit{str}, full:\typeit{str}) \rightarrow r:\typeit{bool}$

\end{itemize}

\section{Additional Axioms for Dafny Program Verification}
\label{appen:danfyBenchAxiom}
\paragraph{Axiom Set $\axiom{}$}

\begin{itemize}

\item \textbf{Reflexivity:} $\forall\, a, b \in \alpN{*}.\; (a = b) \implies \funcIt{noDiff}(a, b)$.

\item \textbf{Transitivity:} $\forall\, a, b, c \in \alpN{*}.\; \funcIt{noDiff}(a, b) \wedge \funcIt{noDiff}(b, c) \implies \funcIt{noDiff}(a, c)$.

\item $\forall\, a, b \in \alpN{*}.\; \funcIt{noDiffWithErrorMsg}(a, b) = \text{``''} \iff \funcIt{noDiff}(a, b)$.

\item $\forall\, c \in \alpN{*}.\; \funcIt{dafnyVerifierWithErrorMsg}(c) = \text{``''} \iff \funcIt{dafnyVerifier}(c)$.

\end{itemize}

\section{Library Functions for $\tau^2$-bench}
\label{appen:tauBenchLib}
\paragraph{Library Function Set $\fset{}$}

\begin{itemize}

\item $\funcIt{qwen}(prompt:\typeit{str}) \rightarrow r:\typeit{str}$

\item $\funcIt{agentCCheck}(tool:\typeit{str}, trace:\typeit{str}, domain:\typeit{str}) \rightarrow r:\typeit{bool}$

\item $\funcIt{isToolCall}(response:\typeit{str}) \rightarrow r:\typeit{bool}$

\item $\funcIt{parseToolCallName}(response:\typeit{str}) \rightarrow r:\typeit{str}$

\end{itemize}

\section{Additional Axioms for $\tau^2$-bench}
\label{appen:tauBenchAxiom}
\paragraph{Axiom Set $\axiom{}$}

\begin{itemize}

\item \textbf{Non-tool-call safety:} $\forall\, r, t, d \in \alpN{*}.\; \neg\funcIt{isToolCall}(r) \implies \funcIt{agentCCheck}(r, t, d)$.

\item \textbf{Transfer-to-human satisfiability:} $\forall\, r, t, d \in \alpN{*}.\; \funcIt{isToolCall}(r) \wedge \funcIt{parseToolCallName}(r) = \text{``transfer\_to\_human\_agents''} \implies \funcIt{agentCCheck}(r, t, d)$.

\end{itemize}

\section{Library Functions for GSM-Symbolic}
\label{appen:gsmSymbolicLib}
\paragraph{Library Function Set $\fset{}$}

\begin{itemize}

\item $\funcIt{qwen}(prompt:\typeit{str}) \rightarrow r:\typeit{str}$

\item $\funcIt{gsmParser}(expression:\typeit{str}) \rightarrow r:\typeit{bool}$

\item $\funcIt{gsmParserWithErrorMsg}(expression:\typeit{str}) \rightarrow r:\typeit{str}$

\item $\funcIt{extractExpression}(text:\typeit{str}) \rightarrow r:\typeit{str}$

\end{itemize}

\section{Additional Axioms for GSM-Symbolic}
\label{appen:gsmSymbolicAxiom}
\paragraph{Axiom Set $\axiom{}$}

\begin{itemize}

\item $\funcIt{gsmParser}(\text{``\textless\textless\;\textgreater\textgreater''})$.

\item $\forall\, e \in \alpN{*}.\; \funcIt{gsmParserWithErrorMsg}(e) = \text{``''} \iff \funcIt{gsmParser}(e)$.

\end{itemize}

\section{Example Agents for LLM Assisted Program Verification}
\label{appen:exampleDafnyAgent}

\noindent The following is an example verified agent program for the Dafny program verification task. The agent defines and invokes three FGGMs---$\funcIt{initialFGGM}$ (Fig.~\ref{fig:initialFGGMDef}), $\funcIt{diffErrorFGGM}$ (Fig.~\ref{fig:diffErrorFGGMDef}), and $\funcIt{verifierErrorFGGM}$ (Fig.~\ref{fig:verifierErrorFGGMDef})---each with local contract $\loutspec{}: \funcIt{noDiff}(base\_program, \cdot)$ and a fallback that returns the original program (justified by the reflexivity axiom). All three share $GMid := \funcIt{claude}$ and differ only in the prompting function $\prompt$: the first generates an initial annotation with context-specific guidance, the second repairs after a diff-checker failure, and the third repairs after a verification failure with error-specific feedback. The agent program (Fig.~\ref{fig:exampleDafnyAgentNew}) dispatches to the appropriate FGGM on each loop iteration.

\newpage
\begin{figure}[H]
\begin{tcolorbox}[
colback=lightgraybg,
colframe=lightgraybg,
boxrule=0pt,
arc=2pt,
left=4pt,right=4pt,top=4pt,bottom=4pt
]
\scriptsize
\[
\begin{aligned}
&\textbf{FGGM Definition:} \\[-0.25em]
&\quad id := \funcIt{initialFGGM} \\[-0.25em]
&\quad GMid := \funcIt{claude} \\[-0.25em]
&\quad typeSig := (base\_program:\typeit{str}) \to \typeit{str} \\[-0.25em]
&\quad \linspec{} := \kw{true} \\[-0.25em]
&\quad \loutspec{} := \funcIt{noDiff}(base\_program,\ f(base\_program)) \\[-0.25em]
&\quad \prompt := \kw{function}\ \prompt(base\_program:\typeit{str}) : (\typeit{str})\ \{ \\[-0.25em]
&\quad\quad \kw{var}\ g:\typeit{str} := \\[-0.25em]
&\quad\quad\quad \text{``You are an expert in Dafny verification. Given the following unannotated Dafny code, ''} \\[-0.25em]
&\quad\quad\quad + \text{``add appropriate loop invariants, assertions, and decreases clauses to make it verify.\textbackslash n\textbackslash n''} \\[-0.25em]
&\quad\quad\quad + \text{``IMPORTANT RULES:\textbackslash n''} \\[-0.25em]
&\quad\quad\quad + \text{``1. Do NOT modify the method signatures, requires clauses, or ensures clauses\textbackslash n''} \\[-0.25em]
&\quad\quad\quad + \text{``2. Only add invariants, assertions, and decreases clauses\textbackslash n''} \\[-0.25em]
&\quad\quad\quad + \text{``3. Do NOT change the logic or control flow\textbackslash n''} \\[-0.25em]
&\quad\quad\quad + \text{``4. Keep invariants MINIMAL - only what's necessary for postconditions\textbackslash n''} \\[-0.25em]
&\quad\quad\quad + \text{``5. Add decreases clauses for termination\textbackslash n''}; \\[0.2em]
&\quad\quad \kw{if}\ (\funcIt{isSubstring}(\text{``lemma ''},\ base\_program))\ \{ \\[-0.25em]
&\quad\quad\quad g := g \\[-0.25em]
&\quad\quad\quad\quad + \text{``6. LEMMAS: If a lemma has a non-trivial postcondition, add explicit proof steps:\textbackslash n''} \\[-0.25em]
&\quad\quad\quad\quad + \text{``\quad - Use assertions to unfold function definitions\textbackslash n''} \\[-0.25em]
&\quad\quad\quad\quad + \text{``\quad - Add case analysis or pattern matching if needed\textbackslash n''} \\[-0.25em]
&\quad\quad\quad\quad + \text{``\quad - Call other lemmas to establish intermediate facts\textbackslash n''} \\[-0.25em]
&\quad\quad\quad\quad + \text{``\quad - Do NOT leave lemma bodies empty unless postcondition is trivial\textbackslash n''};\ \} \\[0.2em]
&\quad\quad \kw{if}\ (\funcIt{isSubstring}(\text{``<''},\ base\_program)\ \wedge\ \funcIt{isSubstring}(\text{``>''},\ base\_program))\ \{ \\[-0.25em]
&\quad\quad\quad g := g \\[-0.25em]
&\quad\quad\quad\quad + \text{``7. GENERICS: When calling generic functions/lemmas, propagate type constraints:\textbackslash n''} \\[-0.25em]
&\quad\quad\quad\quad + \text{``\quad - If a called function requires T(00), declare it in the caller too\textbackslash n''} \\[-0.25em]
&\quad\quad\quad\quad + \text{``\quad - Explicitly instantiate type parameters when needed\textbackslash n''};\ \} \\[0.2em]
&\quad\quad \kw{if}\ (\funcIt{isSubstring}(\text{``set<''},\ base\_program)\ \vee\ \funcIt{isSubstring}(\text{``set ''},\ base\_program))\ \{ \\[-0.25em]
&\quad\quad\quad g := g \\[-0.25em]
&\quad\quad\quad\quad + \text{``8. SETS: Use simple, direct assertions for set reasoning:\textbackslash n''} \\[-0.25em]
&\quad\quad\quad\quad + \text{``\quad - Assert set equality decompositions: A == B + (A - B)\textbackslash n''} \\[-0.25em]
&\quad\quad\quad\quad + \text{``\quad - Use cardinality constraints: |A| == |B| + |C|\textbackslash n''} \\[-0.25em]
&\quad\quad\quad\quad + \text{``\quad - Avoid complex witness variables or case analysis\textbackslash n''} \\[-0.25em]
&\quad\quad\quad\quad + \text{``\quad - Let Dafny's built-in set theory automation work\textbackslash n''};\ \} \\[0.2em]
&\quad\quad \kw{if}\ (\funcIt{isSubstring}(\text{``multiset''},\ base\_program))\ \{ \\[-0.25em]
&\quad\quad\quad g := g \\[-0.25em]
&\quad\quad\quad\quad + \text{``9. MULTISETS: Match the structure of postconditions directly\textbackslash n''} \\[-0.25em]
&\quad\quad\quad\quad + \text{``\quad - Use simple decomposition assertions\textbackslash n''} \\[-0.25em]
&\quad\quad\quad\quad + \text{``\quad - Avoid unnecessary intermediate steps\textbackslash n''};\ \} \\[0.2em]
&\quad\quad \kw{return}\ g + \text{``\textbackslash nBase program:\textbackslash n''} + base\_program \\[-0.25em]
&\quad\quad\quad + \text{``\textbackslash n\textbackslash nReturn ONLY the annotated Dafny code in a \textasciigrave\textasciigrave\textasciigrave dafny code block.''};\ \} \\[0.3em]
&\quad \fallback := \kw{function}\ \fallback(base\_program:\typeit{str},\ y:\typeit{str}) : (\typeit{str}) \\[-0.25em]
&\quad\quad\quad \kw{ensures}\ \funcIt{noDiff}(base\_program,\ \fallback(base\_program, y)) \\[-0.25em]
&\quad \{ \\[-0.25em]
&\quad\quad\quad \kw{return}\ base\_program; \\[-0.25em]
&\quad \}
\end{aligned}
\]
\end{tcolorbox}
\caption{\small FGGM definition for $\funcIt{initialFGGM}$. The prompting function builds context-specific guidance by checking for lemmas, generics, sets, and multisets in the base program. The fallback returns the original program, satisfying $\loutspec{}$ by the reflexivity axiom.}
\label{fig:initialFGGMDef}
\end{figure}

\newpage
\begin{figure}[H]
\begin{tcolorbox}[
colback=lightgraybg,
colframe=lightgraybg,
boxrule=0pt,
arc=2pt,
left=4pt,right=4pt,top=4pt,bottom=4pt
]
\scriptsize
\[
\begin{aligned}
&\textbf{FGGM Definition:} \\[-0.25em]
&\quad id := \funcIt{diffErrorFGGM} \\[-0.25em]
&\quad GMid := \funcIt{claude} \\[-0.25em]
&\quad typeSig := (base\_program:\typeit{str},\ diff\_error:\typeit{str},\ prev\_attempt:\typeit{str}) \to \typeit{str} \\[-0.25em]
&\quad \linspec{} := \kw{true} \\[-0.25em]
&\quad \loutspec{} := \funcIt{noDiff}(base\_program,\ f(base\_program,\ diff\_error,\ prev\_attempt)) \\[-0.25em]
&\quad \prompt := \kw{function}\ \prompt(base\_program:\typeit{str},\ diff\_error:\typeit{str},\ prev\_attempt:\typeit{str}) : (\typeit{str})\ \{ \\[-0.25em]
&\quad\quad \kw{return} \\[-0.25em]
&\quad\quad\quad \text{``The previous Dafny annotation modified the base program logic incorrectly.\textbackslash n\textbackslash n''} \\[-0.25em]
&\quad\quad\quad + \text{``noDiff error:\textbackslash n''} + diff\_error + \text{``\textbackslash n\textbackslash n''} \\[-0.25em]
&\quad\quad\quad + \text{``Base program:\textbackslash n''} + base\_program + \text{``\textbackslash n\textbackslash n''} \\[-0.25em]
&\quad\quad\quad + \text{``Previous attempt:\textbackslash n''} + prev\_attempt + \text{``\textbackslash n\textbackslash n''} \\[-0.25em]
&\quad\quad\quad + \text{``Please add annotations WITHOUT changing:\textbackslash n''} \\[-0.25em]
&\quad\quad\quad + \text{``- Method signatures\textbackslash n''} \\[-0.25em]
&\quad\quad\quad + \text{``- Requires/ensures clauses\textbackslash n''} \\[-0.25em]
&\quad\quad\quad + \text{``- Program logic or control flow\textbackslash n''} \\[-0.25em]
&\quad\quad\quad + \text{``Only add invariants, assertions, and decreases clauses.\textbackslash n''} \\[-0.25em]
&\quad\quad\quad + \text{``Return ONLY the corrected Dafny code in a \textasciigrave\textasciigrave\textasciigrave dafny code block.''};\ \} \\[0.3em]
&\quad \fallback := \kw{function}\ \fallback(base\_program:\typeit{str},\ diff\_error:\typeit{str},\ prev\_attempt:\typeit{str},\ y:\typeit{str}) : (\typeit{str}) \\[-0.25em]
&\quad\quad\quad \kw{ensures}\ \funcIt{noDiff}(base\_program,\ \fallback(base\_program,\ diff\_error,\ prev\_attempt,\ y)) \\[-0.25em]
&\quad \{ \\[-0.25em]
&\quad\quad\quad \kw{return}\ base\_program; \\[-0.25em]
&\quad \}
\end{aligned}
\]
\end{tcolorbox}
\caption{\small FGGM definition for $\funcIt{diffErrorFGGM}$. Invoked when the previous iteration's output failed the diff checker. The prompt includes the diff error and previous attempt, instructing the model to preserve the base program logic.}
\label{fig:diffErrorFGGMDef}
\end{figure}

\newpage
\begin{figure}[H]
\begin{tcolorbox}[
colback=lightgraybg,
colframe=lightgraybg,
boxrule=0pt,
arc=2pt,
left=4pt,right=4pt,top=4pt,bottom=4pt
]
\scriptsize
\[
\begin{aligned}
&\textbf{FGGM Definition:} \\[-0.25em]
&\quad id := \funcIt{verifierErrorFGGM} \\[-0.25em]
&\quad GMid := \funcIt{claude} \\[-0.25em]
&\quad typeSig := (base\_program:\typeit{str},\ verify\_error:\typeit{str},\ prev\_attempt:\typeit{str}) \to \typeit{str} \\[-0.25em]
&\quad \linspec{} := \kw{true} \\[-0.25em]
&\quad \loutspec{} := \funcIt{noDiff}(base\_program,\ f(base\_program,\ verify\_error,\ prev\_attempt)) \\[-0.25em]
&\quad \prompt := \kw{function}\ \prompt(base\_program:\typeit{str},\ verify\_error:\typeit{str},\ prev\_attempt:\typeit{str}) : (\typeit{str})\ \{ \\[-0.25em]
&\quad\quad \kw{var}\ sg:\typeit{str} := \\[-0.25em]
&\quad\quad\quad \text{``Focus on:\textbackslash n''} \\[-0.25em]
&\quad\quad\quad + \text{``- Strengthening invariants MINIMALLY\textbackslash n''} \\[-0.25em]
&\quad\quad\quad + \text{``- Adding simple decomposition assertions\textbackslash n''} \\[-0.25em]
&\quad\quad\quad + \text{``- Ensuring decreases clauses are correct\textbackslash n''}; \\[0.2em]
&\quad\quad \kw{if}\ (\funcIt{isSubstring}(\text{``timeout''},\ verify\_error)\ \vee\ \funcIt{isSubstring}(\text{``resource''},\ verify\_error))\ \{ \\[-0.25em]
&\quad\quad\quad sg := \text{``TIMEOUT/RESOURCE ISSUE: Break down the proof into smaller steps:\textbackslash n''} \\[-0.25em]
&\quad\quad\quad\quad + \text{``- Add strategic intermediate assertions\textbackslash n''} \\[-0.25em]
&\quad\quad\quad\quad + \text{``- Call helper lemmas to establish sub-goals\textbackslash n''} \\[-0.25em]
&\quad\quad\quad\quad + \text{``- Simplify complex invariants\textbackslash n''};\ \} \\[0.2em]
&\quad\quad \kw{if}\ (\funcIt{isSubstring}(\text{``lemma''},\ verify\_error)\ \vee\ \funcIt{isSubstring}(\text{``postcondition''},\ verify\_error))\ \{ \\[-0.25em]
&\quad\quad\quad sg := \text{``LEMMA PROOF ISSUE: Add explicit proof steps in lemma body:\textbackslash n''} \\[-0.25em]
&\quad\quad\quad\quad + \text{``- Assert intermediate facts that lead to postcondition\textbackslash n''} \\[-0.25em]
&\quad\quad\quad\quad + \text{``- Unfold recursive function definitions\textbackslash n''} \\[-0.25em]
&\quad\quad\quad\quad + \text{``- Use case analysis if needed\textbackslash n''};\ \} \\[0.2em]
&\quad\quad \kw{if}\ (\funcIt{isSubstring}(\text{``invariant''},\ verify\_error))\ \{ \\[-0.25em]
&\quad\quad\quad sg := \text{``INVARIANT ISSUE: Adjust loop invariants:\textbackslash n''} \\[-0.25em]
&\quad\quad\quad\quad + \text{``- Ensure invariants are maintained after each iteration\textbackslash n''} \\[-0.25em]
&\quad\quad\quad\quad + \text{``- Add bounds and relationships incrementally\textbackslash n''} \\[-0.25em]
&\quad\quad\quad\quad + \text{``- Keep invariants minimal but sufficient\textbackslash n''};\ \} \\[0.2em]
&\quad\quad \kw{if}\ (\funcIt{isSubstring}(\text{``type''},\ verify\_error)\ \vee\ \funcIt{isSubstring}(\text{``constraint''},\ verify\_error))\ \{ \\[-0.25em]
&\quad\quad\quad sg := \text{``TYPE CONSTRAINT ISSUE: Check generic type parameters:\textbackslash n''} \\[-0.25em]
&\quad\quad\quad\quad + \text{``- Propagate nonemptiness constraints like T(00)\textbackslash n''} \\[-0.25em]
&\quad\quad\quad\quad + \text{``- Ensure type parameters match between caller and callee\textbackslash n''};\ \} \\[0.2em]
&\quad\quad \kw{return} \\[-0.25em]
&\quad\quad\quad \text{``The previous Dafny annotation preserved the base logic but failed verification.\textbackslash n\textbackslash n''} \\[-0.25em]
&\quad\quad\quad + \text{``Verification error:\textbackslash n''} + verify\_error + \text{``\textbackslash n\textbackslash n''} \\[-0.25em]
&\quad\quad\quad + sg + \text{``\textbackslash n''} \\[-0.25em]
&\quad\quad\quad + \text{``Base program:\textbackslash n''} + base\_program + \text{``\textbackslash n\textbackslash n''} \\[-0.25em]
&\quad\quad\quad + \text{``Previous attempt:\textbackslash n''} + prev\_attempt + \text{``\textbackslash n\textbackslash n''} \\[-0.25em]
&\quad\quad\quad + \text{``Please fix the annotations. Do NOT modify method signatures, requires, or ensures clauses.\textbackslash n''} \\[-0.25em]
&\quad\quad\quad + \text{``Return ONLY the corrected Dafny code in a \textasciigrave\textasciigrave\textasciigrave dafny code block.''};\ \} \\[0.3em]
&\quad \fallback := \kw{function}\ \fallback(base\_program:\typeit{str},\ verify\_error:\typeit{str},\ prev\_attempt:\typeit{str},\ y:\typeit{str}) : (\typeit{str}) \\[-0.25em]
&\quad\quad\quad \kw{ensures}\ \funcIt{noDiff}(base\_program,\ \fallback(base\_program,\ verify\_error,\ prev\_attempt,\ y)) \\[-0.25em]
&\quad \{ \\[-0.25em]
&\quad\quad\quad \kw{return}\ base\_program; \\[-0.25em]
&\quad \}
\end{aligned}
\]
\end{tcolorbox}
\caption{\small FGGM definition for $\funcIt{verifierErrorFGGM}$. Invoked when the previous iteration passed the diff checker but failed verification. The prompting function analyzes the verification error to provide targeted guidance (timeout, lemma, invariant, or type issues).}
\label{fig:verifierErrorFGGMDef}
\end{figure}

\newpage
\begin{figure}[H]
\begin{tcolorbox}[
colback=lightgraybg,
colframe=lightgraybg,
boxrule=0pt,
arc=2pt,
left=4pt,right=4pt,top=4pt,bottom=4pt
]
\scriptsize
\[
\begin{aligned}
&\kw{method}\ \funcIt{agent}(base\_program:\typeit{str})\ \kw{returns}\ (r:\typeit{str}) \\[-0.25em]
&\quad \kw{ensures}\ \funcIt{noDiff}(base\_program, r) \\[-0.25em]
&\{ \\[-0.25em]
&\quad \kw{var}\ max\_attempts:\typeit{int} := 5; \\[-0.25em]
&\quad \kw{var}\ attempt:\typeit{int} := 0; \\[-0.25em]
&\quad \kw{var}\ best:\typeit{str} := base\_program; \\[-0.25em]
&\quad \kw{var}\ best\_verified:\typeit{bool} := \kw{false}; \\[0.3em]
&\quad \kw{var}\ prev\_diff\_err:\typeit{str} := \text{``''}; \\[-0.25em]
&\quad \kw{var}\ prev\_ver\_err:\typeit{str} := \text{``''}; \\[-0.25em]
&\quad \kw{var}\ prev\_code:\typeit{str} := \text{``''}; \\[0.3em]
&\quad \kw{while}\ (attempt < max\_attempts) \\[-0.25em]
&\quad\quad \kw{invariant}\ \funcIt{noDiff}(base\_program, best) \\[-0.25em]
&\quad\quad \kw{decreases}\ max\_attempts - attempt \\[-0.25em]
&\quad \{ \\[-0.25em]
&\quad\quad \kw{var}\ y:\typeit{str}; \\[0.2em]
&\quad\quad \kw{if}\ (attempt = 0)\ \{ \\[-0.25em]
&\quad\quad\quad y := \funcIt{initialFGGM}(base\_program); \\[-0.25em]
&\quad\quad \}\ \kw{else}\ \{ \\[-0.25em]
&\quad\quad\quad \kw{if}\ (prev\_diff\_err \neq \text{``''})\ \{ \\[-0.25em]
&\quad\quad\quad\quad y := \funcIt{diffErrorFGGM}(base\_program,\ prev\_diff\_err,\ prev\_code); \\[-0.25em]
&\quad\quad\quad \}\ \kw{else}\ \{ \\[-0.25em]
&\quad\quad\quad\quad y := \funcIt{verifierErrorFGGM}(base\_program,\ prev\_ver\_err,\ prev\_code); \\[-0.25em]
&\quad\quad\quad \} \\[-0.25em]
&\quad\quad \} \\[0.2em]
&\quad\quad prev\_diff\_err := \text{``''}; \\[-0.25em]
&\quad\quad prev\_ver\_err := \text{``''}; \\[-0.25em]
&\quad\quad prev\_code := y; \\[0.2em]
&\quad\quad \kw{var}\ diff\_err:\typeit{str} := \funcIt{noDiffWithErrorMsg}(base\_program,\ y); \\[0.3em]
&\quad\quad \kw{if}\ (diff\_err = \text{``''})\ \{ \\[-0.25em]
&\quad\quad\quad \kw{var}\ v\_err:\typeit{str} := \funcIt{dafnyVerifierWithErrorMsg}(y); \\[0.2em]
&\quad\quad\quad \kw{if}\ (v\_err = \text{``''})\ \{ \\[-0.25em]
&\quad\quad\quad\quad r := y;\ \kw{return}; \\[-0.25em]
&\quad\quad\quad \}\ \kw{else}\ \{ \\[-0.25em]
&\quad\quad\quad\quad \kw{if}\ (\neg best\_verified)\ \{\ best := y;\ \} \\[-0.25em]
&\quad\quad\quad\quad prev\_ver\_err := v\_err; \\[-0.25em]
&\quad\quad\quad \} \\[-0.25em]
&\quad\quad \}\ \kw{else}\ \{ \\[-0.25em]
&\quad\quad\quad prev\_diff\_err := diff\_err; \\[-0.25em]
&\quad\quad \} \\[-0.25em]
&\quad\quad attempt := attempt + 1; \\[-0.25em]
&\quad \} \\[-0.25em]
&\quad r := best;\ \kw{return}; \\[-0.25em]
&\}
\end{aligned}
\]
\end{tcolorbox}
\caption{\small Verified agent program for Dafny annotation synthesis. On the first iteration, $\funcIt{initialFGGM}$ (Fig.~\ref{fig:initialFGGMDef}) generates an initial annotation. On subsequent iterations, the agent dispatches to $\funcIt{diffErrorFGGM}$ (Fig.~\ref{fig:diffErrorFGGMDef}) or $\funcIt{verifierErrorFGGM}$ (Fig.~\ref{fig:verifierErrorFGGMDef}) depending on the previous error type. Each FGGM guarantees $\funcIt{noDiff}(base\_program, y)$, maintaining the loop invariant. Upon exhausting all attempts, the agent returns $best$, which satisfies the postcondition by the loop invariant.}
\label{fig:exampleDafnyAgentNew}
\end{figure}

\section{FGGM Syntax}
\label{appen:FGGMsyntax}
\noindent The syntax of a first-order guarded generative model (FGGM) can be described using the following BNF grammar. We use the production rules $(\langle \text{ident} \rangle, \langle \text{type} \rangle, \langle \text{spec} \rangle, \langle \text{program} \rangle, \langle \text{string\_lit} \rangle, \langle \text{formula} \rangle)$ from Appendix~\ref{appen:seachSpace}.
\begin{align*}
\langle id \rangle       &::= \langle \text{ident} \rangle \\
\langle GMid \rangle     &::= \langle \text{ident} \rangle \\
\langle typeSig \rangle  &::= "(" \langle \text{typedVars} \rangle ")" "\to" \langle \text{type} \rangle \\[0.5ex]
\langle \text{typedVars} \rangle &::= \langle \text{typedVar} \rangle 
                                   \;\mid\; \langle \text{typedVar} \rangle , \langle \text{typedVars} \rangle \\[0.5ex]
\langle \text{typedVar} \rangle  &::= \langle \text{ident} \rangle : \langle \text{type} \rangle \\[1ex]
\langle \linspec{} \rangle  &::= \texttt{"requires"} \ \langle \text{formula} \rangle \\[0.5ex]
\langle \loutspec{} \rangle &::= \texttt{"ensures"} \ \langle \text{formula} \rangle \\[0.5ex]
\langle \prompt{} \rangle   &::= \langle \text{program} \rangle \\[0.5ex]
\langle \fallback{} \rangle &::= \langle \text{program} \rangle \\[0.5ex]
\langle \text{info} \rangle &::= \langle \text{string\_lit} \rangle
\end{align*}
\section{FGGM Validity Check}
\label{appen:FGGMvalidity}
Algorithm~\ref{alg:validateFGGM} checks whether an FGGM definition is valid. It verifies that the type signature and local contracts are syntactically well-formed and that all terms using library functions satisfy their input specifications. It then checks that the prompting program and fallback program are type-correct and terminating. Finally, the fallback program is verified using the deductive verifier to ensure it satisfies the local contract for all inputs. The FGGM definition is accepted only if no errors are found.
\label{append:FGGMValidity}
\begin{algorithm}[H]
\footnotesize
\caption{validateFGGM}
\label{alg:validateFGGM}
\begin{algorithmic}[1]
\State \textbf{Input:} FGGM definition $\fggm{}=(id,GMid,typeSig,\linspec{},\loutspec{},\prompt,\fallback)$
\State \textbf{Input:} Library functions $\fset{c}$, axioms $\axiom{}$
\State \textbf{Output:} $\{ \}$ if valid else set of errors $\msf{err}$

\State $\msf{err} \gets \{\}$

\If{$\neg\funcIt{WellTyped}(typeSig)$}
    \State $\msf{err} \gets \msf{err} \cup \{\text{Invalid type signature}\}$
\EndIf

\If{$\neg\funcIt{WellFormedFormula}(\linspec{})$}
    \State $\msf{err} \gets \msf{err} \cup \{\text{Invalid input contract}\}$
\EndIf

\If{$\neg\funcIt{WellFormedFormula}(\loutspec{})$}
    \State $\msf{err} \gets \msf{err} \cup \{\text{Invalid output contract}\}$
\EndIf

\If{$\neg\funcIt{CheckTerms}(\linspec{},\fset{c})$}
    \State $\msf{err} \gets \msf{err} \cup \{\text{Invalid library terms in }\linspec{}\}$
\EndIf

\If{$\neg\funcIt{CheckTerms}(\loutspec{},\fset{c})$}
    \State $\msf{err} \gets \msf{err} \cup \{\text{Invalid library terms in }\loutspec{}\}$
\EndIf

\If{$\neg\funcIt{TypeCheck}(\prompt)$ \textbf{or} $\neg\funcIt{Terminates}(\prompt)$}
    \State $\msf{err} \gets \msf{err} \cup \{\text{Invalid prompting program}\}$
\EndIf

\If{$\neg\funcIt{TypeCheck}(\fallback)$ \textbf{or} $\neg\funcIt{Terminates}(\fallback)$}
    \State $\msf{err} \gets \msf{err} \cup \{\text{Invalid fallback program}\}$
\EndIf

\If{$\neg\funcIt{Verify}(\forall x_1\dots x_n,y.\;\linspec{}(x_1\dots x_n) \Rightarrow \loutspec{}(x_1\dots x_n,\fallback(x_1\dots x_n,y)))$}
    \State $\msf{err} \gets \msf{err} \cup \{\text{Fallback violates contract}\}$
\EndIf

\State \Return $\msf{err}$
\end{algorithmic}
\end{algorithm}

Algorithm~\ref{alg:evalChecker} checks whether a concrete input–output tuple satisfies the FGGM output contract. It first substitutes the concrete values into the specification 
$\loutspec{}$. If the resulting formula is quantifier-free, the checker directly evaluates it by computing all terms using the library functions. If the specification contains quantifiers, the substituted formula is combined with the axioms and input specification and submitted to an SMT solver with a timeout. The procedure returns true only if the solver confirms satisfiability within the time bound.
\begin{algorithm}[H]
\footnotesize
\caption{$\eval$ Contract Checker}
\label{alg:evalChecker}
\begin{algorithmic}[1]
\State \textbf{Input:} Concrete values $(x_1,\dots,x_n,y)$
\State \textbf{Input:} Local contracts $(\linspec{},\loutspec{})$, axioms $\axiom{}$
\State \textbf{Output:} $T$ if contract satisfied else $F$

\State $\phi \gets \funcIt{Substitute}(\loutspec{},\{x_1,\dots,x_n,y\})$

\If{$\funcIt{QuantifierFree}(\phi)$}
    \If{$\phi = \neg \phi'$}
        \Return $\neg \evalR{\phi'}(x_1, \dots, x_n, y)$
    \ElsIf{$\phi = \phi' \vee \psi$} 
        \Return $\evalR{\phi'}(x_1, \dots, x_n, y) \vee \evalR{\psi}(x_1, \dots, x_n, y)$ \Comment{Recursively evaluate on subformulas}
    \ElsIf{$\phi = \phi' \wedge \psi$}
        \Return $\evalR{\phi'}(x_1, \dots, x_n, y) \wedge \evalR{\psi}(x_1, \dots, x_n, y)$
    \ElsIf{$isAtomic(\phi)$}
        \Return $\phi(x_1,\dots, x_n, y)$ \Comment{Compute the atomic predicate $\phi$ value on $(x_1, \dots, x_n, y)$ $T$ or $F$}
    \EndIf
\EndIf

\State $\psi \gets \axiomEncode{\axiom{}} \wedge \linspec{}(x_1,\dots,x_n) \wedge \phi$

\State $res \gets \funcIt{SMTSolve}(\psi,time)$

\If{$res = SAT$}
    \State \Return $T$
\Else
    \State \Return $F$
\EndIf

\end{algorithmic}
\end{algorithm}
\section{Background on GRPO}
\label{appen:GRPO}
\noindent\textbf{Group Relative Policy Optimization (GRPO): }
GRPO is a reinforcement learning–based fine-tuning method for generative models that optimizes a policy by comparing outputs \emph{relative to other samples from the same input}, rather than relying on an explicit value function. Let $\pi_{\theta}(y \mid p)$ denote the parametric policy for an FGGM $\funcIt{id_{\theta_i}}$, where $p \in \mathbb{P}$ is the input prompt and $y_l \sim \pi_{\theta}(\cdot \mid p)$ is a sampled output.
For each input $p$, GRPO samples a \emph{group} of $G$ candidate outputs $\{y^{(1)}, \dots, y^{(G)}\}$ from the current policy. Each sample is assigned a scalar reward $r^{(j)} = \reward(p, y^{(j)})$. Instead of using absolute rewards directly, GRPO computes \emph{relative advantages} within the group:
\begin{small}
\begin{align}
A^{(j)} = \frac{r^{(j)} - \mu_p}{\sigma_p}, \quad 
\mu_p = \tfrac{1}{G} \sum_{j=1}^G r^{(j)}, \quad 
\sigma_p = \sqrt{\tfrac{1}{G} \sum_{j=1}^G (r^{(j)} - \mu_p)^2 + \epsilon}
\end{align}
\end{small}
Here $\mu_p$ and $\sigma_p$ are the mean and standard deviation of rewards within the group. 
\begin{small}
\begin{align}
\max_{\theta} \;\; \mathbb{E}_{p \sim \mathbb{P}} \left[
\tfrac{1}{G} \sum_{j=1}^G 
\min\Big(
\rho^{(j)} A^{(j)}, \;
\text{clip}(\rho^{(j)}, 1-\epsilon, 1+\epsilon) A^{(j)}
\Big)
\right]
\end{align}
\end{small}
where $\rho^{(j)} = \frac{\pi_{\theta}(y^{(j)} \mid p)}{\pi_{\theta_{\text{old}}}(y^{(j)} \mid p)}$ is the importance ratio. This clipped objective, similar to PPO, ensures stable policy updates.
In our setting, each FGGM $\funcIt{id_{\theta_i}}$ defines a policy $\pi_{\theta_i}(y_l \mid p)$ over outputs $y_l$ for input $p$. The reward function defined in Eq.~\ref{eq:rewardFunc}:
\begin{small}
\begin{align}
\reward(p, y_l) = 1 - \funcIt{Sigmoid}\big(\loss{}(x, \_, y)\times \mathbb{I}(y = y_l) + \lambda \times (1 - \mathbb{I}(\eval(p, y_l)))\big)
\end{align}
\end{small}

    
    
    

\section{Hyperparameters}
\label{appen:hyperparams}

Table~\ref{tab:hyperparams} lists all hyperparameters used across the four evaluation tasks.
The top block reports the shared pipeline settings applied uniformly across tasks; the middle blocks report task-specific training configurations for GSM-Symbolic and Symbolic Regression; the bottom block reports the per-task verifier and solver timeouts.

\begin{table}[h]
\centering
\small
\caption{Hyperparameter settings for all evaluation tasks.}
\label{tab:hyperparams}
\begin{tabular}{@{}llr@{}}
\toprule
\textbf{Category} & \textbf{Hyperparameter} & \textbf{Value} \\
\midrule
\multirow{3}{*}{Shared pipeline}
  & Planner search budget ($\Delta$)        & 10 \\
  & FGGM rejection-sample budget ($K$)      & 5 \\
  & In-context examples                     & 3 (2 for $\tau^2$-bench) \\
\midrule
\multirow{6}{*}{\shortstack[l]{GRPO / LoRA\\(GSM-Symbolic)}}
  & Base model                              & Qwen3-8B \\
  & LoRA rank                               & 16 \\
  & LoRA alpha ($\alpha$)                   & 32 \\
  & GRPO training epochs                    & 5 \\
  & GRPO batch size                         & 64 \\
  & GRPO learning rate                      & $1 \times 10^{-5}$ \\
  & GRPO generations per prompt ($G$)       & 8 \\
\midrule
\multirow{3}{*}{\shortstack[l]{Parameter tuning\\(Symbolic Regression)}}
  & Optimizer                               & Adam \\
  & Learning rate                           & 0.05 \\
  & Backpropagation steps                   & 40 \\
\midrule
\multirow{3}{*}{Verifier timeouts}
  & Dafny --- \texttt{dafny verify} timeout     & 240\,s per subprocess call \\
  & $\tau^2$-bench --- Z3 \texttt{solver.check()} timeout & 10\,s per call (up to 2 retries) \\
  & GSM-Symbolic --- Z3 \texttt{solver.check()} timeout & 5\,s per call \\
\bottomrule
\end{tabular}
\end{table}

\section{Planner Feedback}
\label{appen:plannerFeedback}

After each search--verify--learn iteration, \Tool constructs structured feedback $I'$ from the execution traces of the current best candidate agent on the training data $\data$ (Algorithm~\ref{alg:tool}, line 31).
The planner LLM receives the task-performance score of the previous program along with per-example error diagnostics. Specifically, for each training example on which the previous candidate failed, \Tool provides the planner with the example input, the agent's output, and the corresponding error message (e.g., a verification error, a grammar violation, or an incorrect answer). The planner is then asked to generate a one-to-two sentence description and a suggested fix for each failed example. The resulting list of per-example descriptions and suggestions is collected and fed back to the planner as part of the context $I'$ when it samples the next candidate program. This feedback mechanism enables the planner to identify recurring failure patterns and adjust its synthesized FGGM definitions and prompting programs $\prompt$ accordingly in subsequent iterations.
\section{Planner LLM Prompt}
\label{appen:plannerPrompt}

The following is the prompt template used by the planner LLM to synthesize candidate agent programs. Placeholders in braces (e.g., \texttt{\{task\_description\}}) are populated at each iteration with the task specification, postcondition, available operator library, agent signature, and feedback from the previous iteration's execution traces (Appendix~\ref{appen:plannerFeedback}).

\begin{figure}[H]
\begin{tcolorbox}[
colback=lightgraybg,
colframe=lightgraybg,
boxrule=0pt,
arc=2pt,
left=4pt,right=4pt,top=4pt,bottom=4pt
]
\scriptsize
\begin{verbatim}
You are an expert programmer tasked with writing a Dafny agent function.

## TASK
{task_description}

## CRITICAL POSTCONDITION
{postcondition_description}

## AVAILABLE OPERATORS
You can use these operators in your agent:

```dafny
{library_functions}
```

## AGENT SIGNATURE AND POSTCONDITION
```dafny
{agent_signature}
```

## CONSTRAINTS
- Fully Typed: All variables must have type annotations
- No imports: All operators are already available in scope
- No recursion: Don't call agent() recursively
- Always satisfy postcondition: Every return path must satisfy the postcondition
- No f-strings: Do not use format strings (f-strings); use the addition operator for concatenation
{task_specific_constraints}

## FEW-SHOT EXAMPLES
{few_shot_examples}

## PREVIOUS AGENT CODE
{previous_agent_code}

## PREVIOUS ATTEMPT FEEDBACK
{feedback}

## SCORING INFORMATION
{scoring_info}

## OUTPUT
Write ONLY the agent function inside a ```dafny``` block.

Make sure to:
- Check the postcondition before every return
- If the task fails, return a safe fallback value
- LLM calls are stateless, calling the LLM again does not maintain previous context
- Predefine variables before using them to avoid syntax errors
{task_specific_output_tips}
\end{verbatim}
\end{tcolorbox}
\caption{Planner LLM prompt template. The planner receives this prompt at each search--verify--learn iteration, with placeholders filled from the task specification and feedback from the previous iteration.}
\label{fig:plannerPrompt}
\end{figure}

\section{Proof Details}
\label{appen:proofDetails}
\checkerComplete*
\begin{proof}[Proof of Lemma~\ref{lem:checkerComplete}]
We prove the claim by structural induction on the quantifier-free formula $\loutspec{}$.
Fix arbitrary $(x_1,\dots,x_n,y)$. We show:
\[
\eval(x_1,\dots,x_n,y) \iff \loutspec{}(x_1,\dots,x_n,y).
\]

\paragraph{Base case.}
Suppose $\loutspec{}$ is an atomic predicate $P(x_1, \dots, x_n, y)$ over terms constructed from computable library functions in $\fset{c}$. Under the substitution $(x_1,\dots,x_n,y)$, $\evalR{P}$ evaluates each term by executing the corresponding functions in $\fset{c}$. Since these functions are computable $\evalR{P}$ exactly computes the atomic predicate $P(x_1, \dots, x_n, y)$ on $(x_1, \dots, x_n, y)$ which reduces to $T$ or $F$.

\paragraph{Inductive step.}
Assume the claim holds for formulas $\phi$ and $\psi$ i.e. $\evalR{\phi}(x_1, \dots, x_n, y) \iff \phi(x_1, \dots, x_n, y)$ and $\evalR{\psi}(x_1, \dots, x_n, y) \iff \psi(x_1, \dots, x_n, y)$ We show it holds for compound formulas:

\begin{itemize}[leftmargin=*]
    \item If $\loutspec{} = \neg \phi$, then by definition of $\eval$,
    \[
    \eval(x_1,\dots,x_n,y) = \neg \evalR{\phi}(x_1,\dots,x_n,y).
    \]
    By the inductive hypothesis,
    \begin{align*}
    \eval(x_1,\dots,x_n,y) \iff   \neg\big( \evalR{\phi}(x_1,\dots,x_n,y)\big)\iff\\ \neg \big(\phi(x_1,\dots,x_n,y)\big) \iff \loutspec{}(x_1,\dots,x_n,y)
    \end{align*}

    \item If $\loutspec{} = \phi \wedge \psi$, then
    \[
    \eval(x_1,\dots,x_n,y)
    = \evalR{\phi}(x_1,\dots,x_n,y) \wedge \evalR{\psi}(x_1,\dots,x_n,y),
    \]
    which, by the inductive hypothesis,
    \begin{align*}
    \eval(x_1,\dots,x_n,y) \iff \big(\evalR{\phi}(x_1,\dots,x_n,y) \wedge \evalR{\psi}(x_1,\dots,x_n,y)\big) \\ \iff \big(\phi(x_1,\dots,x_n,y) \wedge \psi(x_1,\dots,x_n,y)\big) \iff \loutspec{}(x_1,\dots,x_n,y)    
    \end{align*}

    \item If $\loutspec{} = \phi \vee \psi$, the argument is analogous tp $\loutspec{} = \phi \wedge \psi$.
\end{itemize}
\end{proof}
\begin{lemma}[Checker Soundness]
\label{lem:checkerSound}
$\axiomEncode{\axiom{}} \implies \big(\forall x_1 \in T_1, \dots, \forall x_n \in T_n, \forall y \in T_o.\;
\linspec{}(x_1, \dots, x_n) \implies \left(\eval(x_1, \dots, x_n, y) \implies \loutspec{}(x_1, \dots, x_n, y)\right)\big)$.
\end{lemma}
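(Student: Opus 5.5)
We split according to the two branches of Algorithm~\ref{alg:evalChecker}. Fix an interpretation satisfying $\axiomEncode{\axiom{}}$, and fix arbitrary $(x_1,\dots,x_n,y)$ with $\linspec{}(x_1,\dots,x_n)$ and $\eval(x_1,\dots,x_n,y)=T$. We must show $\loutspec{}(x_1,\dots,x_n,y)$. Let $\phi$ be the formula obtained by substituting these concrete values into $\loutspec{}$.

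\emph{Quantifier-free branch.} If $\phi$ is quantifier-free, the checker returns by recursive evaluation. Lemma~\ref{lem:checkerComplete} already gives $\eval(x_1,\dots,x_n,y) \iff \loutspec{}(x_1,\dots,x_n,y)$ under $\linspec{}$, and the forward direction is exactly the desired implication.

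There is one point to state explicitly. The real library functions in $\fset{c}$ are computable and, by assumption, satisfy their contracts in $\axiom{}$. So the standard interpretation of $\fset{c}$ is a model of $\axiomEncode{\axiom{}}$, and concrete evaluation agrees with truth in that interpretation. Consequently, the quantifier-free case holds without any appeal to the solver.

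\emph{Quantified branch.} Otherwise the checker builds $\psi = \axiomEncode{\axiom{}} \wedge \linspec{}(x_1,\dots,x_n) \wedge \phi$ and returns $T$ only if the timed-out SMT call reports success. This is the step I expect to be the main obstacle. As literally written, the pseudocode returns $T$ on $\mathrm{SAT}$, and satisfiability of $\psi$ does not entail $\phi$ in the intended model. I would therefore make precise what "the solver succeeds" means, in line with \secRef~\ref{subsec:FGGMcomputability}: the solver must establish \emph{validity} of $\phi$ relative to the axioms. Concretely, the query should be whether $\axiomEncode{\axiom{}} \wedge \linspec{}(x_1,\dots,x_n) \wedge \neg\phi$ is unsatisfiable, and the checker should return $T$ only on an $\mathrm{UNSAT}$ verdict obtained within the timeout.

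Under that reading the argument is short. An $\mathrm{UNSAT}$ answer from a sound solver means every interpretation satisfying $\axiomEncode{\axiom{}} \wedge \linspec{}$ satisfies $\phi$. Our fixed interpretation is such a model, so $\phi$ holds in it, that is, $\loutspec{}(x_1,\dots,x_n,y)$. Timeouts and $\mathrm{SAT}$/unknown answers yield $F$, which makes the implication vacuous.

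Combining the two branches and generalizing over the fixed values and the model gives the lemma. Its standing assumptions are the soundness of the SMT solver and the faithfulness of the axioms to the library functions, both of which the paper already adopts.
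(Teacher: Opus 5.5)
Your argument is correct for the checker you make precise. It has the same two-case structure as the paper's proof, and the quantifier-free case is identical: an appeal to Lemma~\ref{lem:checkerComplete}.

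In the quantified case you are more careful than the paper, and the defect you flag is real. The paper's proof asserts that $\eval$ returning $T$ means $(\axiomEncode{\axiom{}} \wedge \linspec{}(x_1,\dots,x_n)) \wedge \loutspec{}(x_1,\dots,x_n,y)$ ``evaluates to true,'' and then rearranges the implication propositionally. For Algorithm~\ref{alg:evalChecker} as written (return $T$ on $\mathrm{SAT}$), that first step does not follow, because a satisfying model may interpret the uninterpreted library symbols differently from the real functions. Concretely, take $\linspec{}$ trivially true and $\loutspec{}$ a vacuous quantifier conjoined with $\funcIt{dafnyVerifier}(y)$. The query is then satisfiable under the Dafny axioms for every $y$: interpret $\funcIt{dafnyVerifier}$ as constantly true and $\funcIt{dafnyVerifierWithErrorMsg}$ as constantly the empty string. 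So a SAT-based checker may accept a non-verifying output, and the lemma fails for the literal pseudocode. The paper's implication chain is valid only under your validity reading.

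Your repair has two parts. First, return $T$ only if $\axiomEncode{\axiom{}} \wedge \linspec{}(x_1,\dots,x_n) \wedge \neg\phi$ is refuted within the timeout. Second, observe that the intended interpretation models $\axiomEncode{\axiom{}} \wedge \linspec{}(x_1,\dots,x_n)$ and hence $\phi$. This turns the paper's one-line appeal into an actual argument, and it matches the paper's prose claim that $\eval$ returns true ``only if the solver succeeds,'' which ``guarantees soundness.'' It is also the more practical query: with quantified axioms, solvers report $\mathrm{UNSAT}$ reliably but often return ``unknown'' where $\mathrm{SAT}$ would be needed. Since you are proving the lemma for a modified checker, state the change explicitly as a correction to Algorithm~\ref{alg:evalChecker}, not as a reading of it.

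One inconsistency to clean up. You begin by fixing an arbitrary interpretation satisfying $\axiomEncode{\axiom{}}$. In the quantifier-free case, however, $\eval$ executes the real library functions, so it only tells you about the standard interpretation. In a nonstandard model of the axioms, $\eval$ returning $T$ need not make $\loutspec{}$ true there; the paper's use of Lemma~\ref{lem:checkerComplete} carries the same implicit restriction. Read the lemma relative to the standard interpretation, with $\axiomEncode{\axiom{}}$ as the hypothesis that this interpretation satisfies the axioms, and fix that interpretation from the start. Your remark that the standard interpretation models $\axiomEncode{\axiom{}}$ shows this is what you intend. The quantified case is unaffected, since an $\mathrm{UNSAT}$ verdict covers every model.
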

\begin{proof}
For quantifier-free $\loutspec{}$ soundness follows from Lemma~\ref{lem:checkerComplete}. For $\loutspec{}$ with qunatifier $\eval{}$ returns true if $(\axiomEncode{\axiom{}} \wedge \linspec{}(x_1, \dots, x_n)) \wedge \loutspec{}(x_1, \dots, x_n, y)$ is evalautes to true within timeout. Hence for any $(x_1, \dots, x_n, y)$
\begin{align}
    \eval{}(x_1, \dots, x_n, y) \implies (\axiomEncode{\axiom{}} \wedge \linspec{}(x_1, \dots, x_n)) \wedge \loutspec{}(x_1, \dots, x_n, y) \nonumber \\
    (\axiomEncode{\axiom{}} \wedge \eval{}(x_1, \dots, x_n, y)) \implies  \linspec{}(x_1, \dots, x_n) \wedge \loutspec{}(x_1, \dots, x_n, y) \nonumber \\
    (\axiomEncode{\axiom{}} \wedge \eval{}(x_1, \dots, x_n, y) \wedge \linspec{}(x_1, \dots, x_n)) \implies  \loutspec{}(x_1, \dots, x_n, y) \nonumber \\
    (\axiomEncode{\axiom{}} \wedge \linspec{}(x_1, \dots, x_n)) \implies (\eval{}(x_1, \dots, x_n, y))   \implies  \loutspec{}(x_1, \dots, x_n, y)) \nonumber \\
    \axiomEncode{\axiom{}} \implies \big(\linspec{}(x_1, \dots, x_n)) \implies ((\eval{}(x_1, \dots, x_n, y))   \implies  \loutspec{}(x_1, \dots, x_n, y))\big) \label{eq:soundCheckerAppend}
\end{align}
From Eq~\ref{eq:soundCheckerAppend} it follows , $\axiomEncode{\axiom{}} \implies \big(\forall x_1 \in T_1, \dots, \forall x_n \in T_n, \forall y \in T_o.\;
\linspec{}(x_1, \dots, x_n) \implies \left(\eval(x_1, \dots, x_n, y) \implies \loutspec{}(x_1, \dots, x_n, y)\right)\big)$.  
\end{proof}

\localContract*
\begin{proof}
Let $r = \funcIt{id_{\fggm{}}}(x_1, \dots, x_n)$. Based on the rejection sampler with the checker $\eval$ (Fig~\ref{fig:rejectionSampler}). $r$ is returned from the if branch $\kw{if}\ 
\funcIt{\eval}(x_1,\dots,x_n,y)\{\kw{return}\ y; \}$ or from the fallback $r = \fallback(x_1,\dots,x_n,y)$. Using the semantics of an if block, the following condition always holds for the output $r$ where $y_f$ is the final rejected sample.
\begin{small}
\begin{align}
&\axiomEncode{\axiom{}}\implies (\forall \theta \in \Theta. (\forall x_i\in T_i). \linspec{}(x_1, \dots,x_n) \implies(\eval(x_1,\dots,x_n, r) \vee (r = \fallback(x_1,\dots,x_n, y_{f}))) \label{appeneq:FGGMsound1}
\end{align}
\end{small}

 Eq.~\ref{appeneq:FGGMsound2} follows from the validity of the fallback $\fallback$, and Eq.~\ref{eq:FGGMsound3} follows from the soundness of the checker $\eval$ (lemma~\ref{lem:checkerSound}). To simplify the notation, we used $(\forall x_i\in T_i)$ to denote $(\forall x_1 \in T_1, \dots, \forall x_n \in T_n)$.
\begin{small}
\begin{align}
&\axiomEncode{\axiom{}}\implies\forall \theta \in \Theta. (\forall x_i\in T_i). \linspec{}(x_1, \dots,x_n) \wedge (r = \fallback(x_1,\dots,x_n, y_f)) \implies \loutspec{}(x_1,\dots, x_n, r) \label{appeneq:FGGMsound2}\\
&\axiomEncode{\axiom{}}\implies(\forall \theta \in \Theta. (\forall x_i\in T_i). (\linspec{}(x_1, \dots,x_n) \wedge \eval(x_1,\dots,x_n, r)) \implies \loutspec{}(x_1,\dots, x_n, r)) \label{appeneq:FGGMsound3}\\
&\axiomEncode{\axiom{}}\implies(\forall \theta \in \Theta. (\forall x_i\in T_i). \linspec{}(x_1, \dots,x_n) \implies \loutspec{}(x_1, \dots, x_n, r))\;\;\;\; \text{Using Eq.~{(\ref{appeneq:FGGMsound1}, \ref{appeneq:FGGMsound2}, \ref{appeneq:FGGMsound3}}}) \nonumber
\end{align}    
\end{small}
\end{proof}

\searchVerify*
\begin{proof}
Let, $\fggmSet{} = \set{\fggm{}_1, \dots, \fggm{}_k}$ and $\axiom{}_{\fggmSet{}}$ denote the set of first-order sentences defining all the local contracts as shown below.
\begin{align}
&\axiom{}_{\fggmSet{}} = \set{\varphi_{\fggm{}_1}, \dots, \varphi_{\fggm{}_k}} \;\text{where $\varphi_{\fggm{}_i}$ defined below} \nonumber \\
&\varphi_{\fggm{}_{i}}: \forall x_1\in T_1,\dots,x_n\in T_n. \fggm{}_{i}^{\linspec{}}(x_1, \dots, x_n) \implies \fggm{}_{i}^{\loutspec{}}(x_1, \dots, x_n, \fggm{}_{i}(x_1, \dots, x_n)) \nonumber\\
& (\axiomEncode{\axiom{}} \implies (\forall \theta_1 \in \Theta_{1}.\varphi_{\fggm{}_{1}})) \wedge \cdots \wedge (\axiomEncode{\axiom{}} \implies (\forall \theta_k \in \Theta_{k}.\varphi_{\fggm{}_{k}}))\ \;\;\text{from Theorem~\ref{thm:FGGMsoundness}} \nonumber \\
& \big(\axiomEncode{\axiom{}} \implies \wedge_{i=1}^{k}(\forall \theta_i \in \Theta_{i}.\varphi_{\fggm{}_{i}})\big)\label{eq:appenParamIndepen}
\end{align}
If $\program{} \neq \bot$ then $\ver{\inspec{}}{\outspec{}}$ with $\context{} = (\dg, \fset{}\cup\fggmSet{}, \axiom{} \cup \axiom{}_{\fggmSet{}})$ enusres 
\begin{align}
(\program{} \neq \bot) \implies ((\axiomEncode{\axiom{}} \wedge (\wedge_{i=1}^{k}\forall \theta_i \in \Theta_{i}.\varphi_{\fggm{}_{i}})\big) \implies \nonumber \\ \forall x_1\in T_1\dots\forall x_n \in T_n. \inspec{}(x_1, \dots, x_n) \implies \outspec{}(x_1,\dots, x_n, \programSub{}(x_1, \dots, x_n)) \label{eq:tu}\\
(\program{} \neq \bot) \implies ((\axiomEncode{\axiom{}}  \implies \nonumber\\ \forall \theta_1 \in \Theta_1,\dots, \forall \theta_k \in \Theta_k, \forall x_1\in T_1\dots\forall x_n \in T_n. \nonumber\\
\inspec{}(x_1, \dots, x_n) \implies \outspec{}(x_1,\dots, x_n, \programSub{}(x_1, \dots, x_n)) \;\text{Using Eq.~\ref{eq:appenParamIndepen} and Eq.~\ref{eq:tu}} \nonumber
\end{align}
\end{proof}

\soundnessThm*
\begin{proof}
$\opt{f} \gets \arg\min_{f \in \mathcal{P}}
\sum_{(x,y)\in\data}\loss{}(x,y,f(x))$. If $\opt{f} \neq \bot$, then by definition $(\mathcal{P} \neq \set{}) \wedge(\opt{f} \in \mathcal{P})$. By Theorem~\ref{thm:verificationParamSet}, every program $f \in \mathcal{P}$ that satisfies $\axiomEncode{\axiom{}}$ also satisfies
\begin{align*}
\forall x_1 \in T_1, \dots, \forall x_n \in T_n,\;
\inspec{}(x_1, \dots, x_n) \implies \outspec{}(x_1, \dots, x_n, f(x_1, \dots, x_n)).    
\end{align*}

\end{proof}

\sufficientCond*
\begin{proof}
Let $f_n^{\theta} : T_1 \times \cdots \times T_n \to T_o$ be any type-correct generative model. By assumption (iii), there exists $\fallback \in \fspace{G}{\fset{c}}$ such that
\[
\axiomEncode{\axiom{}} \implies \forall x_1,\dots,x_n.;
\inspec{}(x_1,\dots,x_n) \implies \outspec{}(x_1,\dots,x_n, \fallback(x_1,\dots,x_n)).
\]

\noindent\textbf{Construction.}
Define
\[
\fggm{} = (\funcIt{id}, f_n^{\theta}, \typs{}, \linspec{}, \loutspec{}, \prompt, \fallback, info),
\]
with $(\linspec{}, \loutspec{}) := (\inspec{}, \outspec{})$ and any well-typed $\prompt$. Since $\fallback$ satisfies the contract and $\outspec{}$ is quantifier-free, $\fggm{}$ is valid and $\eval$ is complete (Lemma~\ref{lem:checkerComplete}).

Define the program $f \in \fspace{G}{\fset{}}$ as:
\[
\kw{function}\ \funcIt{f}(x_1: T_1, \dots, x_n: T_n): T_o\ { \set{\kw{return}\ \funcIt{id}(x_1, \dots, x_n);}}.
\]

\noindent\textbf{Correctness.}
For any $(x_1,\dots,x_n)$ such that $\inspec{}(x_1,\dots,x_n)$ holds, by Theorem~\ref{thm:FGGMsoundness},
\[
\outspec{}(x_1,\dots,x_n, f(x_1,\dots,x_n)).
\]

\noindent\textbf{Output of $f$.}
Fix $(x_1,\dots,x_n)$ with $\inspec{}(x_1,\dots,x_n)$. By completeness,
\[
\eval(x_1,\dots,x_n, y) \iff \outspec{}(x_1,\dots,x_n, y).
\]
Hence,
\[
f(x_1,\dots,x_n) =
\begin{cases}
f_n^{\theta}(x_1,\dots,x_n) & \text{if } \outspec{}(x_1,\dots,x_n, f_n^{\theta}(x_1,\dots,x_n)), \\
\fallback(x_1,\dots,x_n) & \text{otherwise.}
\end{cases}
\]

\noindent\textbf{Loss comparison.}
For any $(x,\_) \in \data$:

\emph{Case 1:} $\outspec{}(x, f_n^{\theta}(x))$. Then $f(x) = f_n^{\theta}(x)$ and
\[
\loss{}(x,\_, f(x)) = \loss{}(x,\_, f_n^{\theta}(x)).
\]

\emph{Case 2:} $\neg \outspec{}(x, f_n^{\theta}(x))$. Then $f(x) = \fallback(x)$ and $\outspec{}(x, f(x))$ holds. By assumption (i),
\[
\loss{}(x,\_, f(x)) < \loss{}(x,\_, f_n^{\theta}(x)).
\]

Thus,
\[
\forall (x,\_) \in \data.\quad \loss{}(x,\_, f(x)) \leq \loss{}(x,\_, f_n^{\theta}(x)),
\]
which implies
\[
\loss{}(f) \leq \loss{}(f_n^{\theta}).
\]

\noindent\textbf{Strict improvement.}
If $\exists (x,_) \in \data$ such that $\neg \outspec{}(x, f_n^{\theta}(x))$, then Case 2 holds for at least one point, yielding
\[
\loss{}(f) < \loss{}(f_n^{\theta}).
\]

\noindent\textbf{Conclusion.}
The constructed program $f$ satisfies $(\inspec{}, \outspec{})$ and achieves no greater loss than $f_n^{\theta}$, with strict improvement when violations occur. Note the program verification setup satisfies this sufficient condition. 
\end{proof}

\end{document}